\documentclass[10pt]{article} 
\usepackage[accepted]{tmlr}

\usepackage{amsmath,amsfonts,bm}

\def\eqref#1{equation~\ref{#1}}

\def\1{\bm{1}}

\DeclareMathAlphabet{\mathsfit}{\encodingdefault}{\sfdefault}{m}{sl}
\SetMathAlphabet{\mathsfit}{bold}{\encodingdefault}{\sfdefault}{bx}{n}

\usepackage{amsthm}
\usepackage{amssymb}
\usepackage[colorlinks = true,
            linkcolor = blue,
            urlcolor  = blue,
            citecolor = blue,
            anchorcolor = blue]{hyperref}
\hypersetup{citecolor=[rgb]{0,0.2,0.7}}
\hypersetup{linkcolor=[rgb]{0,0.2,0.7}}
\hypersetup{urlcolor=[rgb]{0,0.2,0.7}}
\usepackage{url}
\usepackage{algorithm}
\usepackage{algpseudocode}

\newtheorem{lemma}{Lemma}
\usepackage{booktabs}
\usepackage{thmtools}
\usepackage{siunitx}
\usepackage{wrapfig}
\usepackage{subcaption}
\usepackage{graphicx}
\usepackage{comment}
\usepackage[frozencache,cachedir=minted-cache]{minted}
\usepackage{xcolor}
\usepackage[shortlabels]{enumitem}
\usepackage{multirow}
\usepackage{soul}
\usepackage{cleveref}
\usepackage{titletoc}
\usepackage[nottoc]{tocbibind}
\usepackage{minitoc}

\usepackage{threeparttable}
\newcommand{\ie}{\textit{i.e., }}
\newcommand{\eg}{\textit{e.g., }}

\newcommand{\pch}[1]{\textcolor{black}{#1}}

\usepackage[table]{xcolor}
\definecolor{bg}{rgb}{0.95,0.95,0.95}

\title{Semi-Supervised Cross-Domain Imitation Learning}

\author{\name Li-Min Chu \email  andy27564@gmail.com \\
      \addr Department of Computer Science, \\National Yang Ming Chiao Tung University,\\
      Hsinchu, Taiwan
      \AND
      \name Kai-Siang Ma \email  seanmama.cs10@nycu.edu.tw \\
      \addr Department of Computer Science, \\National Yang Ming Chiao Tung University,\\
      Hsinchu, Taiwan
      \AND
      \name Ming-Hong Chen \email mhchen1224.cs12@nycu.edu.tw\\
      \addr Department of Computer Science, \\National Yang Ming Chiao Tung University,\\
      Hsinchu, Taiwan
      \AND
      \name Ping-Chun Hsieh \email pinghsieh@nycu.edu.tw\\
      \addr Department of Computer Science, \\National Yang Ming Chiao Tung University,\\
      Hsinchu, Taiwan}

\def\month{02}  
\def\year{2026} 
\def\openreview{\url{https://openreview.net/forum?id=WARXnbJawZ}} 

\begin{document}

\maketitle

\begin{abstract}
Cross-domain imitation learning (CDIL) accelerates policy learning by transferring expert knowledge across domains, which is valuable in applications where \pch{the} collection of expert data is costly. Existing methods are either \emph{supervised}, relying on proxy tasks and explicit alignment, or \emph{unsupervised}, aligning distributions without paired data\pch{,} but often unstable. We introduce the Semi-Supervised CDIL (SS-CDIL) setting and propose the first algorithm for SS-CDIL with theoretical justification. Our method uses only offline data, including a small number of target expert demonstrations and some unlabeled imperfect trajectories. To handle domain discrepancy, we propose a novel cross-domain loss function for learning inter-domain state-action mappings and design an adaptive weight function to balance the source and target knowledge. Experiments on MuJoCo and Robosuite show consistent gains over the baselines, demonstrating that our approach achieves stable and data-efficient policy learning with minimal supervision. Our code is available at~\url{https://github.com/NYCU-RL-Bandits-Lab/CDIL}.

\end{abstract}

\section{Introduction}
\label{sec: Introduction}
In contrast to reinforcement learning (RL), which requires extensive environment interactions and carefully designed reward functions, imitation learning (IL) offers a compelling alternative by enabling efficient behavior acquisition through direct imitation of expert demonstrations, thus avoiding the need for complex reward engineering or exhaustive exploration. IL has demonstrated remarkable success in areas such as robot control~\citep{mandlekar2022matters}, autonomous driving~\citep{le2022survey}, and visual navigation~\citep{nguyen2019vision}, where access to expert data facilitates the development of high-performance control policies.
Despite these advantages, conventional IL methods commonly rely on the assumption that the expert data and the target environment share the same domain characteristics, a condition that is rarely satisfied in practice. In real-world scenarios, collecting expert demonstrations in target domains such as physical robotics or self-driving cars is often prohibitively expensive or hazardous. \pch{In contrast}, simulated environments, \pch{although} capable of providing large-scale data, typically differ in their underlying dynamics and state-action representations.

Cross-domain imitation learning (CDIL) addresses this challenge by transferring expert knowledge from a source domain to a target domain with distinct dynamics or state-action spaces, thereby improving efficiency and reducing reliance on target-domain data. Existing CDIL approaches can be broadly categorized based on the level of supervision they require:  (1) \emph{Supervised} methods typically rely on paired or unpaired demonstrations in various proxy tasks to explicitly establish correspondences between domains. For example, \citep{sermanet2018time, liu2019state} leverage paired visual trajectories for time-contrastive or state-translation learning, while \citep{raychaudhuri2021cross} align demonstrations via state-distribution matching. (2) In contrast, \emph{unsupervised} methods avoid paired data by aligning distributions or learning invariant representations, such as adversarial domain adaptation \citep{kim2020domain, zolna2021task} and optimal transport-based alignment \citep{fickinger2022crossdomain}. These methods often assume an isomorphism between the source and target domains; however, such an isomorphism can generally be satisfied in multiple ways (possibly infinitely many), resulting in ambiguity in alignment and potential instability.  
\pch{In general}, supervised approaches are constrained by the cost of collecting paired trajectories, unsupervised approaches often suffer from unstable transfer, and domain-specific solutions lack generality. 

In this paper, we propose to study \textit{Semi-Supervised CDIL}, which achieves cross-domain transfer by leveraging source-domain (imperfect) demonstrations and requires only minimal target-domain supervision by combining a small number of target-domain expert demonstrations along with imperfect target-domain trajectories, without the need for additional proxy tasks. 
This semi-supervised CDIL setting offers a promising balance between supervision cost and transfer capability. However, this setting remains largely unexplored, motivating the need for a principled framework. 

To address this, we propose {AdaptDICE}, the first algorithmic framework for Semi-Supervised CDIL that extends the single-domain distribution correction estimation (DICE) to achieve knowledge transfer across domains with discrepancies in transition dynamics, state space, and action space. AdaptDICE leverages source-domain knowledge together with limited offline target-domain data, enabling stable policy learning without requiring paired trajectories or assumptions on isomorphism. 
The core of our approach consists of three components:
\vspace{-3.5mm}
\begin{itemize}[leftmargin=*]
    \item 
    \textbf{Cross-domain mapping loss}: AdaptDICE introduces a novel mapping loss that learns cross-domain mapping functions by minimizing the Bellman error within the mapped source-domain space, thereby bridging domain gaps while preserving source-domain knowledge. The  learned cross-domain mappings enable knowledge transfer of source-domain density ratios to the target domain.
    \vspace{-1mm}
    \item 
    \textbf{Hybrid density ratio for cross-domain policy extraction:} AdaptDICE derives the target-domain policy by combining density ratios of both domains via the cross-domain mappings. The resulting hybrid density ratio serves as the mechanism that facilitates transfer across domains in imitation learning.
    \vspace{-1mm}
    \item \textbf{Adaptive weighting}: The hybrid density ratio is further modulated by an adaptive weighting factor, $\beta(t)$, defined as a smooth function of the relative estimation errors between the source and target density ratios. This adaptive mechanism dynamically balances the contributions of both domains, \pch{enabling stable adaptation without manual hyperparameter tuning.}
\end{itemize}
\vspace{-3.5mm}
Through theoretical analysis, we establish the convergence guarantee of AdaptDICE for the underlying density-ratio estimation, achieving a polynomially decaying error bound, with the expected error adaptively bounded by the better of the two domains.
Through extensive experiments, we show that AdaptDICE achieves consistently better performance than various baseline methods in challenging offline cross-domain scenarios (with as few as only 1 labeled target-domain expert trajectory) across multiple standard benchmark tasks in MuJoCo and RoboSuite. We also provide an ablation study to demonstrate the benefits of hybrid density ratios for effective cross-domain transfer. These results highlight the effectiveness and wide applicability of AdaptDICE under minimal supervision.

\section{Preliminaries}
\label{sec: Preliminaries}

\begin{figure*}[!t]
    \centering
    \begin{subfigure}[b]{0.34\textwidth}
        \includegraphics[width=\textwidth]{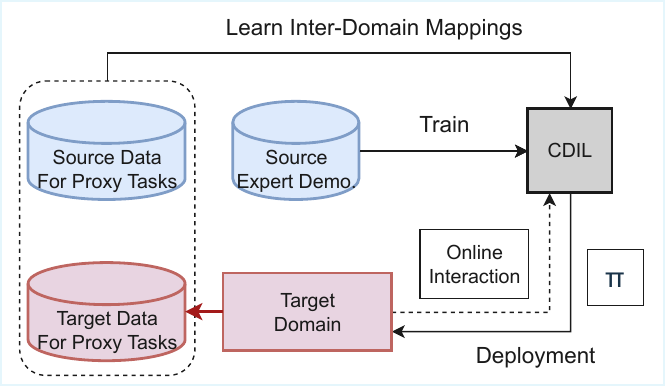}
        \caption{CDIL with proxy tasks.}
        \label{fig:supervised}
    \end{subfigure}
    \hspace{3pt}
    \begin{subfigure}[b]{0.27\textwidth}
        \includegraphics[width=\textwidth]{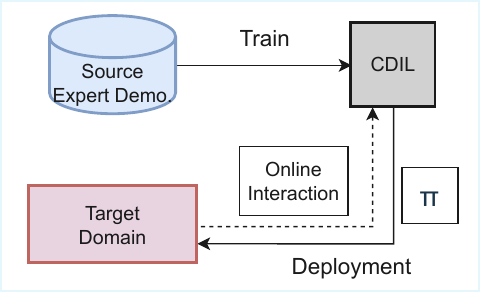}
        \caption{Unsupervised CDIL.}
        \label{fig:unsupervised}
    \end{subfigure}
    \hspace{3pt}
    \begin{subfigure}[b]{0.34\textwidth}
        \includegraphics[width=\textwidth]{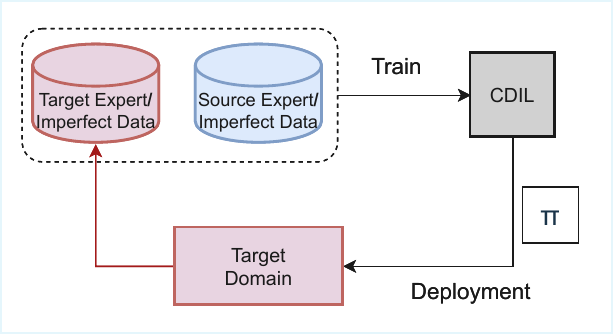}
        \caption{Semi-supervised CDIL.}
        \label{fig:semi-supervised}
    \end{subfigure}
    \caption{An illustration of CDIL formulations: (a) CDIL with proxy tasks utilizes annotated target demonstrations through paired or unpaired proxy tasks, trading off accuracy and data efficiency. (b) Unsupervised CDIL relies only on source experts and unlabeled target data and typically requires assumptions on domain similarity (\eg isomorphism) and can suffer from ineffective transfer. (c) Semi-supervised CDIL combines limited labeled target data with unlabeled trajectories, balancing supervision cost and transferability.}
    \label{fig:cdil_workflows}
\end{figure*}
\subsection{Cross-Domain Imitation Learning}
\label{sec: Preliminaries:CDIL}
{In this section, we introduce the fundamental concepts and notation for the general CDIL, building on the standard Markov Decision Process (MDP) framework. For a set \( \mathcal{X} \), let \( \Delta(\mathcal{X}) \) denote the set of all probability distributions over \(\mathcal{X} \). We model each environment as an infinite-horizon discounted MDP, defined as \( \mathcal{M} := (\mathcal{S}, \mathcal{A}, P, \mu, \gamma) \), where \( \mathcal{S} \) is the state space, \( \mathcal{A} \) is the action space, \( P : \mathcal{S} \times \mathcal{A} \rightarrow \Delta(\mathcal{S}) \) represents the transition function, giving the probability \( P(s_{t+1} | s_t, a_t) \) of a transition from state \( s_t \) to \( s_{t+1} \) by taking action \( a_t \) at time \( t \), 
\( \mu \in \Delta(\mathcal{S}) \) is the initial state distribution, and \( \gamma \in [0, 1) \) is the discount factor. A policy \( \pi : \mathcal{S} \rightarrow \Delta(\mathcal{A}) \) maps states to distributions over actions. 
For a given policy \( \pi \), the occupancy measure is defined as \( d^\pi(s, a) := (1 - \gamma)\mathbb{E}_{s_0\sim \mu, P,\pi}[\sum_{t=0}^\infty \gamma^t \mathbb{P}(s_t = s, a_t = a \rvert s_0) \)].}\footnote{In the RL literature, the occupancy measure $d^\pi$ is also called the discounted state-action visitation distribution.}

In the canonical CDIL setting, the learning involves two domains, referred to as \emph{source domain} (src) and \emph{target domain} (tar), defined respectively as $\mathcal{M}_{\text{src}} := (\mathcal{S}_{\text{src}}, \mathcal{A}_{\text{src}},P_{\text{src}}, \mu_{\text{src}}, \gamma)$ and 
$\mathcal{M}_{\text{tar}} := (\mathcal{S}_{\text{tar}}, \mathcal{A}_{\text{tar}}, P_{\text{tar}}, \mu_{\text{tar}}, \gamma)$.
Notably, the source and target domains can differ significantly in transition dynamics, state space, and action space.
We assume that both MDPs share the same discount factor $\gamma$.
In CDIL, the information available to the imitator can be summarized as follows:

\vspace{-2mm}
\begin{itemize}[leftmargin=*]
    \item \textbf{Target domain}: The imitator operates in $\mathcal{M}_{\text{tar}}$ and aims to mimic a target-domain expert policy $\pi_{\text{E}}: \mathcal{S}_{\text{tar}}\rightarrow \Delta(\mathcal{A}_{\text{tar}})$ by learning from either online interactions in the target domain or a pre-collected target-domain dataset that can consist of either expert and non-expert demonstrations. In CDIL, both the online interaction budget and the size of the target-domain dataset are presumed to be small.
    \vspace{-1mm}
    \item \textbf{Source domain}: In CDIL, due to the limited target-domain data available, the imitator also leverages auxiliary information from the source domain (\eg via a pre-collected source-domain dataset $\mathcal{D}_{\text{src}}$).
\end{itemize}
The goal of the imitator is to learn a policy \(\pi_{\text{tar}}^* : \mathcal{S}_{\text{tar}} \rightarrow \Delta(\mathcal{A}_{\text{tar}})\)
that can effectively mimic the expert policy based on the data from both domains.
In the CDIL literature, there exist two commonly adopted problem settings:
\vspace{-2mm}

\begin{itemize}[leftmargin=*]
    \item \textbf{CDIL with proxy tasks}: This supervised setting assumes the existence of \textit{proxy} or \textit{alignment tasks} with expert demonstrations in both source and target domains~\citep{kim2020domain,raychaudhuri2021cross} as a form of supervision. For instance, in robot control, primitive tasks like walking or jumping can serve as proxies for more complex target tasks. Cross-domain knowledge transfer is achieved via domain alignment, \ie by learning cross-domain state-action mappings from these auxiliary expert data, either paired or unpaired across domains (Figure~\ref{fig:cdil_workflows}(a)). This supervised setting enables more accurate alignment and stable transfer but requires substantial data collection or annotation.
    \vspace{-1mm}
    \item \textbf{Unsupervised CDIL}: This setting relies primarily on source-domain expert data and uses no target-domain demonstrations, hence considered ``unsupervised''~\citep{fickinger2022crossdomain,franzmeyer2022learn}. Learning cross-domain mappings still requires online interaction with the target environment (Figure~\ref{fig:cdil_workflows}(b)). While avoiding reliance on target-domain experts, unsupervised CDIL algorithms often assume strong domain isomorphism and can suffer from instability or misalignment under substantial domain discrepancies.
\end{itemize}

\subsection{Regularized Distribution Matching}
\label{sec: demodice}
In an offline setup, (single-domain) imitation learning can be formulated as a \textit{regularized distribution matching} problem, which minimizes the discrepancy in occupancy measure between the expert and the imitator under behavior regularization to overcome extrapolation error in offline learning.
Specifically, one notable offline IL formulation is introduced by DemoDICE~\citep{kim2022demodice} and substantiated by the following objective function:
\begin{align}
\begin{split}
\pi^* = \operatorname{argmax}_{\pi} -D_{\text{KL}}(d^{\pi} \parallel d^\text{E}) - \alpha D_{\text{KL}}(d^{\pi} \parallel d^\text{U}),\label{eq:DemoDICE obj1}
\end{split}
\end{align}
where $\alpha> 0$ is the regularization coefficient and $d^\text{E}$ and $d^\text{U}$ denote the underlying occupancy measures of the expert and imperfect datasets, respectively.
As the objective function in (\ref{eq:DemoDICE obj1}) is generally non-convex in $\pi$, one can use $d^{\pi}$ instead of $\pi$ as the decision variables and thereby reformulate (\ref{eq:DemoDICE obj1}) as a \textit{convex constrained optimization} problem that minimizes the KL divergence between the imitator’s and expert’s occupancy measures under a set of Bellman flow conservation constraints (\ie one equality constraint for each state). 
By resorting to the dual problem and strong duality and introducing a numerically-stable transformation, we can arrive at an equivalent unconstrained problem:
\begin{equation}
\label{eq: demodice}
\begin{split}
\min_{\nu}\ L(\nu; r) := (1 - \gamma) \, \mathbb{E}_{s \sim \mu}[\nu(s)] + (1 + \alpha) \log \mathbb{E}_{(s,a) \sim d^\text{U}} \bigg[ \exp\bigg(\frac{A_\nu(s,a)}{1+\alpha}\bigg) \bigg],
\end{split}
\end{equation}
where $\{\nu(s)\}_{s\in\mathcal{S}}$ are the Lagrange multipliers (one for each constraint), $r(s,a):= \log (d^\text{E}(s,a)/d^\text{U}(s,a))$ is the logarithmic density ratio, and $A_\nu(s,a) := r(s,a) + \gamma \, \mathbb{E}_{s' \sim P(\cdot|s,a)}[\nu(s')] - \nu(s)$. We let $\nu^*$ denote the optimizer of (\ref{eq: demodice}).
Notably, one can interpret $r(s,a)$ as the \textit{pseudo reward} and $\nu(s)$ as the \textit{pseudo value function} in offline IL, and hence $A_\nu(s,a)$ can be viewed as the \textit{advantage function} under $r$ and $\nu$.

Once the dual problem in (\ref{eq: demodice}) is solved and the optimal dual variables $\nu^*$ are obtained, we can extract the optimal primal variables $d^{\pi^*}$ and the corresponding optimal policy $\pi^*$ as follows: (i) We first define a density ratio $w^*(s,a) := {d^*(s,a)}/{d^\text{U}(s,a)}$. The duality gives us that for each $(s,a)$,
\begin{equation}
w^*(s,a) = \exp\bigg(\frac{A_{\nu^*}(s,a)}{1+\alpha}\bigg).
\label{eq:w*}
\end{equation}
(ii) Based on the derived $w^*(s,a)$ in (\ref{eq:w*}), one can extract $\pi^*$ through weighted behavior cloning, which minimizes $\mathbb{E}_{(s,a) \sim d^\text{U}}\big[ w^*(s,a) \log \pi(a|s) \big]$ over $\pi$.

\section{Methodology}
\label{others}
In this section, we formally describe the proposed formulation of Semi-Supervised CDIL. 
Then, we present \textit{AdaptDICE} ({Adaptive Cross-Domain Imitation Learning with DIstribution Correction Estimation}), the first algorithmic framework for Semi-Supervised CDIL. We start by introducing a prototypic algorithm based on AdaptDICE designed for efficient knowledge transfer across domains and analyze the convergence of the proposed algorithm in the tabular case.
Subsequently, we propose an adaptive function that dynamically balances source- and target-domain knowledge to accommodate diverse domain disparities and then provide a practical implementation beyond the tabular case.

\subsection{Semi-Supervised CDIL}
As described in~\Cref{sec: Preliminaries:CDIL}, fully supervised CDIL enables accurate cross-domain alignment but requires substantial target-domain demonstrations, while unsupervised CDIL avoids this cost but often fails under large domain discrepancies. In practice, target domains usually provide few expert demonstrations, whereas imperfect or suboptimal data are easier to obtain.

To address this, we formalize Semi-Supervised CDIL, assuming access to a small labeled subset of expert demonstrations and the rest as unlabeled trajectories. Labeled data provides minimal supervision, while unlabeled data captures target dynamics (\Cref{fig:cdil_workflows}(c)), reducing annotation cost and avoiding instability. The information available to the imitator in Semi-Supervised CDIL can be summarized as follows:

\vspace{-2mm}
\begin{itemize}[leftmargin=*]
    \item \textbf{Target domain}: In this paper, we focus on the {offline} setting, where the imitator has access to a pre-collected expert dataset $\mathcal{D}^{\text{E}}_{\text{tar}}$ of $(s,a,s')$ tuples, which are generated by $\pi^{\text{E}}_{\text{tar}}$ and follow the occupancy measure $d^\text{E}_{\text{tar}}$, as well as an imperfect dataset $\mathcal{D}^\text{I}_{\text{tar}}$, which is generated by policies with unknown degrees of optimality. We define the union dataset as $\mathcal{D}^\text{U}_{\text{tar}} := \mathcal{D}^\text{E} _{\text{tar}}\cup \mathcal{D}^\text{I}_{\text{tar}}$ and let $d^\text{U}$ denote the corresponding mixture of occupancy measures. In CDIL, $\mathcal{D}^{\text{E}}_{\text{tar}}$ is  presumed to be scarce (\eg as few as only one trajectory), and hence cross-domain transfer is especially needed.

    \item \textbf{Source domain}: Due to the limited target-domain data, the imitator leverages an auxiliary source-domain dataset $\mathcal{D}_{\text{src}}$, which can consist of both expert and non-expert demonstrations in the source domain. With $\mathcal{D}_{\text{src}}$, one can also leverage an offline IL algorithm to pre-train policies or other relevant networks for cross-domain transfer. In particular, a source-domain pseudo Q-function $Q_{\text{src}}$
    can be obtained as a pre-trained critic, which serves as a transferable component and will be utilized later in our framework to guide cross-domain adaptation.
\end{itemize}


\begin{algorithm*}[!t]
  \caption{AdaptDICE}
  \label{alg:Framwork}
  \begin{algorithmic}[1]
    \Require Source-domain \(Q_{\text{src}}\) and \(w^{\text{src}}\) pretrained with DemoDICE, weight function \(\beta: \mathbb{N} \to [0,1]\), step size \(\eta\), target-domain offline dataset \(\mathcal{D}^\text{U}_{\text{tar}} = \{(s, a, s')\}\), expert dataset \(\mathcal{D}_{\text{tar}}^\text{E} = \{(s, a)\} \subseteq \mathcal{D}^\text{U}_{\text{tar}}\)
    \State Initialize discriminator network $c_{\text{tar}}: \mathcal{S}_{\text{tar}} \times \mathcal{A}_{\text{tar}} \to [0,1]$
    \State 
    \pch{
    Update the discriminator by (\ref{eq: discriminator loss}):
    }
    $c_{\text{tar}} \leftarrow \arg \min_{c_{\text{tar}}} J_{c_{\text{tar}}}(\mathcal{D}^\text{E}_{\text{tar}},\mathcal{D}^\text{U}_{\text{tar}})$
    \State Compute $r_{\text{tar}}(s,a) = -\log(1/c_{\text{tar}}(s,a) - 1)$ for $(s,a) \in \mathcal{D}_\text{tar}^\text{U}$ 
    \State Initialize \(G^{(0)}\), \(H^{(0)}\), \(\nu_{\text{tar}}^{(0)}\), \(\pi^{(0)}\)
    \State \textbf{for} iteration \(t=1,\dots,T\) \textbf{do}
    \State \quad Sample \(\mathcal{D}_{\text{tar}}\) of size \(N_{\text{tar}}\) from the offline dataset \(\mathcal{D}^\text{U}_{\text{tar}}\)
    \State \quad Compute \(\beta(t)\) 
    \pch{
    by (\ref{eq: adaptive weight})
    }
    \pch{
    \State \quad Update the mapping functions by (\ref{eq: cross-domain loss}): \(G^{(t)}, H^{(t)} \leftarrow \arg \min_{G, H} L_{\text{MAP}}(G, H; r_{\text{tar}}, Q_{\text{src}}, \pi^{(t-1)}, \mathcal{D}_{\text{tar}})\)
    \State \quad Update the pseudo value by (\ref{eq:dice_loss}): \(\nu_{\text{tar}}^{(t)} \leftarrow \nu_{\text{tar}}^{(t-1)} - \eta \nabla_{\nu_{\text{tar}}} L_{\text{DICE}}(\nu_{\text{tar}}^{(t-1)}; r_{\text{tar}}, \pi^{(t-1)}, \mathcal{D}_{\text{tar}})\)
    }
    \State \quad 
    \pch{
    Update the policy by (\ref{eq:bc_loss}):
    }
    \(\pi^{(t)} \leftarrow \arg \min_{\pi} L_{\text{BC}}(\pi; G^{(t)}, H^{(t)}, \beta(t), w_{\text{src}}, w^{(t)}_{\text{tar}}, \mathcal{D}_{\text{tar}})\)
    \State \Return Target-domain policy \(\pi^{(T)}_{\text{tar}}\)
  \end{algorithmic}
\end{algorithm*}

\subsection{Proposed Algorithm}
\label{sec: Algorithm}

In this subsection, we formally present AdaptDICE and describe its key components. The pseudo code is provided in \Cref{alg:Framwork}. The algorithm leverages a target-domain offline dataset \(\mathcal{D}^\text{U}_{\text{tar}} = \{(s, a, s')\}\) 
alongside pre-trained source-domain models, to learn an optimal policy for the target domain.

Before describing the key components of AdaptDICE, we first provide an overview of its learnable components.
The algorithm begins by training a discriminator \(c_{\text{tar}}\) (Lines 1-2) to induce a pseudo reward, representing a logarithmic density ratio \(r_{\text{tar}}\) (Line 3), which guides subsequent optimization. AdaptDICE then iteratively updates three key modules:
(1) a pseudo value function \(\nu_{\text{tar}}^{(t)}\) updated via DICE-based gradient descent; (2) mapping networks \(G^{(t)}\) and \(H^{(t)}\) that align source and target domains by minimizing a Bellman error; and (3) the target policy \(\pi^{(t)}_{\text{tar}}\) refined through weighted behavior cloning with adaptive weight \(\beta(t)\). These updates occur within the loop (Lines 5-11),
facilitating robust cross-domain knowledge transfer.
\pch{Notably, this transfer is achieved by querying source signals ($Q_{\text{src}}$ and $w_{\text{src}}$) on the mapped tuples $(G(s), H(s, a))$, effectively utilizing them as informative priors. Meanwhile, the target density ratio $w_{\text{tar}}$ and the mappings $(G, H)$ are learned directly from the target-domain data to avoid degenerate mappings.}

\pch{
We now describe the three components updated inside the training loop (Lines 8-10) in the following order: the cross-domain mapping loss, the DICE loss, and the cross-domain policy extraction. The pseudo-reward computation, which is performed once before the loop, is detailed at the end.
}

\paragraph{Cross-Domain Mapping Loss.}
The mapping loss \(L_{\text{MAP}}\) (Line 
\pch{8}) learns cross-domain mapping functions $G: \mathcal{S}_{\text{tar}} \to \mathcal{S}_{\text{src}}$ and $H: \mathcal{S}_{\text{tar}} \times \mathcal{A}_{\text{tar}} \to \mathcal{A}_{\text{src}}$, which align source- and target-domain transitions such that target-domain state-action pairs can be evaluated under the source-domain critic. We learn $G$ and $H$ by minimizing the following loss, which enforces Bellman consistency in the mapped source-domain space:

\begin{align}
\label{eq: cross-domain loss}
\begin{split}
L_{\text{MAP}}(G, H; r_{\text{tar}}, Q_{\text{src}}, \pi, \mathcal{D}_{\text{tar}}) &:= \\
\mathbb{E}_{(s,a,s') \sim \mathcal{D}_{\text{tar}}} \big| r_{\text{tar}}(s,a) +&\gamma \mathbb{E}_{a' \sim \pi(s')} \left[Q_{\text{src}}(G(s'), H(s',a'))\right] - Q_{\text{src}}(G(s), H(s,a)) \big|,
\end{split}
\end{align}
where $r_{\text{tar}}$ is the target-domain pseudo reward (\pch{computed once before the training loop; detailed at the end of this subsection}),
and $Q_{\text{src}}:=r_{\text{src}}(s,a) + \gamma \, \mathbb{E}_{s' \sim P_{\text{src}}(\cdot|s,a)}[\nu_{\text{src}}(s')]$ is source-domain pre-trained optimal pseudo Q-function. 
By minimizing this loss, we obtain mappings $G^*$ and $H^*$ that effectively bridge the domain gap, ensuring that the source-domain knowledge can be reliably adapted to the target domain without direct modification to $Q_{\text{src}}$.
In the tabular setting, \(L_{\text{MAP}}\) yields optimal mappings by solving the finite state-action space, minimizing Bellman error over \(\mathcal{D}_{\text{tar}}\).
\paragraph{DICE Loss.}
In AdaptDICE, the loss \(L_{\text{DICE}}\) (Line 
\pch{9}) is employed to learn the pseudo value function for the density ratio $w^*$ based on the target-domain data. Specifically, it updates \(\nu_{\text{tar}}\) through gradient-based optimization on the following loss on offline samples  \(\mathcal{D}_{\text{tar}}\), \ie
\begin{align}
\begin{split}
L_{\text{DICE}}(\nu_{\text{tar}};r_{\text{tar}},\pi, \mathcal{D}_{\text{tar}}) := (1 - \gamma) \mathbb{E}_{s \sim\mu_{\text{tar}}} [\nu_{\text{tar}}(s)] + (1 + \alpha)\cdot\log \mathbb{E}_{(s,a) \sim \mathcal{D}_{\text{tar}}} \left[ \exp\left( \frac{A_{\nu_{\text{tar}}}(s,a)}{1 + \alpha} \right) \right],
\label{eq:dice_loss}
\end{split}
\end{align}
where \(\mu_{\text{tar}}\) is the initial state distribution induced from the dataset \(\mathcal{D}^\text{U}_{\text{tar}}\).
\paragraph{Cross-Domain Policy Extraction.}
By integrating the mapping function mentioned before, 
we define \pch{the} cross-domain density ratio $w_{\text{cross}}$ that is combined with the pre-trained source-domain density ratio \(w_{\text{src}}\) and the learned target-domain density ratio \(w_{\text{tar}}\):
\begin{align} \label{eq: combined weight}
\begin{split}
w_{\text{cross}}(s,a) := \beta\, w_{\text{src}}(G(s), H(s,a)) + \left(1 - \beta\right) w_{\text{tar}}(s,a),
\end{split}
\end{align}
where \(\beta \in [0,1]\) is a parameter that balances the contribution of the source and target domains. 
\pch{
The density ratios for the source and target domains \pch{are} defined as
\begin{align}
w_{\text{src}}(G(s),H(s,a)) &:= \exp\left(\frac{A_{\nu_{\text{src}}}\left(G(s),H(s,a)\right)}{1 + \alpha} \right),\\
w_{\text{tar}}(s,a) &:= \exp\left(\frac{A_{\nu_{\text{tar}}}(s,a)}{1 + \alpha}\right),
\end{align}
where \(\nu_{\text{src}}\) denotes the pre-trained source-domain pseudo value function and \(\nu_{\text{tar}}\) is the learned target-domain pseudo value function.
}
This cross-domain density ratio is used in the behavioral cloning loss \(L_{\text{BC}}\) (Line 10):
\begin{align}
\begin{split}
L_{\text{BC}}(\pi; G, H, \beta, w_{\text{src}}, w_{\text{tar}}, \mathcal{D}_{\text{tar}}) := -{\mathbb{E}}_{(s,a) \sim \mathcal{D}_{\text{tar}}} \left[ w_{\text{cross}}(s,a) \cdot \log \pi(a|s) \right].
\end{split}
\label{eq:bc_loss}
\end{align}
is minimized to learn \pch{an} optimal policy in a tabular setting with a finite state-action space.
In \Cref{alg:Framwork}, we employ an adaptive weighting factor \(\beta(t)\) (Line 7) whose exact form is detailed in \Cref{sec: Design Choice of Weighting Factor}, \pch{enabling the weighted behavioral cloning loss during training without manual tuning of $\beta(t)$.}

\pch{
As mentioned earlier, before entering the training loop, we first train a discriminator to obtain the target-domain pseudo-reward as follows:
}
\paragraph{\pch{Pseudo-Reward Computation.}}

\pch{A discriminator network \(c_{\text{tar}}: \mathcal{S}_{\text{tar}} \times \mathcal{A}_{\text{tar}} \to [0,1]\) is trained to distinguish expert state-action pairs from \(\mathcal{D}^\text{E}_{\text{tar}}\) against the full target-domain dataset \(\mathcal{D}^\text{U}_{\text{tar}}\), minimizing the binary cross-entropy loss $J_{c_{\text{tar}}}$ (Line 2):}
\begin{align}
\label{eq: discriminator loss}
J_{c_{\text{tar}}}(\mathcal{D}^\text{E}_{\text{tar}},\mathcal{D}^\text{U}_{\text{tar}}) := -\left( \mathbb{E}_{(s,a) \sim \mathcal{D}^\text{E}_{\text{tar}}}[\log c_{\text{tar}}(s,a)] + \mathbb{E}_{(s,a) \sim \mathcal{D}^\text{U}_{\text{tar}}}[\log(1 - c_{\text{tar}}(s,a))] \right).
\end{align}
\pch{The target-domain pseudo-reward (Line 3) is constructed as
\begin{equation}
    r_{\text{tar}}(s,a) := -\log(1/c_{\text{tar}}(s,a) - 1),
\end{equation}
which approximates the true logarithmic density ratio \(\log(d^\text{E}_{\text{tar}}(s,a)/d^\text{U}_{\text{tar}}(s,a))\).
}

\subsection{Convergence Analysis}
\label{sec: General Theorems}

In this subsection, we theoretically establish the convergence of AdaptDICE, focusing on how the cross-domain density ratio $w^{(t)}_{\text{cross}}(s,a)$ approaches the optimal density ratio $w^*_{\text{tar}}(s,a)$. 
To measure this convergence, we define the density ratio error:
\begin{align}
|w^{(t)}_{\text{cross}}(s,a)-w^*_{\text{tar}}(s,a)|.
\end{align}
We then derive an upper bound on this error, highlighting the role of the weighting factor $\beta(t)$ in balancing source and target domain contributions to achieve efficient convergence.

Building on the formulation in \Cref{sec: Algorithm}, the cross-domain density ratio in AdaptDICE is given by:
\begin{align}
w^{(t)}_{\text{cross}}(s,a) := \beta(t) w_{\text{src}}(G^{(t)}(s), H^{(t)}(s,a)) + (1 - \beta(t)) w^{(t)}_{\text{tar}}(s,a),
\end{align}
where \( w_{\text{src}}(G^{(t)}(s), H^{(t)}(s,a)) := \exp\left( A_{\nu_{\text{src}}}(G^{(t)}(s), H^{(t)}(s,a))/(1 + \alpha) \right) \) is the pre-trained source-domain density ratio, \( w^{(t)}_{\text{tar}}(s,a) := \exp( A_{\nu^{(t)}_{\text{tar}}}(s,a)/(1 + \alpha) ) \) is the target-domain density ratio at iteration \( t \), and \( \beta(t): \mathbb{N} \to [0,1] \) is a weighting factor balancing the contributions of the source and target domains. The mapping functions \( G\) and \( H \) denote the learned state and action mappings that align target-domain samples with the source-domain representation.
To quantify the convergence of \( w^{(t)}_{\text{cross}}(s,a) \) to the optimal density ratio \( w^*_{\text{tar}}(s,a) \), we define the pointwise density ratio errors:
\begin{align}
\Delta w_{\text{src}}^{(t)}(s,a) := |w_{\text{src}}(G^{(t)}(s), H^{(t)}(s,a)) - w^*_{\text{tar}}(s,a)|, \quad \Delta w_{\text{tar}}^{(t)}(s,a) := |w^{(t)}_{\text{tar}}(s,a) - w^*_{\text{tar}}(s,a)|,
\end{align}
and their expected errors over the target-domain distribution \( \mathcal{D}_{\text{tar}} \):
\begin{align}
\pch{\Delta \bar{w}_{\text{src}}^{(t)} := \mathbb{E}_{(s,a) \sim \mathcal{D}_{\text{tar}}}|w_{\text{src}}(G^{(t)}(s), H^{(t)}(s,a)) - w^*_{\text{tar}}(s,a)|,\ \Delta \bar{w}_{\text{tar}}^{(t)} := \mathbb{E}_{(s,a) \sim \mathcal{D}_{\text{tar}}}|w^{(t)}_{\text{tar}}(s,a) - w^*_{\text{tar}}(s,a)|.}
\label{eq:expected w error}
\end{align}
For convergence analysis, we define the optimal solution set of $\nu_{\text{tar}}$ as \(S^*_{\text{tar}} := \{\nu^*_{\text{tar}} + C \cdot \mathbf{1}_{\mathcal{S}_{\text{tar}}} \,|\,C \in \mathbb{R}\}\) where \(\mathbf{1}_{\mathcal{S}_{\text{tar}}}\) is the all-ones vector over $\mathcal{S}_{\text{tar}}$, and the orthogonal projection \(\Pi_{S^*_{\text{tar}}}(\nu_{\text{tar}}) := \arg\min_{y \in S^*_{\text{tar}}} \|\nu_{\text{tar}} - y\|_2\).

We next provide the convergence guarantee of AdaptDICE, whose proof is deferred to Appendix~\ref{app: Convergence of AdaptDICE with generic decay function}.

\renewcommand{\thetheorem}{1}
\begin{restatable}{theorem}{BoundAdaptDICE}\textnormal{[Upper Bound of Cross-Domain Density Ratio Error Under AdaptDICE]}
\label{thm:AdaptDICE_bound}
Under AdaptDICE, with learning rate \(\eta \leq 1/L_f\), for each $(s,a)$,
the cross-domain density ratio error is bounded as follows:
    \begin{align}
    |w^{(t)}_{\text{cross}}(s,a) - w^*_{\text{tar}}(s,a)| \leq \beta(t) \Delta w_{\text{src}}^{(t)}(s,a)+ (1 - \beta(t)) \left[C_{w} w^*_{\text{tar}}(s,a) \exp\left(\frac{C_{w}}{\sqrt{t}}\|\nu^{(0)}_{\text{tar}} - \nu^*_{\text{tar}}\|_2\right) \frac{\|\nu^{(0)}_{\text{tar}} - \nu^*_{\text{tar}}\|_2}{\sqrt{t}}\right],
    \end{align}
where $L_f$ is the smoothness constant of $L_{\text{DICE}}$, $\nu^*_{\text{tar}}:=\Pi_{S^*_{\text{tar}}}(\nu^{(0)}_{\text{tar}})$,
and $C_{w}$ is a constant. Moreover, by selecting \(\beta(t)\) as 
$$\beta(t) = 
\begin{cases}
0, & \text{if}\ \Delta \bar{w}_{\text{tar}}^{(t)} \leq \Delta \bar{w}_{\text{src}}^{(t)}\\
1, & \text{otherwise}
\end{cases}$$
the expected cross-domain density ratio error satisfies
\begin{equation}
\mathbb{E}_{(s,a) \sim \mathcal{D}_{\text{tar}}}|w^{(t)}_{\text{cross}}(s,a) - w^*_{\text{tar}}(s,a)| \leq\min(\Delta \bar{w}_{\text{src}}^{(t)}, \Delta \bar{w}_{\text{tar}}^{(t)}).
\end{equation}
\end{restatable}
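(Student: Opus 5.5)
The proof has three steps: a pointwise triangle inequality, a convergence bound for the target term, and a case analysis for the choice of $\beta(t)$.

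\textbf{Step 1 (triangle inequality).} Since $\beta(t)\in[0,1]$, write $w^*_{\text{tar}} = \beta(t) w^*_{\text{tar}} + (1-\beta(t)) w^*_{\text{tar}}$. Then
\begin{equation*}
|w^{(t)}_{\text{cross}}(s,a)-w^*_{\text{tar}}(s,a)| \le \beta(t)\,\Delta w^{(t)}_{\text{src}}(s,a) + (1-\beta(t))\,\Delta w^{(t)}_{\text{tar}}(s,a).
\end{equation*}
This produces the first term of the bound directly. The remaining work is to bound $\Delta w^{(t)}_{\text{tar}}(s,a)$ by the bracketed expression.

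\textbf{Step 2 (bounding the target term).} I will use the convergence theory of gradient descent on $L_{\text{DICE}}$. For each fixed sample distribution this function is convex in $\nu_{\text{tar}}$: it is a linear term plus a log-sum-exp of functions affine in $\nu$. It is also $L_f$-smooth on the relevant region. For any representative $\nu^*_{\text{tar}}\in S^*_{\text{tar}}$, the advantage $A_{\nu^*_{\text{tar}}}$ is the same, so $w^*_{\text{tar}}$ is well defined.

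The plan has three parts.
\begin{enumerate}[(i)]
\item With $\eta\le 1/L_f$, the standard descent-lemma argument shows that $\|\nu^{(t)}-\Pi_{S^*}(\nu^{(t)})\|_2$ is nonincreasing and that $\nu^{(t)}$ converges at rate $O(\|\nu^{(0)}-\nu^*\|_2/\sqrt t)$. I would derive this from the $O(1/t)$ function-gap rate together with a local error-bound or quadratic-growth property of $L_{\text{DICE}}$ modulo constants, since the solution set is exactly the line $S^*_{\text{tar}}$. Quadratic growth turns the gap $f(\nu^{(t)})-f^*\le \|\nu^{(0)}-\nu^*\|^2/(2\eta t)$ into $\mathrm{dist}(\nu^{(t)},S^*)\le C\|\nu^{(0)}-\nu^*\|/\sqrt t$.
\item The advantage is linear in $\nu$ and invariant under constant shifts. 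Hence
\begin{equation*}
|A_{\nu^{(t)}}(s,a)-A_{\nu^*}(s,a)| \le (1+\gamma)\sqrt{|\mathcal S_{\text{tar}}|}\,\mathrm{dist}(\nu^{(t)},S^*) =: \delta_t .
\end{equation*}
\item Write $w^{(t)}_{\text{tar}} = w^*_{\text{tar}}\exp\big((A_{\nu^{(t)}}-A_{\nu^*})/(1+\alpha)\big)$ and apply $|e^x-1|\le |x|e^{|x|}$. This gives $\Delta w^{(t)}_{\text{tar}}\le w^*_{\text{tar}}\frac{\delta_t}{1+\alpha}e^{\delta_t/(1+\alpha)}$. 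Absorbing all constants into a single $C_w$ yields exactly the stated form.
\end{enumerate}

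\textbf{Step 3 (choice of $\beta(t)$).} Take expectations of the Step 1 inequality over $\mathcal D_{\text{tar}}$, with $\beta(t)$ constant in $(s,a)$:
\begin{equation*}
\mathbb E|w^{(t)}_{\text{cross}}-w^*_{\text{tar}}|\le \beta(t)\Delta\bar w_{\text{src}}^{(t)}+(1-\beta(t))\Delta\bar w^{(t)}_{\text{tar}}.
\end{equation*}
With the indicator choice of $\beta(t)$, the right side equals $\min(\Delta\bar w_{\text{src}}^{(t)},\Delta\bar w_{\text{tar}}^{(t)})$. Checking the two cases is immediate; in fact each case gives equality, since $w_{\text{cross}}$ reduces to one of the two ratios.

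\textbf{Main obstacle.} The hard step is Step 2(i). Plain smooth convex gradient descent only gives an $O(1/t)$ rate on the function gap, not iterate convergence to the solution set, and $L_{\text{DICE}}$ is not strongly convex because of the shift invariance. I would first try to establish strong convexity of $L_{\text{DICE}}$ restricted to the subspace orthogonal to $\mathbf 1_{\mathcal S_{\text{tar}}}$, on a bounded sublevel set. This should follow from positive weights $\exp(A/(1+\alpha))$ and the dataset covering all states, and it yields quadratic growth. If that fails, a Łojasiewicz or error-bound argument for log-sum-exp compositions with a linear map would give the same $\sqrt t$ rate. Smoothness also needs $\nu^{(t)}$ to stay bounded, which follows from the monotone decrease of $\mathrm{dist}(\nu^{(t)},S^*)$.
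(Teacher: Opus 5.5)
Your Steps 1 and 3 match the paper's proof, and Step 2 follows the same route: an $O(1/t)$ function-gap rate plus quadratic growth gives an $O(1/\sqrt t)$ iterate rate, followed by a mean-value bound on $\exp$. However, Step 2 rests on a false claim, namely that the advantage is invariant under constant shifts. Since $A_\nu(s,a)=r(s,a)+\gamma\,\mathbb{E}_{s'\sim P(\cdot|s,a)}[\nu(s')]-\nu(s)$, one has $A_{\nu+C\mathbf{1}}=A_\nu-(1-\gamma)C$. So the unnormalized ratio $\exp(A_\nu/(1+\alpha))$ gets multiplied by $e^{-(1-\gamma)C/(1+\alpha)}$. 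Only $L_{\text{DICE}}$ itself is shift-invariant, because its linear term $(1-\gamma)\mathbb{E}_{\mu_{\text{tar}}}[\nu]$ absorbs the shift. Thus $w^*_{\text{tar}}$ is \emph{not} the same for all representatives of $S^*_{\text{tar}}$; this is precisely why the statement fixes $\nu^*_{\text{tar}}:=\Pi_{S^*_{\text{tar}}}(\nu^{(0)}_{\text{tar}})$. Your bound $|A_{\nu^{(t)}}-A_{\nu^*}|\le(1+\gamma)\sqrt{|\mathcal S_{\text{tar}}|}\,\mathrm{dist}(\nu^{(t)},S^*_{\text{tar}})$ is therefore unjustified: for $\nu^{(t)}=\nu^*+C\mathbf{1}$ the right side is $0$ while the left side is $(1-\gamma)|C|$. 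Convergence of $\mathrm{dist}(\nu^{(t)},S^*_{\text{tar}})$ alone lets the iterates drift along $\mathbf{1}$. In that case $w^{(t)}_{\text{tar}}$ would approach a rescaled ratio rather than $w^*_{\text{tar}}$.

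The missing ingredient is the conservation law the paper proves as Lemma~1(c): $\nabla L_{\text{DICE}}(\nu)^\top\mathbf{1}_{\mathcal S_{\text{tar}}}=0$ for all $\nu$. You can get it by differentiating $L(\nu+C\mathbf{1})=L(\nu)$ in $C$. Alternatively, compute directly using $\mu^\top\mathbf{1}=1$ and the fact that each $\gamma P(\cdot|s,a)-\mathbf{e}_s$ has entries summing to $\gamma-1$. Gradient descent therefore preserves $\mathbf{1}^\top\nu^{(t)}$, so $\Pi_{S^*_{\text{tar}}}(\nu^{(t)})=\Pi_{S^*_{\text{tar}}}(\nu^{(0)})=\nu^*_{\text{tar}}$ for every $t$. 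The quadratic-growth bound then controls $\|\nu^{(t)}-\nu^*_{\text{tar}}\|_2$ for the \emph{fixed} $\nu^*_{\text{tar}}$ of the statement. With that, your (ii) and (iii) go through, and the factor $\sqrt{|\mathcal S_{\text{tar}}|}$ is unnecessary, since $\max_s|x(s)|\le\|x\|_2$.

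Two smaller remarks. First, the paper does not derive quadratic growth but assumes it, so your restricted-strong-convexity plan would be an addition. It too needs the conservation law, so that it works on the bounded slice $\{\mathbf{1}^\top\nu=\mathbf{1}^\top\nu^{(0)}\}$ of the sublevel set. Second, $L_{\text{DICE}}$ is globally $L_f$-smooth with $L_f=(1+\gamma)^2/(1+\alpha)$. Its Hessian is $\frac{1}{1+\alpha}$ times a covariance matrix of the vectors $\gamma P(\cdot|s,a)-\mathbf{e}_s$, each of Euclidean norm at most $1+\gamma$, so smoothness does not require the iterates to stay bounded.
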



\textbf{Remark.} 
The bound in \Cref{thm:AdaptDICE_bound} demonstrates the effectiveness of AdaptDICE in adaptively combining source- and target-domain density ratios. By choosing \( \beta(t) \) to favor the domain with the smaller expected error, AdaptDICE achieves an error bound no worse than the better of \( \Delta \bar{w}_{\text{src}}^{(t)} \) or \( \Delta \bar{w}_{\text{tar}}^{(t)} \). When the source-domain density ratio is more accurate (\ie \( \Delta \bar{w}_{\text{src}}^{(t)} < \Delta \bar{w}_{\text{tar}}^{(t)} \)), setting \( \beta(t) = 1 \) leverages the pre-trained source model to accelerate convergence compared to relying solely on the target-domain density ratio. This adaptability is particularly valuable in scenarios with limited target-domain data, as it enables efficient knowledge transfer while mitigating errors due to domain discrepancies.

\subsection{Design Choice of Weighting Factor \(\beta(t)\)}
\label{sec: Design Choice of Weighting Factor}

Recall that \Cref{thm:AdaptDICE_bound} establishes a discrete selection rule for the weighting factor $\beta(t)$ that yields an optimal expected error bound.
However, such a hard-switching mechanism may cause undesirable oscillations in practice when the source and target errors are comparable. 
To achieve smoother and more stable adaptation behavior, AdaptDICE employs a continuous weighting scheme that interpolates between the source and target density ratios.
Specifically, we define $\beta(t)$ as a function of the expected density ratio errors:
\begin{align}
\label{eq: adaptive weight}
\beta(t) = 
\frac{\frac{1}{\Delta \bar{w}_{\text{src}}^{(t)}}}
     {\frac{1}{\Delta \bar{w}_{\text{src}}^{(t)}} + \frac{1}{\Delta \bar{w}_{\text{tar}}^{(t)}}},
\end{align}
where \(\Delta \bar{w}_{\text{src}}^{(t)}, \Delta \bar{w}_{\text{src}}^{(t)}\) denote the expected estimation errors of the mapped source-domain and target-domain density ratios (\Cref{eq:expected w error}).
This formulation adaptively adjusts $\beta(t)$ according to the relative reliability of the two estimators: 
when the source ratio is more accurate (\(\Delta \bar{w}_{\text{src}}^{(t)} < \Delta \bar{w}_{\text{tar}}^{(t)}\)), \(\beta(t)>0.5\) and smoothly increases toward 1 as the source estimator becomes more reliable, emphasizing source-domain information; conversely, when the target estimator is more accurate, \(\beta(t)\) decreases toward 0, shifting the emphasis to the target domain.
Such a smooth transition avoids abrupt switches between domains and maintains stable weighting during training.

While this adaptive weighting does not alter the theoretical convergence rate established in \Cref{thm:AdaptDICE_bound},
It offers notable practical advantages. 
First, continuous interpolation mitigates oscillations in $\beta(t)$ when the source and target estimators exhibit similar errors, 
leading to smoother updates of $w^{(t)}_{\text{cross}}$ and more stable policy optimization. 
Moreover, it enables a gradual transition of reliance from the source to the target domain, 
preventing abrupt degradation and improving robustness when the target-domain data are limited or noisy. 
These properties collectively improve the empirical stability and reliability of AdaptDICE in diverse cross-domain adaptation scenarios.

\subsection{Practical Implementation}
\label{sec: Practical Implementation}

In this subsection, we present the practical aspects of AdaptDICE, including optimization of the mapping functions, the implementation of the adaptive function, and the gradient-based training of loss functions.

\textbf{Cross-Domain Mapping Optimization.} 
To align the source and target domains with potentially distinct state and action spaces, we optimize mapping functions \(G: \mathcal{S}_{\text{tar}} \to \mathcal{S}_{\text{src}}\) and \(H: \mathcal{S}_{\text{tar}}\times\mathcal{A}_{\text{tar}} \to \mathcal{A}_{\text{src}}\) using a cross-domain loss \(L_{\text{MAP}}\) in~\Cref{eq: cross-domain loss} that ensures Bellman consistency across the source and target domains.
\pch{To ensure that \(G\) and \(H\) map target-domain states and actions to a source-domain feasible region, we adopt normalizing flows, following~\citep{brahmanage2023flowpg} in the action-constrained RL literature~\citep{hung2025efficient}, to map the outputs of \(G\) and \(H\) into the source-domain feasible region.
Specifically, target-domain states and actions are first mapped by fully-connected layers into a latent dummy region (\eg a hypercube $[0,1]^N$), then transformed by a normalizing flow into the feasible source-domain region. The flow model is pre-trained entirely offline using source-domain feasible samples and kept fixed during target-domain learning, with only the preceding network layers updated.}

\textbf{Adaptive Function Implementation.} The adaptive weighting function \(\hat{\beta}(t): \mathbb{N} \to [0,1]\) prioritizes the domain whose density ratio is closer to the optimal target-domain ratio \(w^*_{\text{tar}}(s,a)\). It is computed as:
\begin{align}
\hat{\beta}(t) := \frac{\frac{1}{\Delta \hat{w}_{\text{src}}^{(t)}}}{\frac{1}{\Delta \hat{w}_{\text{src}}^{(t)}} + \frac{1}{\Delta \hat{w}_{\text{MA}}^{(t)}}},\label{eq: beta hat}
\end{align}
where the empirical error estimates are:
\begin{align}
\Delta \hat{w}_{\text{src}}^{(t)} &:= \mathbb{E}_{(s,a) \sim \mathcal{D}_{\text{tar}}} \left| w_{\text{src}}(G^{(t)}(s), H^{(t)}(s,a)) - \hat{w}^*_{\text{tar}}(s,a) \right|, \label{eq: delat w src}\\
\Delta \hat{w}_{\text{tar}}^{(t)} &:= \mathbb{E}_{(s,a) \sim \mathcal{D}_{\text{tar}}} \left| w^{(t-1)}_{\text{tar}}(s,a) - \hat{w}^*_{\text{tar}}(s,a) \right|.\label{eq: delat w tar}
\end{align}

Since the true \(w^*_{\text{tar}}(s,a)\) is unavailable during training, we use \(w^{(t)}_{\text{tar}}(s,a)\) as its practical approximation: \(\hat{w}^*_{\text{tar}}(s,a) \approx w^{(t)}_{\text{tar}}(s,a)\). To stabilize \(\Delta \hat{w}_{\text{tar}}^{(t)}\), which can exhibit high variance due to limited target-domain data, we apply a moving average as follows:
\begin{align}
\Delta \hat{w}_{\text{MA}}^{(t)} := \psi \Delta \hat{w}_{\text{MA}}^{(t-1)} + (1 - \psi) \Delta \hat{w}_{\text{tar}}^{(t)}, \label{eq: moving average of d2}
\end{align}
where \(\psi = 0.9\) is the smoothing factor. This ensures robust estimation of \(\hat{\beta}(t)\), enhancing stability in the cross-domain policy extraction.

\textbf{Loss Function Optimization.} While \(L_{\text{MAP}}\) and \(L_{\text{BC}}\) mentioned in \Cref{sec: Algorithm} yield solutions in the tabular setting, practical implementation constraints, such as scalability and computational efficiency, necessitate gradient-based updates using neural networks. In~\Cref{alg:Framwork}, we train \(G\), \(H\), and the policy \(\pi^{(t)}\) via stochastic gradient descent on \(L_{\text{MAP}}\) and \(L_{\text{BC}}\), respectively, leveraging neural network architectures to handle high-dimensional state-action spaces effectively.

\section{Experiments}
\label{sec: experiments}
In this section, we present the main experimental results from the CDIL experiments, designed to evaluate the algorithms in benchmark robot control tasks. Our goal is to assess how effectively different methods leverage offline datasets from both source and target domains in policy learning. Unless stated otherwise, we report the average and the standard deviation over 5 random seeds for all the experimental results.
\subsection{Setup}
\label{exp: Offline Imitation learning Experiments}

\textbf{Evaluation Domains.} We evaluate AdaptDICE on various continuous control tasks in MuJoCo and Robosuite to test the cross-domain transfer capability under varied dynamics, representations, and task complexities. Specifically:
\vspace{-3mm}

\begin{itemize}[leftmargin=*]
    \item \textbf{MuJoCo}: We adopt standard MuJoCo environments, including Hopper-v3, HalfCheetah-v3, and Ant-v3, as the source domains. Following the procedures described in~\citep{zhang2021learning}, we modify these environments to construct the corresponding target domains.
    \vspace{-2mm}
    \item \textbf{Robosuite}: We utilize Robosuite~\citep{zhu2020robosuite}, a widely adopted framework for robot arm manipulation, to evaluate our algorithm on BlockLifting, DoorOpening and TableWiping tasks. For each task, the Panda robot arm serves as the source domain, while the UR5e robot arm is designated as the target domain.
\end{itemize}
\vspace{-2mm}

The detailed specifications of the source-domain and target-domain morphologies are provided in~\Cref{app:Experiments Setting}. The state and action space dimensions of each domain are provided in~\Cref{tab:environment_dimensions} in~\Cref{app:Experiments Setting}.

\textbf{Baselines.} We compare AdaptDICE with recent CDIL algorithms, including SMODICE~\citep{ma2022versatile}, GWIL~\citep{fickinger2022crossdomain}, and the IL variant of IGDF~\citep{DBLP:conf/icml/WenBXYZ0024}. All the algorithms share the same source-domain and target-domain datasets for a fair comparison, with differences arising from the specific use of source-domain datasets, as detailed below:
\vspace{-0.1cm}
\begin{itemize}[leftmargin=*]
    \item \textbf{GWIL}~\citep{fickinger2022crossdomain}: GWIL is an unsupervised CDIL method that learns from source-domain expert demonstrations as well as the target-domain data collected online. To adapt GWIL to the offline setting, we modify the online replay buffer to be supplied by the offline dataset. Due to the original experimental setup in the GWIL paper, we use only one source-domain expert trajectories in our experiments, without using source-domain imperfect demonstrations.
    \vspace{-0.1cm}
    \item\textbf{SMODICE}~\citep{ma2022versatile}: 
    SMODICE was originally designed to tackle imitation from dynamics or morphologically mismatched experts. To evaluate its performance in the Semi-Supervised CDIL setting, SMODICE can utilize the source-domain expert trajectories and the full target-domain dataset to enhance learning in the target domain, without using the source-domain imperfect demonstrations. This ensures a fair comparison without fundamentally changing the algorithm of SMODICE. 
    \vspace{-0.1cm}
    \item \textbf{IGDF+IQLearn}~\citep{DBLP:conf/icml/WenBXYZ0024}: IGDF is a cross-domain representation learning algorithm that achieves data filtering via contrastive learning and can be integrated with any off-the-shelf RL method to address offline cross-domain RL. One can adapt IGDF to IL by using an off-the-shelf offline IL method, such as IQ-Learn~\citep{garg2021iq}. Specifically, the training can be done in two stages: (i) At the representation learning stage, IGDF learns to encode state-action representations that enable effective data filtering across domains. (ii) At the imitation stage, it leverages the learned encoders to select high-quality source trajectories for training, thereby improving the target-domain policy in CDIL.
\end{itemize}

\textbf{Dataset Configurations.}
The target-domain datasets are constructed by mixing a small number of expert trajectories with a larger number of sub-optimal ones, following the configurations summarized in~\Cref{tab: dataset_table}. Similarly, the source-domain datasets contain 400 expert trajectories and 2000 sub-optimal trajectories, where the sub-optimal set is composed of 400 expert and 1600 random rollouts. More detailed dataset specifications are provided in \Cref{app: Dataset Setting Of Each Experiments}.

\begin{table}[t]
    \centering
    \caption{Configurations of the target-domain dataset in various scenarios.}
    \begin{threeparttable}
    \scalebox{0.85}{
    \begin{tabular}{c c c c c l}
        \toprule
         \textbf{Environment} & \textbf{Dataset Set} & \textbf{Expert Traj.} & \textbf{Sub-optimal Traj.} & \textbf{Sub-optimal Composition} \\
        \midrule
         Hopper/TableWiping  & \multirow{2}{*}{\texttt{Default}} & 1 & 60 & 10 expert + 50 random \\
         Others \tnote{*} & & 1 & 101 & 1 expert + 100 random \\
        \midrule
        \multirow{2}{*}{\pch{Hopper/TableWiping}} &
          \pch{{\texttt{Expert Rich}}} & \pch{5} & \pch{60} & \pch{10 expert + 50 random}\\
         & \pch{{\texttt{Sub-Optimal Rich}}} & \pch{1} & \pch{300} & \pch{50 expert + 250 random} \\
         \midrule
        \multirow{2}{*}{Others\tnote{*}} &
          {\texttt{Expert Rich}} & 5 & 101 & 1 expert + 100 random \\
         & {\texttt{Sub-Optimal Rich}} & 1 & 505 & 5 expert + 500 random \\
        \bottomrule
    \end{tabular}
    }
    \begin{tablenotes}
    \small
    \item[*] Others includes Ant, HalfCheetah, BlockLifting, and DoorOpening.
    \end{tablenotes}
    \end{threeparttable}
    \label{tab: dataset_table}
\end{table}

\subsection{Evaluation Results}
In \Cref{tab:main_results}, we report the final performance values of AdaptDICE and baseline methods across various continuous control tasks, with the corresponding training curves provided in \Cref{fig:training_curves} (\Cref{app: Training Curves}).
AdaptDICE consistently outperforms the baseline CDIL algorithms in both MuJoCo and Robosuite, especially in high-dimensional robot arm manipulation tasks where learning from limited expert demonstrations is particularly challenging. 
We also observe that SMODICE and IGDF+IQ-Learn suffer on most of the tasks in this challenging low-data regime, and these results are consistent with those in their original papers, where hundreds of demonstrations are usually needed to achieve comparable return performance.
Moreover, we observe that GWIL can barely make any learning progress. We hypothesize that this results from the fact that GWIL can only learn up to an isometry between the policies across domains due to its unsupervised nature. Recent works~\citep{choi2023domain,huang2024dida} also report that GWIL can struggle in practical cross-domain settings, suggesting that its need for expert supervision and inability to leverage imperfect demonstrations can limit its applicability.


\begin{table*}[h]
\caption{Experimental results of AdaptDICE and other baseline methods in the \texttt{Default} setting.}
\centering
\scalebox{0.85}{
\begin{tabular}{l | c c c c}
\toprule
\textbf{Environment} & AdaptDICE (Ours) & SMODICE & GWIL & IGDF+IQLearn \\
\midrule
Hopper       & $\mathbf{2289.9\pm328.7}$ & $2046.6\pm53.3$ & $9.8\pm0.03$ & $286.6\pm78.6$\\
Ant          & $\mathbf{3672.6\pm450.8}$ & $86.3\pm30.6$ & $-2891.6\pm17.4$ & $-358.3\pm1106.6$\\
HalfCheetah  & $\mathbf{4171.1\pm2659.2}$ & $-15.1\pm42.5$ & $-800.7\pm228.8$ & $2179.5\pm2422.1$\\
BlockLifting & $\mathbf{38.5\pm13.9}$ & $0.8\pm0.6$ & $0.01\pm0.001$ & $7.3\pm5.4$\\
DoorOpening  & $\mathbf{139.5\pm30.8}$ & $9.6\pm14.6$ & $34.5\pm46.0$ & $70.0\pm52.8$\\
TableWiping  & $\mathbf{19.7\pm18.5}$ & $9.8\pm7.6$ & $0.03\pm0.6$ & $-15.5\pm34.9$\\
\bottomrule
\end{tabular}}
\label{tab:main_results}
\end{table*}

\subsection{Ablation Study}
\label{exp: ablation study}
To better understand the contributions of the source-domain and target-domain density ratios in AdaptDICE, we conduct an ablation study by using only the source-domain density ratio \(w_{\text{src}}\) or the target-domain density ratio \(w_{\text{tar}}\) during policy extraction. Specifically, we evaluate the following variants:
\vspace{-0.2cm}

\begin{itemize}
    \item AdaptDICE with $w_{\text{src}}$ only: The policy is trained exclusively with the pre-trained source-domain density ratio, ignoring any target-domain density information. This is equivalent to choosing $\beta(t)=1$, for all $t$.
    \vspace{-0.1cm}
    \item AdaptDICE with $w_{\text{tar}}$ only: The policy relies solely on the target-domain density ratio learned from the offline target-domain data, without leveraging source-domain knowledge. This is equivalent to choosing $\beta(t)=0$ for all $t$ and hence reduces to the single-domain DemoDICE~\citep{kim2022demodice}.
\end{itemize}

\begin{table}[ht]
\caption{Experimental results of two ablation variants of AdaptDICE in the \texttt{Default} dataset setting.}
\centering
\scalebox{0.85}{
\begin{tabular}{l | c c c}
\toprule
\textbf{Environment} & $w_{\text{src}}$ only & $w_{\text{tar}}$ only & AdaptDICE\\
\midrule
 \pch{Hopper}   &  \pch{$2039.2\pm366.6$} &  \pch{$\mathbf{2534.2\pm200.3}$}  &  \pch{$2289.9\pm328.7$}\\
Ant          & $1047.8\pm129.1$ & $\mathbf{3921.5\pm247.1}$ & ${3672.6\pm450.8}$\\
HalfCheetah  & $-2.0\pm0.6$ & $-2.3\pm1.6$ & $\mathbf{4171.1\pm2659.2}$\\
BlockLifting & $4.3\pm3.7$ & $11.9\pm13.3$ & $\mathbf{38.5\pm13.9}$\\
DoorOpening  & $60.1\pm16.5$ & $58.1\pm7.3$ & $\mathbf{139.5\pm30.8}$\\
 \pch{TableWiping}  &  \pch{$2.9\pm4.1$}   &  \pch{$13.9\pm13.01$}   &  \pch{$\mathbf{19.7\pm18.5}$}\\
\bottomrule
\end{tabular}}
\label{tab:table3}
\end{table}

In \Cref{tab:table3}, we show the final performance values of AdaptDICE and its two ablation variants (\ie with $w_{\text{src}}$ only and $w_{\text{tar}}$ only), with the corresponding training curves shown in \Cref{fig:ablation results} (\Cref{app: Training Curves}).
In most of the tasks (HalfCheetah, BlockLifting, and DoorOpening), the full AdaptDICE method with source-to-target transfer significantly outperforms the variants using only \(w_{\text{src}}\) or \(w_{\text{tar}}\), highlighting the importance of combining source-domain and target-domain density ratios. In the Ant environment, the \(w_{\text{tar}}\)-only variant of AdaptDICE performs well, while the 
\(w_{\text{src}}\)-only variant performs rather poorly, suggesting the transfer mechanism remains rather reliable even under substantial discrepancy between the two domains.
\subsection{Effect of Dataset Configurations}
\label{exp: Sensitivity Test}

To evaluate AdaptDICE's reliability in various dataset configurations, we conduct additional experiments on the number of target-domain trajectories. As Semi-Supervised CDIL uses both labeled expert demonstrations $\mathcal{D}_{\text{tar}}^\text{E}$ and unlabeled or sub-optimal trajectories $\mathcal{D}_{\text{tar}}^\text{I}$, we vary the two dataset sizes separately: (i) increasing $|\mathcal{D}_{\text{tar}}^\text{E}|$ with fixed $|\mathcal{D}_{\text{tar}}^\text{I}|$ (\texttt{Expert Rich}) to assess annotation cost, and (ii) increasing $|\mathcal{D}_{\text{tar}}^\text{I}|$ with fixed $|\mathcal{D}_{\text{tar}}^\text{E}|$ (\texttt{Sub-Optimal Rich}) to evaluate robustness to imperfect data. The configurations are summarized in~\Cref{tab: dataset_table}. This is meant to corroborate AdaptDICE's adaptability under the trade-off between supervision and sub-optimal trajectories in low-data regimes.

\pch{
\Cref{tab:sensitivity AdaptDICE}
} 
shows that AdaptDICE reach higher final performance in both \texttt{Expert Rich} and \texttt{Sub-Optimal Rich} regimes than in the \texttt{Default} setting. In \texttt{Expert Rich}, additional labeled demonstrations accelerate convergence, while in \texttt{Sub-Optimal Rich}, abundant imperfect data improves representation and reduces overfitting.
These results demonstrate AdaptDICE's flexibility in trading off costly expert annotations for cheaper sub-optimal data, often yielding superior gains with a larger dataset size. We also report the training curves in \Cref{fig: sensitivity test of AdaptDICE} (\Cref{app: Training Curves}).

\begin{table}[H]
\caption{Experimental results of AdaptDICE under various dataset configurations.}
\centering
\scalebox{0.8}{
\begin{tabular}{c c c c c c c}
\toprule
 & \multicolumn{3}{c}{\textbf{MuJoCo Environments}} & \multicolumn{3}{c}{\textbf{Robosuite Environments}} \\
\cmidrule(lr){2-4} \cmidrule(lr){5-7}
 & \pch{\textbf{Hopper}} & \textbf{Ant} & \textbf{HalfCheetah} & \textbf{BlockLifting} & \textbf{DoorOpening} & \pch{\textbf{TableWiping}}\\
\midrule
 Default           & \pch{$2289.9\pm328.7$} & $3672.6\pm450.8$ & $4171.1\pm2659.2$ & $38.5\pm13.9$ & $139.5\pm30.8$ & \pch{$19.7\pm18.4$}\\
 Expert Rich      & \pch{$\mathbf{2577.5\pm171.9}$} & $4494.2\pm760.1$ & $9296.0\pm1589.5$ & $65.1\pm17.3$ & $161.7\pm16.0$& \pch{$37.6\pm18.9$}\\
 Sub-Optimal Rich  & \pch{$2381.3\pm299.3$} & $\mathbf{5038.1\pm207.1}$ & $\mathbf{10105.9\pm1821.7}$ & $\mathbf{68.9\pm25.7}$ & $\mathbf{181.4\pm5.0}$& \pch{$\mathbf{44.1\pm20.0}$}\\
\bottomrule
\end{tabular}}
\label{tab:sensitivity AdaptDICE}
\end{table}
\section{Related Work}
\label{gen_inst}


\textbf{Single-Domain Imitation Learning with Imperfect Demonstrations.} A central challenge in IL is ensuring robustness against noisy or suboptimal demonstrations. Early approaches extended behavioral cloning by explicitly modeling demonstration quality \citep{wu2019imitation, sasaki2021behavioral, xu2022discriminator} or by re-weighting samples to place greater emphasis on higher-quality data \citep{pmlr-v119-tangkaratt20a, wang2021learning}. Other strategies employed adversarial training or regularization to mitigate the influence of low-quality data, such as through disagreement penalties \citep{brantley2019disagreement} or confidence-based filtering \citep{cao2022learning}. Additional methods leveraged distribution matching and state-occupancy measures to stabilize learning from fixed, imperfect datasets, thereby addressing issues such as covariate shift and suboptimal actions \citep{brown2019extrapolating, kim2022demodice, ma2022versatile, yu2023offline, mao2024diffusion}. While these methods effectively account for demonstration quality, they typically assume that training and deployment data originate from the same domain. This assumption limits their applicability in scenarios requiring cross-domain transfer, a gap that CDIL is designed to address.

\textbf{Cross-Domain Imitation Learning under State and Action Discrepancies.} There are two major sub-categories in this setting:
\vspace{-2mm}

\begin{itemize}[leftmargin=*]
    \item \textbf{CDIL with proxy tasks}: \textit{Paired-trajectory} approaches exploit explicit correspondences between source and target domains, typically through observation or state alignment. Representative strategies include context translation \citep{raychaudhuri2021cross}, time-contrastive networks \citep{sermanet2018time}, and state distribution matching \citep{liu2019state}. Extensions of these methods scale to multi-domain or long-horizon transfer, such as multi-domain behavioral cloning \citep{DBLP:conf/collas/WatahikiIUT24} and hierarchical skill alignment \citep{10610084}. Beyond visual or state-level alignment, several works incorporate skill-level decomposition to enhance generalization across tasks. 
    
    In contrast, \textit{unpaired-trajectory} methods do not rely on direct correspondences between source and target trajectories. Instead, they leverage adversarial objectives to facilitate transfer under limited supervision \citep{kim2020domain, zolna2021task, giammarino2025visually}. More recent efforts extend this line of work to handle unpaired data across variations in representation \citep{zhang2021learning} and dynamics \citep{kedia2025one}, thereby improving robustness in scenarios with heterogeneous domains.
    \vspace{-2mm}
    \item \textbf{Unsupervised CDIL}:  
Unsupervised approaches remove the need for paired data by aligning distributions or learning domain-invariant representations. Techniques such as optimal transport \citep{fickinger2022crossdomain, nguyen2021tidot} and adversarial domain adaptation \citep{choi2023domain} mitigate discrepancies at the distributional or feature level, while embedding-based methods focus on extracting task-relevant representations for transfer \citep{franzmeyer2022learn, pertsch2022crossdomain}. Recent advances further incorporate denoising strategies to enhance robustness against domain-induced noise \citep{huang2024dida}.
\end{itemize}

\textbf{Cross-Domain Imitation Learning with Aligned State and Action Dimensions.}
We categorize methods in this class as those where the source and target domains share identical state and action dimensions but differ in dynamics or contextual factors. These approaches can be broadly divided into two directions.  
First, \textit{representation-level methods} aim to learn invariant features \citep{yin2022cross, lyu2024crossdomain} or contextual embeddings \citep{liu2023ceil} to enhance robustness across domains. For example,~\cite{DBLP:conf/icml/WenBXYZ0024} proposes a contrastive representation approach for data filtering in cross-domain offline reinforcement learning. We adopt and extend this approach as a baseline, incorporating dynamics-aware adjustments to better accommodate CDIL in heterogeneous settings.  
Second, \textit{dynamics-adaptive methods} explicitly address discrepancies in transition dynamics, either by adapting policies to varying environments \citep{liu2023dynamics} or by introducing off-dynamics objectives~\citep{kang2024off}.

\section{Conclusion}
\label{sec: Conclusion}
In this work, we proposed AdaptDICE, the first semi-supervised distribution-correction framework for Semi-Supervised CDIL that scales to heterogeneous state and action spaces. AdaptDICE integrates source-domain knowledge with limited offline target data without requiring paired trajectories or explicit domain mappings. We provided a theoretical analysis showing the convergence guarantee for density ratio estimation, and demonstrated through extensive experiments on MuJoCo and Robosuite that AdaptDICE consistently outperforms other baselines. Ablation studies further confirm the importance of combining source-domain and target-domain density ratios for stable transfer.

While AdaptDICE offers improved stability and flexibility in the Semi-Supervised CDIL setting, its effectiveness still depends on the availability of a small amount of labeled target data, which may not always be accessible in real-world scenarios. 
Future work includes exploring the integration of stronger representation learning or better offline adaptation mechanisms to further enhance cross-domain transferability.

\section*{Acknowledgment}
This research was partially supported by the National Science and Technology Council (NSTC) of Taiwan under Grant Numbers 114-2628-E-A49-002 and 114-2634-F-A49-002-MBK. This work was also partially supported by the Center for Intelligent Team Robotics and Human-Robot Collaboration under the ``Top Research Centers in Taiwan Key Fields Program'' of the Ministry of Education (MOE), Taiwan. 
We also thank the National Center for High-performance Computing (NCHC) for providing computational and storage resources.

\bibliography{reference}
\bibliographystyle{tmlr}

\newpage
\appendix
\section*{Appendices}
\begingroup
\hypersetup{colorlinks=false, linkcolor=black}
\hypersetup{pdfborder={0 0 0}}
\addcontentsline{toc}{section}{Appendices}

\startcontents[appendix]
\printcontents[appendix]{l}{1}{\setcounter{tocdepth}{2}}



\endgroup
\section{DemoDICE Algorithm}

This section details the pseudo code of DemoDICE (\Cref{alg:DemoDICE}), an offline imitation learning method that trains a policy $\pi$ using expert ($\mathcal{D}^\text{E}$) and imperfect ($\mathcal{D}^\text{I}$) datasets, as formalized in Section~\ref{sec: demodice}. The algorithm proceeds in three stages: (1) pretraining a discriminator to derive a pseudo-reward $r(s,a)$; (2) updating a pseudo value network $\nu^{(t)}$ via gradient descent; and (3) optimizing a policy network $\pi^{(t)}$ through weighted behavior cloning. Below, we present the algorithm and explain each stage.

\begin{algorithm}[H]
  \caption{DemoDICE}
  \label{alg:DemoDICE}
  \begin{algorithmic}[1]
    \Require Expert dataset $\mathcal{D}^\text{E}$, imperfect dataset $\mathcal{D}^\text{I}$, union dataset $\mathcal{D}^\text{U} = \mathcal{D}^\text{E} \cup \mathcal{D}^\text{I}$
    
    \State Initialize discriminator network $c: \mathcal{S} \times \mathcal{A} \to [0,1]$
    \State Update $c \leftarrow \arg \min_{c: \mathcal{S} \times \mathcal{A} \to [0,1]} -\left( \mathbb{E}_{(s,a) \sim \mathcal{D}^\text{E}}[\log c(s,a)] + \mathbb{E}_{(s,a) \sim \mathcal{D}^\text{U}}[\log(1 - c(s,a))] \right)$
    \State Compute $r(s,a) = -\log(1/c(s,a) - 1)$ for $(s,a) \in \mathcal{D}^\text{U}$ 
    
    \State Initialize critic network \(\nu^{(0)}\), policy network \(\pi^{(0)}\)
    \State \textbf{for} iteration \(t=1,\dots,T\) \textbf{do}
    \State \quad Sample batch \(\mathcal{B}\) from \(\mathcal{D}^\text{U}\)
    \State \quad Update \(\nu^{(t)} \leftarrow \nu^{(t-1)} - \eta \nabla_{\nu} \tilde{L}(\nu^{(t-1)}; r, \mathcal{B})\) 
    \State \quad Update \(\pi^{(t)} \leftarrow \arg \min_{\pi} \tilde{L}_{\text{BC}}(\pi; \nu^{(t)}, \mathcal{B})\) 
    \State \Return Trained policy \(\pi^{(T)}\)
  \end{algorithmic}
\end{algorithm}

\subsection*{Pretraining the Discriminator for Pseudo-Reward}
The discriminator network $c: \mathcal{S} \times \mathcal{A} \to [0,1]$ is initialized and trained to distinguish expert state-action pairs from those in the union dataset $\mathcal{D}^\text{U} = \mathcal{D}^\text{E} \cup \mathcal{D}^\text{I}$ by minimizing the binary cross-entropy loss:
$J_c(\mathcal{D}^\text{E}, \mathcal{D}^\text{U}) := -\mathbb{E}_{(s,a) \sim \mathcal{D}^\text{E}}[\log c(s,a)] - \mathbb{E}_{(s,a) \sim \mathcal{D}^\text{U}}[\log(1 - c(s,a))].$
The resulting pseudo-reward is defined as
$r(s,a) := -\log\left(1/c(s,a) - 1\right).$
For the optimal discriminator $c^*(s,a) = \frac{d^\text{E}(s,a)}{d^\text{E}(s,a) + d^\text{U}(s,a)}$, we have
\begin{align}
-\log\left(\frac{1}{c^*(s,a)} - 1\right) = \log\left(\frac{d^\text{E}(s,a)}{d^\text{U}(s,a)}\right),
\end{align}
which approximates the log-density ratio between expert and union distributions, providing a reward signal that emphasizes expert-like behaviors.

\subsection*{Training the Critic Network}
As described in Section~\ref{sec: demodice}, DemoDICE optimizes
$\pi^* = \operatorname{argmax}_{\pi} -D_{\text{KL}}(d^{\pi} \parallel d^\text{E}) - \alpha D_{\text{KL}}(d^{\pi} \parallel d^\text{U}),$
yielding the dual problem with objective \( L(\nu; r) \). The critic \(\nu^{(t)}\) is updated for \( t = 1, \dots, T \) by sampling a batch \(\mathcal{B} \subset \mathcal{D}^\text{U}\) and performing gradient descent:
\begin{align}
\nu^{(t)} \leftarrow \nu^{(t-1)} - \eta \nabla_{\nu} \tilde{L}(\nu^{(t-1)}; r, \mathcal{B}),
\end{align}
where the empirical pseudo-value loss is defined as:
\begin{align}
\tilde{L}(\nu^{(t-1)}; r, \mathcal{B}) := (1 - \gamma) \mathbb{E}_{s \sim \mu}[\nu^{(t-1)}(s)] + (1 + \alpha) \log \mathbb{E}_{(s,a) \sim \mathcal{B}} \left[ \exp\left( \frac{A_{\nu^{(t-1)}}(s,a)}{1 + \alpha} \right) \right],
\end{align}
with \( A_{\nu^{(t-1)}}(s,a) := r(s,a) + \gamma \mathbb{E}_{s' \sim P(\cdot|s,a)}[\nu^{(t-1)}(s')] - \nu^{(t-1)}(s) \).

\subsection*{Optimizing the Policy Network}
The policy \(\pi^{(t)}\) is updated by minimizing:
\begin{align}
\tilde{L}_{\text{BC}}(\pi; \nu^{(t)}, \mathcal{B}) := -\mathbb{E}_{(s,a) \sim \mathcal{B}} \left[ \tilde{w}^{(t)}(s,a) \cdot \log \pi(a|s) \right],
\end{align}
where \( \tilde{w}^{(t)}(s,a) := \exp(A_{\nu^{(t)}}(s,a)/(1 + \alpha)) \). This weighted behavior cloning loss, derived from the density ratio in Equation~(\ref{eq:w*}), aligns \(\pi^{(t)}\) with the expert distribution over \( T \) iterations, yielding \(\pi^{(T)}\).

\section{Supporting Lemmas}
\label{app: Proof of Strong Convexity}
In this section, we present the useful properties of the DemoDICE algorithm for the convergence analysis in the sequel.
Recall that \( \nu: \mathcal{S} \to \mathbb{R} \) denotes the state-dependent value function, \( r(s,a):= \log \tfrac{d^\text{E}(s,a)}{d^\text{U}(s,a)} \) is the pseudo reward, the $\nu$-induced advantage function \(A_\nu(s,a) := r(s,a) + \gamma \, \mathbb{E}_{s' \sim P(\cdot|s,a)}[\nu(s')] - \nu(s)\), \( \mu \in \Delta(\mathcal{S}) \) is the initial state distribution, \( d^\text{U} \in \Delta(\mathcal{S} \times \mathcal{A}) \) is the underlying occupancy measure of the union dataset, \( \alpha \geq 0 \) is the KL-regularization coefficient, and  \( \lambda \geq 0 \) is the $\ell_2$-regularization parameter.
As described in~\Cref{sec: demodice}, the training loss of DemoDICE is
\begin{align}
\label{eq:demodice_regularize_loss}
 L(\nu; r) := (1 - \gamma) \, \mathbb{E}_{s \sim \mu}[\nu(s)] + (1 + \alpha) \log \mathbb{E}_{(s,a) \sim d^\text{U}} \bigg[ \exp\bigg(\frac{A_\nu(s,a)}{1+\alpha}\bigg) \bigg].
 \end{align}
To simplify the notation, we use $L(\nu)$ as the shorthand of $L(\nu;r)$ in the sequel when the context is clear.
For ease of exposition, we assume that both the state and action spaces are finite, and the results can be extended to continuous spaces with appropriate measure-theoretic adjustments. Let \(S^*:=\{\nu^*+C\cdot \mathbf{1}_{\mathcal{S}}\,|\,C\in\mathbb{R}\}\), where $\nu^*$ is a minimizer of $L$ and $\mathbf{1}_{\mathcal{S}}$ is the all-ones vector over $\mathcal{S}$ \citep[Lemma 2]{kim2022demodice}. Denote the orthogonal complement of the span of $S^*$ in $\mathbb{R}^{|\mathcal{S}|}$ as $(S^*)^{\perp}$ and $\Pi_{S^*}(\nu):=\arg\min_{y\in S^*}\|\nu-y\|_2$, 
we assume that $L$ satisfies the quadratic growth on $\{\nu\,|\,\forall\nu\notin S^*\}$. \ie For all \(\nu\notin S^*\) there exist a constant $c$ such that \(L(\nu)-L^*\geq\frac{c}{2}\|\nu-\Pi_{S^*}(\nu)\|_2^2\).


\renewcommand{\thelemma}{1}
\begin{lemma}[Properties of the DemoDICE Loss]
\label{lemma:demoDICE_convex_smooth}
$L$ satisfies the following properties:
\begin{enumerate}[(a)]
    \item $L$ is convex with respect to $\nu$.
    \item $L$ is \(L_f\)-smooth, where $L_f= \frac{(1 + \gamma)^2}{1 + \alpha}$.
    \item $\nabla L(\nu)^\top \mathbf{1}_{\mathcal{S}} = 0$ for all $\nu \in \mathbb{R}^{|\mathcal{S}|}$.
\end{enumerate}
\end{lemma}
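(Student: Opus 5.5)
We prove the three properties one at a time, working in the finite setting and viewing $\nu$ as a vector in $\mathbb{R}^{|\mathcal{S}|}$. Introduce the linear operator $(B\nu)(s,a) := \gamma\,\mathbb{E}_{s'\sim P(\cdot|s,a)}[\nu(s')] - \nu(s)$, so that $A_\nu = r + B\nu$ is affine in $\nu$.

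\textbf{Part (a): convexity.} The first term $(1-\gamma)\mathbb{E}_{s\sim\mu}[\nu(s)]$ is linear in $\nu$. The second term is $(1+\alpha)$ times a log-sum-exp function with weights $d^{\text{U}}(s,a)$, namely
\[
g(x) = \log \sum_{s,a} d^{\text{U}}(s,a)\, e^{x_{s,a}},
\]
evaluated at the affine map $x = (r+B\nu)/(1+\alpha)$. Since $g$ is convex and convexity is preserved under composition with an affine map and under nonnegative scaling, $L$ is convex.

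\textbf{Part (b): smoothness.} Write the Hessian of $L$ as
\[
\nabla^2 L(\nu) = \frac{1}{1+\alpha}\, B^\top\, \nabla^2 g(x)\, B,
\]
where $\nabla^2 g(x) = \mathrm{diag}(p) - pp^\top$ and $p$ is the softmax of $x$ reweighted by $d^{\text{U}}$. For any vector $u$,
\[
u^\top \nabla^2 g\, u = \mathrm{Var}_p(u) \le \mathbb{E}_p[u^2] \le \|u\|_\infty^2 .
\]
Applying this with $u = B\nu$ gives
\[
\|B\nu\|_\infty \le \gamma\|\nu\|_\infty + \|\nu\|_\infty = (1+\gamma)\|\nu\|_\infty \le (1+\gamma)\|\nu\|_2 .
\]
Combining the two bounds yields $v^\top \nabla^2 L\, v \le \frac{(1+\gamma)^2}{1+\alpha}\|v\|_2^2$, which is the claimed constant $L_f$.

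\textbf{Part (c): gradient orthogonal to $\mathbf{1}_{\mathcal{S}}$.} Compute the directional derivative of $L$ along $\mathbf{1}_{\mathcal{S}}$. Because $B\mathbf{1}_{\mathcal{S}} = (\gamma-1)\mathbf{1}$, we have $A_{\nu+c\mathbf{1}} = A_\nu + c(\gamma-1)$. The log-expectation term then shifts by exactly $c(\gamma-1)$, since the factor $(1+\alpha)$ cancels the division inside the exponent. Meanwhile the linear term shifts by $(1-\gamma)c$. These two shifts cancel, so $L(\nu + c\mathbf{1}) = L(\nu)$ for every $c$. Differentiating at $c=0$ gives $\nabla L(\nu)^\top \mathbf{1}_{\mathcal{S}} = 0$.

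\textbf{Main obstacle.} The delicate step is the smoothness constant. It requires controlling the Hessian of the log-sum-exp term via the variance bound and relating $\|\cdot\|_\infty$ to $\|\cdot\|_2$. Using the bound $\mathrm{Var}_p(u) \le \|u\|_\infty^2$, rather than a coordinatewise operator-norm bound, avoids picking up dimension factors.
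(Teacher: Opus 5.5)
Your proof is correct, and each part is argued differently from the paper.

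For (a), the paper simply invokes Proposition~2 of DemoDICE. You give a self-contained argument instead: a weighted log-sum-exp composed with the affine map $\nu\mapsto (r+B\nu)/(1+\alpha)$, plus a linear term.

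For (b), both proofs rest on the same structure. The Hessian of the log-sum-exp term is the covariance, under the tilted distribution $p$, of the rows $l_{s,a}=\gamma P(\cdot|s,a)-\mathbf{e}_s$ of your operator $B$. The paper bounds it via row $\ell_2$ norms: $\lambda_{\max}(Q)\le\operatorname{trace}(Q)\le\max_{s,a}\|l_{s,a}\|_2^2$. You instead use $\mathrm{Var}_p(u)\le\|u\|_\infty^2$, the row $\ell_1$ bound $\|Bv\|_\infty\le(1+\gamma)\|v\|_\infty$, and $\|v\|_\infty\le\|v\|_2$. This avoids expanding $\|l_{s,a}\|_2^2$ explicitly, and it proves the stronger sup-norm statement $v^\top\nabla^2L(\nu)\,v\le L_f\|v\|_\infty^2$. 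The row-$\ell_2$ route, done carefully, gives $\|l_{s,a}\|_2^2\le 1+\gamma^2$ and hence the slightly sharper Euclidean constant $(1+\gamma^2)/(1+\alpha)$. You would get the same constant by applying Cauchy--Schwarz instead of H\"older to each row of $B$. Either route suffices for the stated $L_f$.

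For (c), the paper computes $\nabla L_1(\nu)^\top\mathbf{1}_{\mathcal{S}}=1-\gamma$ and $\nabla L_2(\nu)^\top\mathbf{1}_{\mathcal{S}}=\gamma-1$ separately. Your shift-invariance identity $L(\nu+c\,\mathbf{1}_{\mathcal{S}})=L(\nu)$ is cleaner and strictly stronger. It also shows directly that every constant shift of a minimizer is again a minimizer, i.e.\ that the line $S^*$ used in Lemma~2 lies in the optimal set.

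One notational slip: in (b) the variance bound should be applied to $u=Bv$ for a direction $v$, not to $u=B\nu$.
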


\begin{proof}

To establish (a), note that the convexity of $L$ with respect to $\nu$ follows directly from Proposition 2 in \citep{kim2022demodice}.

To establish (b), we start by decomposing the loss function as $L(\nu) = L_1(\nu) + L_2(\nu)$, where
\begin{align}
L_1(\nu) &:= (1-\gamma) \mathbb{E}_{s \sim \mu} [\nu(s)],\label{eq:f1}\\
L_2(\nu) &:= (1+\alpha) \log \mathbb{E}_{(s,a) \sim d^\text{U}} \left[ \exp\left( \frac{A_{\nu}(s,a)}{1+\alpha} \right) \right].\label{eq:f2}
\end{align}
We can check the Hessian $\nabla^2 L(\nu)$. Note that $\nabla^2 L_1(\nu)= 0$.
As for the Hessian of \( L_2(\nu) \), we first rewrite \( L_2(\nu) = (1+\alpha) \log g(\nu) \), where
\begin{align}
g(\nu) := \mathbb{E}_{(s,a) \sim d^\text{U}} \left[ \exp\left( \frac{A_{\nu}(s,a)}{1+\alpha} \right) \right] = \sum_{(s,a)} d^\text{U}(s,a) \exp\left( \frac{A_{\nu}(s,a)}{1+\alpha} \right),
\end{align}
and define \( h(\nu) := \log g(\nu) \). Then, we have $\nabla^2 L_2(\nu) = (1+\alpha) \nabla^2 h(\nu)$.
To simplify notation, let \( b = \frac{1}{1+\alpha} \) and rewrite \( A_{\nu}(s,a) = r(s,a) + l_{(s,a)} \nu \), where \( l_{(s,a)} := \gamma P(\cdot | s,a) - \mathbf{e}_s \in \mathbb{R}^{|\mathcal{S}|} \) and \( \mathbf{e}_s \) is the standard basis vector with 1 at state \( s \).
Moreover, define \( w_{s,a}:= d^\text{U}(s,a) > 0 \) and \( x_{s,a}:= b (r(s,a) + l_{s,a} \nu) \).

Then, we have
\begin{align}
g(\nu) = \sum_{(s,a)\in \mathcal{S}\times\mathcal{A}} w_{s,a} \exp(x_{s,a}).
\end{align}
For each $s,a$, we define $p_{s,a}:= {w_{s,a} \exp(x_{s,a})}/{g(\nu)}$, and $\{p_{s,a}\}$ forms a probability distribution over the state-action pairs. The Hessian of \( h(\nu) \) can be derived as
\begin{align}
\nabla^2 h(\nu) = b^2 \underbrace{\left[ \sum_{s,a} p_{s,a} l_{s,a}^\top l_{s,a} - \left( \sum_{s,a} p_{s,a} l_{s,a} \right) \left( \sum_{s,a} p_{s,a} l_{s,a} \right)^\top \right]}_{=:Q}.
\end{align}

To bound \( Q \), we can compute its trace, \ie
\begin{align}
\operatorname{trace}(Q) = \sum_{s,a} p_{s,a} \| l_{s,a} \|_2^2 - \left\| \sum_{s,a} p_{s,a} l_{s,a} \right\|_2^2 \leq \sum_{s,a} p_{s,a} \| l_{s,a} \|_2^2 \leq \max_{s,a} \| l_{s,a} \|_2^2,
\end{align}
where the last inequality follows from that  \( \sum_i p_i = 1 \).

By Cauchy-Schwarz inequality, we have \( Q \succeq 0 \), and its largest eigenvalue satisfies \( \lambda_{\max}(Q) \leq \operatorname{trace}(Q) \leq \max_{s,a} \| l_{s,a} \|_2^2 \). This implies that 
\begin{align}
Q \preceq \left( \max_{s,a} \| l_{s,a} \|_2^2 \right) \mathbf{I}.
\end{align}

Moreover, we have
\begin{align}
\| l_{s,a} \|_2^2 = \| \gamma P(\cdot | s,a) - \mathbf{e}_s \|_2^2 = \gamma^2 \sum_{s'} P(s' | s,a)^2 - 2 \gamma P(s' | s,a) + 1.
\end{align}
Since \( P(s' | s,a) \geq 0 \) and \( \sum_{s'} P(s' | s,a) = 1 \), we have \( \sum_{s'} P(s' | s,a)^2 \leq 1 \) and $\| l_{s,a} \|_2^2 \leq \gamma^2 \cdot 1 + 2 \gamma + 1 = (1 + \gamma)^2.$

This also implies that
\begin{align}
\nabla^2 L_2(\nu) \preceq \frac{(1 + \gamma)^2}{1 + \alpha} \mathbf{I}.
\end{align}
Therefore, \( L\) is \( L_f \)-smooth with
$L_f = \frac{(1 + \gamma)^2}{1 + \alpha}$.

To establish (c), we show that the gradient of $L$ is orthogonal to the all-ones vector $\mathbf{1}_{\mathcal{S}}$. Recall the decomposition $L(\nu) = L_1(\nu) + L_2(\nu)$ from the proof of (b):
\begin{align}
    L_1(\nu) &= (1 - \gamma) \mu^\top \nu, \\
    L_2(\nu) &= (1 + \alpha) \log g(\nu),
\end{align}
where 
$g(\nu) = \sum_{(s,a)\in \mathcal{S}\times\mathcal{}A} w_{s,a} \exp(x_{s,a})$, with $x_{s,a}=b(r(s,a)+l_{s,a}\nu)$, and $b=\frac{1}{1+\alpha}, l_{s,a}=\gamma P(\cdot|s,a)-\mathbf{e}_s$.

Then we have the gradient $\nabla L(\nu) = \nabla L_1(\nu) + \nabla L_2(\nu)$.

For $\nabla L_1(\nu)$, we have
\begin{align}
    \nabla L_1(\nu)^\top \mathbf{1}_{\mathcal{S}} = (1 - \gamma)\mu^\top \mathbf{1}_{\mathcal{S}} = 1 - \gamma,
\end{align}
Since $\mu$ is a initial probability distribution.

For $\nabla L_2(\nu)$, we have $\nabla L_2(\nu) = (1 + \alpha) \frac{\nabla g(\nu)}{g(\nu)}$, and inner sum is
\begin{align}
    \nabla L_2(\nu)^\top \mathbf{1}_{\mathcal{S}} 
    = (1 + \alpha) \frac{\nabla g(\nu)^\top \mathbf{1}_{\mathcal{S}}}{g(\nu)}.
\label{eq: f2 gradient}
\end{align}
Then we differentiating $g(\nu)$ gives
$\frac{\partial g(\nu)}{\partial \nu(s')} 
= \sum_{(s,a)} w_{s,a} \exp(x_{s,a}) \cdot b \cdot l_{s,a}(s'),$
and 
\begin{align}
\nabla g(\nu)^\top\mathbf{1}_\mathcal{S}=\sum_{s'\in\mathcal{S}}\frac{\partial g(\nu)}{\partial \nu(s')}=b\sum_{(s,a)}w_{s,a}\exp(x_{s,a})\cdot\sum_{s'\in\mathcal{S}}l_{s,a}(s').
\label{eq: gradient g}
\end{align}
Since the inner sum $\sum_{s'\in\mathcal{S}}l_{s,a}(s')$ in \Cref{eq: gradient g} is
\begin{align}
\sum_{s'\in\mathcal{S}}l_{s,a}(s')=\gamma\sum_{s'\in\mathcal{S}}P(s'|s,a)-1=\gamma -1,
\end{align}
where $\sum_{s'\in\mathcal{S}}P(s'|s,a)=1$.Therefore,
\begin{align}
\nabla g(\nu)^\top\mathbf{1}_\mathcal{S}=b(\gamma-1)\sum_{s,a}w_{s,a}\exp(x_{s,a})=b(\gamma-1)g(\nu).
\end{align}
Substituting back to \Cref{eq: f2 gradient},
\begin{align}
\nabla L_2(\nu)^\top \mathbf{1}_{\mathcal{S}}=(1+\alpha)\cdot\frac{b(\gamma-1)g(\nu)}{g(\nu)}=\gamma-1.
\end{align}
Finally, we have
\begin{align}
\nabla L(\nu)^\top\mathbf{1}_\mathcal{S}=\nabla L_1(\nu)+\nabla L_2(\nu) = (1-\gamma)+(\gamma-1)=0.
\end{align}
The result holds for all $\nu \in \mathbb{R}^{|\mathcal{S}|}$.
\end{proof}

\renewcommand{\thelemma}{2}
\begin{lemma}[Convergence of DemoDICE]
\label{lemma:demodice_convergence}
Let \(\nu^{(t)}\) and \(w^{(t)}\) be the sequences generated under DemoDICE, with \(|\mathcal{S}| < \infty\), 
and learning rate \(\eta = 1/L_f\). Then:
    \begin{align}
    |w^{(t)}(s,a) - w^*(s,a)| \leq C_{w} w^*(s,a)\exp\left(\frac{C_{w}}{\sqrt{t}} \|\nu^{(0)} - \nu^*\|_2\right)\frac{1}{\sqrt{t}} \|\nu^{(0)} - \nu^*\|_2,
    \end{align}
    where \(\nu^*:=\Pi_{\mathcal{S^*}}(\nu^{(0)})\), and 
    \(C_{w}:=\frac{2(1+\gamma)\sqrt{L_f}}{\sqrt{c}(1+\alpha)}\).
\end{lemma}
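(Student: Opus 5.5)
The plan is to convert a standard gradient-descent rate for the convex smooth loss $L$ into a bound on the iterates $\nu^{(t)}$, and then push that bound through the exponential map defining $w$. The three ingredients are Lemma~\ref{lemma:demoDICE_convex_smooth}(a)--(c) and the quadratic-growth assumption stated before that lemma.

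\textbf{Step 1 (projection stays fixed).} By Lemma~\ref{lemma:demoDICE_convex_smooth}(c), every gradient step moves $\nu$ orthogonally to $\mathbf{1}_{\mathcal{S}}$. Hence $\nu^{(t)}-\nu^{(0)}\in\mathbf{1}_{\mathcal{S}}^{\perp}$ for all $t$. The orthogonal projection onto the line $S^*=\{\nu^*+C\mathbf{1}_{\mathcal{S}}\}$ depends only on the component of the argument along $\mathbf{1}_{\mathcal{S}}$. Therefore $\Pi_{S^*}(\nu^{(t)})=\Pi_{S^*}(\nu^{(0)})=\nu^*$ for every $t$. This pins down a single reference minimizer $\nu^*$ for the whole trajectory, which matters because $w$ is not invariant under shifts: $A_{\nu+C\mathbf{1}}=A_\nu+(\gamma-1)C$.

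\textbf{Step 2 (function-value rate to iterate rate).} By Lemma~\ref{lemma:demoDICE_convex_smooth}(a)--(b), $L$ is convex and $L_f$-smooth. Gradient descent with $\eta=1/L_f$ therefore satisfies the classical bound
\[
L(\nu^{(t)})-L^*\le \frac{L_f\|\nu^{(0)}-\nu^*\|_2^2}{2t},
\]
taking the comparator to be the minimizer $\nu^*$. Combining this with quadratic growth, $\tfrac{c}{2}\|\nu^{(t)}-\Pi_{S^*}(\nu^{(t)})\|_2^2\le L(\nu^{(t)})-L^*$, and with Step 1 gives
\[
\|\nu^{(t)}-\nu^*\|_2\le \sqrt{L_f/c}\;\|\nu^{(0)}-\nu^*\|_2/\sqrt{t}.
\]
The case $\nu^{(t)}\in S^*$ is trivial, since then $\nu^{(t)}=\nu^*$.

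\textbf{Step 3 (to the density ratio).} Write
\[
w^{(t)}(s,a)=w^*(s,a)\exp(x),\qquad x=\frac{A_{\nu^{(t)}}(s,a)-A_{\nu^*}(s,a)}{1+\alpha}=\frac{l_{s,a}(\nu^{(t)}-\nu^*)}{1+\alpha},
\]
with $l_{s,a}=\gamma P(\cdot|s,a)-\mathbf{e}_s$. The proof of Lemma~\ref{lemma:demoDICE_convex_smooth}(b) gives $\|l_{s,a}\|_2\le 1+\gamma$, so Cauchy--Schwarz yields
\[
|x|\le \frac{(1+\gamma)\sqrt{L_f}}{(1+\alpha)\sqrt{c}}\cdot\frac{\|\nu^{(0)}-\nu^*\|_2}{\sqrt{t}}\le \frac{C_w}{2}\cdot\frac{\|\nu^{(0)}-\nu^*\|_2}{\sqrt{t}}.
\]
Applying the elementary inequality $|e^{x}-1|\le |x|e^{|x|}$ gives $|w^{(t)}-w^*|\le w^*|x|e^{|x|}$. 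Enlarging $C_w/2$ to $C_w$ in both places yields the claimed bound, with room to spare.

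The main obstacle is Step 1 together with the use of quadratic growth. The argument depends on correctly identifying $\nu^*$ as the projection of every iterate, not merely of $\nu^{(0)}$. Without this, the quadratic-growth bound would control the distance to a moving point of $S^*$, and the non-shift-invariance of $w$ would break Step 3. I would verify the projection invariance explicitly using $\Pi_{S^*}(\nu)=\nu^*+\frac{\mathbf{1}^\top(\nu-\nu^*)}{|\mathcal{S}|}\mathbf{1}$ and the fact that $\mathbf{1}^\top\nu^{(t)}=\mathbf{1}^\top\nu^{(0)}$.
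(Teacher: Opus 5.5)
Your proposal is correct and follows essentially the same route as the paper: Lemma~\ref{lemma:demoDICE_convex_smooth}(c) keeps $\Pi_{S^*}(\nu^{(t)})=\nu^*$ fixed, the sublinear gradient-descent rate together with quadratic growth controls $\|\nu^{(t)}-\nu^*\|_2$, and the bound is then passed through the exponential to $w$. The differences are cosmetic: you use the sharper rate $L_f\|\nu^{(0)}-\nu^*\|_2^2/(2t)$ where the paper cites $2L_f\|\nu^{(0)}-\nu^*\|_2^2/t$ from Bubeck, and you use $|e^x-1|\le|x|e^{|x|}$ in place of the mean value theorem, which is why you finish with a factor-of-two slack in $C_w$.
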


\begin{proof}
The proof begins by showing the convergence of the visitation distribution $\nu^{(t)}$ to the optimum $\nu^*:=\Pi_{\mathcal{S^*}}(\nu^{(0)})$, followed by the convergence of the density ratio $w^{(t)}(s,a)$ to $w^*(s,a)$.
Recall that \( S^* := \{\nu^* + C \cdot \mathbf{1}_{\mathcal{S}} \,|\, C \in \mathbb{R}\} \) denotes the optimal level set of \( L \). The projection \( \Pi_{S^*}(\cdot) \) is taken with respect to the \( \ell_2 \)-norm onto this affine subspace. Without loss of generality, we assume $\nu^{(0)}$ satisfies \( \nu^{(0)} \notin S^* \).
We start by establishing the convergence bound for $\|\nu^{(t)} - \nu^*\|^2$. 

Since we assume $L$ is quadratic growth on $\{\nu\,|\,\forall\nu\notin S^*\}$ in the beginning of this section, and gradient descent with step size $\eta = 1/L_f$.
By the result of \Cref{lemma:demoDICE_convex_smooth}(c), we have
\begin{align}
\Pi_{S^*}(\nu^{(k)})=\Pi_{S^*}(\nu^{(0)})=\nu^*, \quad\forall k\geq 0.
\end{align}
Therefore, there exist a constant \(c>0\) such that for all \(t\geq0\)
\begin{align}
L(\nu^{(t)}) - L(\nu^*) \geq  \frac{c}{2}\|\nu^{(t)}-\Pi_{S^*}(\nu^{(t)})\|^2_2 = \frac{c}{2}\|\nu^{(t)}-\nu^*\|^2_2.
\end{align}
Moreover, applying the result of \cite[Theorem 3.3]{bubeck2015convex}
\begin{align}
L(\nu^{(t)}) - L(\nu^*) \leq \frac{2 L_f\|\nu^{(0)}-\nu^*\|^2_2}{t},
\end{align}
we have 
\begin{align}
\|\nu^{(t)} - \nu^*\|^2_2 \leq \frac{2}{c} \left( L(\nu^{(t)}) - L(\nu^*) \right) \leq \frac{4 L_f}{c}\cdot \frac{\|\nu^{(0)} - \nu^*\|^2_2}{t},
\end{align}
which proves the first part.

Next, we establish the convergence of the density ratio $w^{(t)}(s,a)$ to $w^*(s,a)$. Their difference can be written as
\begin{align}
w^{(t)}(s,a) - w^*(s,a) = \exp\left(\frac{A_{\nu^{(t)}}(s,a)}{1+\alpha}\right) - \exp\left(\frac{A_{\nu^*}(s,a)}{1+\alpha}\right).
\label{eq:same procedure (1)}
\end{align}
Applying the Mean Value Theorem to the function \(F(x) := \exp\left(\frac{x}{1+\alpha}\right)\), for each \((s,a)\) there exists \(\xi^{(t)}(s,a) = \theta^{(t)} A_{\nu^{(t)}}(s,a) + (1-\theta^{(t)}) A_{\nu^*}(s,a)\) for some \(\theta^{(t)} \in [0,1]\) such that
\begin{align}
\exp\left(\frac{A_{\nu^{(t)}}(s,a)}{1+\alpha}\right) - \exp\left(\frac{A_{\nu^*}(s,a)}{1+\alpha}\right) = \exp\left(\frac{\xi^{(t)}(s,a)}{1+\alpha}\right) \cdot \frac{A_{\nu^{(t)}}(s,a) - A_{\nu^*}(s,a)}{1+\alpha}.
\end{align}

Using $w^*(s,a) = \exp\left(A_{\nu^*}(s,a)/(1+\alpha)\right)$, we have
\begin{align}
w^{(t)}(s,a) - w^*(s,a) = w^*(s,a) \cdot \exp\left(\frac{\xi^{(t)}(s,a) - A_{\nu^*}(s,a)}{1+\alpha}\right) \cdot \frac{A_{\nu^{(t)}}(s,a) - A_{\nu^*}(s,a)}{1+\alpha}.
\label{eq:delta w}
\end{align}

To bound the difference \( |A_{\nu^{(t)}}(s,a) - A_{\nu^*}(s,a)| \) in \Cref{eq:delta w}:
\begin{align}
|A_{\nu^{(t)}}(s,a) - A_{\nu^*}(s,a)| = \left| \gamma \sum_{s'} T(s'|s,a) (\nu^{(t)}(s') - \nu^*(s')) - (\nu^{(t)}(s) - \nu^*(s)) \right| \leq (\gamma + 1) \max_s |\nu^{(t)}(s) - \nu^*(s)|.
\end{align} 
With the finite state space \(|\mathcal{S}| < \infty\) and the
established convergence of \(\|\nu^{(t)} - \nu^*\|^2\), we have
\begin{align}
\max_s |\nu^{(t)}(s) - \nu^*(s)| \leq \|\nu^{(t)}-\nu^*\|_2 \leq \sqrt{\frac{4L_f}{c\cdot t}\|\nu^{(0)} - \nu^*\|^2_2} = \frac{2\sqrt{L_f}}{\sqrt{c}}\frac{\|\nu^{(0)} - \nu^*\|_2}{\sqrt{t}}.
\end{align}
Therefore, 
\begin{align}
|A_{\nu^{(t)}}(s,a) - A_{\nu^*}(s,a)| \leq \frac{2(1+\gamma)\sqrt{L_f}}{\sqrt{c}}\frac{\|\nu^{(0)} - \nu^*\|_2}{\sqrt{t}}.
\label{eq:Advantage bound}
\end{align}

Next, we bound the exponential term \(\exp\left(\frac{\xi^{(t)}(s,a) - A_{\nu^*}(s,a)}{1+\alpha}\right)\) in \Cref{eq:delta w}.
Since \(\xi^{(t)}(s,a)\) is a convex combination of \(A_{\nu^{(t)}}(s,a)\) and \(A_{\nu^*}(s,a)\), we have
\begin{align}
\left| \xi^{(t)}(s,a) - A_{\nu^*}(s,a) \right| \le \left| A_{\nu^{(t)}}(s,a) - A_{\nu^*}(s,a) \right| \le \frac{2(1+\gamma)\sqrt{L_f}}{\sqrt{c}}\frac{\|\nu^{(0)} - \nu^*\|_2}{\sqrt{t}},
\end{align}
where the last inequality follows from \Cref{eq:Advantage bound}.

Consequently,
\begin{align}
\exp\left( \frac{\xi^{(t)}(s,a) - A_{\nu^*}(s,a)}{1+\alpha} \right) \leq \exp\left( \frac{2(1+\gamma)\sqrt{L_f}}{\sqrt{c}(1+\alpha)}\frac{\|\nu^{(0)} - \nu^*\|_2}{\sqrt{t}} \right).
\end{align}

Substituting these bounds into \Cref{eq:delta w}, we have
\begin{align}
|w^{(t)}(s,a) - w^*(s,a)| &\leq \frac{2w^*(s,a) (1+\gamma)\sqrt{L_f}}{\sqrt{c}(1+\alpha)} \exp\left(\frac{2(1+\gamma)\sqrt{L_f}}{\sqrt{c}(1+\alpha)}\frac{\|\nu^{(0)} - \nu^*\|_2}{\sqrt{t}}\right) \frac{1}{\sqrt{t}} \|\nu^{(0)} - \nu^*\|_2\\
&=C_{w} w^*(s,a)\exp\left(\frac{C_{w}}{\sqrt{t}} \|\nu^{(0)} - \nu^*\|_2\right)\frac{1}{\sqrt{t}} \|\nu^{(0)} - \nu^*\|_2,
\label{eq:same procedure (2)}
\end{align}
where \(C_{w} := \frac{2(1+\gamma)\sqrt{L_f}}{\sqrt{c}(1+\alpha)}\).

\end{proof}

\section{Convergence of AdaptDICE with Weighting Factor $\beta(t)$}
\label{app: Convergence of AdaptDICE with generic decay function}

In this section, we analyze the convergence of the AdaptDICE algorithm. Recall the definitions in \Cref{sec: Algorithm}, the cross-domain density ratio is given by \( w^{(t)}_{\text{cross}}(s,a) = \beta(t) w_{\text{src}}(G^{(t)}(s), H^{(t)}(s,a)) + (1 - \beta(t)) w^{(t)}_{\text{tar}}(s,a) \), which combines a pre-trained source-domain density ratio and a target-domain density ratio through the weighting factor \(\beta(t)\), 
where \(\beta(t):\mathbb{N}\to[0,1]\) denotes a weighting factor that balances the contributions from the source and target domains.
Where the mapping functions \(G, H\)
are optimized to minimize the cross-domain loss \(\mathcal{L}_{\text{MAP}}\) \Cref{eq: cross-domain loss}.

We further define the source and target domain error terms as follows:
\begin{align}
\Delta w_{\text{src}}^{(t)}(s,a):=|w_{\text{src}}(G^{(t)}(s), H^{(t)}(s,a)) - w^*_{\text{tar}}(s,a)|, \quad \Delta w_{\text{tar}}^{(t)}(s,a):=|w^{(t)}_{\text{tar}}(s,a) - w^*_{\text{tar}}(s,a)|,
\end{align}
and
\begin{align}
\Delta \bar{w}_{\text{src}}^{(t)}:=\mathbb{E}_{(s,a) \sim \mathcal{D}_{\text{tar}}}|w_{\text{src}}(G^{(t)}(s), H^{(t)}(s,a)) - w^*_{\text{tar}}(s,a)|, 
\quad \Delta \bar{w}_{\text{tar}}^{(t)}:=\mathbb{E}_{(s,a) \sim \mathcal{D}_{\text{tar}}}|w^{(t)}_{\text{tar}}(s,a) - w^*_{\text{tar}}(s,a)|,
\end{align}
To analyze convergence, we adopt the notations and assumptions from \Cref{app: Proof of Strong Convexity}, and define the optimal solution set of $\nu_{\text{tar}}$ as $S^*_{\text{tar}}:=\{\nu^*_{\text{tar}}+C\cdot \mathbf{1}_{\mathcal{S_{\text{tar}}}}\,|\,C\in\mathbb{R}\}$ and orthogonal projection in target domain $\Pi_{S^*_\text{tar}}:=\arg\min_{y\in S^*_{\text{tar}}}\|\nu_{\text{tar}}-y\|_2$.

\BoundAdaptDICE*

\begin{proof}

The cross-domain density ratio error is bounded using the triangle inequality:
\begin{align}
\label{eq:same procedure(1)}
|w^{(t)}_{\text{cross}}(s,a) - w^*_{\text{tar}}(s,a)| &\leq \beta(t) |w_{\text{src}}(G^{(t)}(s), H^{(t)}(s,a)) - w^*_{\text{tar}}(s,a)| + (1 - \beta(t)) |w^{(t)}_{\text{tar}}(s,a) - w^*_{\text{tar}}(s,a)|\\
&= \beta(t) \Delta w_{\text{src}}^{(t)}(s,a) + (1 - \beta(t)) \Delta w_{\text{tar}}^{(t)}(s,a).
\label{eq:total_error}
\end{align}

For the second term in \Cref{eq:total_error}, we bound \( \Delta w_{\text{tar}}^{(t)}(s,a)\). It follows from \Cref{lemma:demodice_convergence} that:
\begin{align}
\label{eq:diff_w2_bound}
|w^{(t)}_{\text{tar}}(s,a) - w^*_{\text{tar}}(s,a)| &\leq C_{w} w^*_{\text{tar}}(s,a)\exp\left(\frac{C_{w}}{\sqrt{t}}\|\nu^{(0)}_{\text{tar}}-\nu^*_{\text{tar}}\|_2\right) \frac{1}{\sqrt{t}} \|\nu^{(0)}_{\text{tar}} - \nu^*_{\text{tar}}\|_2,
\end{align}
where \(C_{w}=\frac{2(1+\gamma)\sqrt{L_f}}{\sqrt{c}(1+\alpha)}\).

We then combine the bounds using \(\beta(t)\). Substitute  \Cref{eq:diff_w2_bound} into \Cref{eq:total_error}:

\begin{align}
|w^{(t)}_{\text{cross}}(s,a) - w^*_{\text{tar}}(s,a)| 
\leq &\beta(t) \cdot \Delta w^{(t)}_{\text{src}}(s,a) +  (1 - \beta(t)) \cdot \left[C_{w} w^*_{\text{tar}}(s,a) \exp\left(\frac{C_{w}}{\sqrt{t}}\|\nu^{(0)}_{\text{tar}}-\nu^*_{\text{tar}}\|_2\right)\frac{\|\nu^{(0)}_{\text{tar}} - \nu^*_{\text{tar}}\|_2}{\sqrt{t}}\right],
\end{align}






Having established the pointwise convergence bounds for both \(\lambda > 0\) and \(\lambda = 0\), we next show that the expected error satisfies \(\mathbb{E}_{(s,a) \sim \mathcal{D}_{\text{tar}}}|w^{(t)}_{\text{cross}}(s,a) - w^*_{\text{tar}}(s,a)| \leq \min(\Delta \bar{w}_{\text{src}}^{(t)}, \Delta \bar{w}_{\text{tar}}^{(t)})\). Taking the expectation over \Cref{eq:total_error}, we have
\begin{align}
\mathbb{E}_{(s,a) \sim \mathcal{D}_{\text{tar}}}\left[|w^{(t)}_{\text{cross}}(s,a) - w^*_{\text{tar}}(s,a)|\right] \leq \mathbb{E}_{(s,a) \sim \mathcal{D}_{\text{tar}}}\left[\beta(t) \Delta w_{\text{src}}^{(t)}(s,a) + (1 - \beta(t)) \Delta w_{\text{tar}}^{(t)}(s,a)\right].
\end{align}
By linearity of expectation, this becomes:
\begin{align}
\beta(t) \mathbb{E}_{(s,a) \sim \mathcal{D}_{\text{tar}}}[\Delta w_{\text{src}}^{(t)}(s,a)] + (1 - \beta(t)) \mathbb{E}_{(s,a) \sim \mathcal{D}_{\text{tar}}}[\Delta w_{\text{tar}}^{(t)}(s,a)] = \beta(t) \Delta \bar{w}_{\text{src}}^{(t)} + (1 - \beta(t)) \Delta \bar{w}_{\text{tar}}^{(t)}.
\end{align}
Choosing $\beta(t) = \mathbb{I}\!\left[\Delta \bar{w}_{\text{src}}^{(t)} < \Delta \bar{w}_{\text{tar}}^{(t)} \right]$, we obtain:
\begin{itemize}
    \item If \(\Delta \bar{w}_{\text{src}}^{(t)} \geq \Delta \bar{w}_{\text{tar}}^{(t)}\), then \(\beta(t) = 0\), and the expected error is \(\Delta \bar{w}_{\text{tar}}^{(t)} = \min(\Delta \bar{w}_{\text{src}}^{(t)}, \Delta \bar{w}_{\text{tar}}^{(t)})\).
    \item If \(\Delta \bar{w}_{\text{src}}^{(t)} < \Delta \bar{w}_{\text{tar}}^{(t)}\), then \(\beta(t) = 1\), and the expected error is \(\Delta \bar{w}_{\text{src}}^{(t)} = \min(\Delta \bar{w}_{\text{src}}^{(t)}, \Delta \bar{w}_{\text{tar}}^{(t)})\).
\end{itemize}
Thus, the expected error satisfies \(\mathbb{E}_{(s,a) \sim \mathcal{D}_{\text{tar}}}|w^{(t)}_{\text{cross}}(s,a) - w^*_{\text{tar}}(s,a)| \leq \min(\Delta \bar{w}_{\text{src}}^{(t)}, \Delta \bar{w}_{\text{tar}}^{(t)})\).

Furthermore, when $\Delta \bar{w}_{\text{src}}^{(t)} < \Delta \bar{w}_{\text{tar}}^{(t)}$, selecting $\beta(t) = 1$ yields an expected error bound of $\Delta \bar{w}_{\text{src}}^{(t)}$, which is smaller than $\Delta \bar{w}_{\text{tar}}^{(t)}$, the error obtained by training solely in the target domain with DemoDICE. This implies a faster convergence rate for AdaptDICE in such cases.

\end{proof}

\newpage
\section{Detailed Experimental Setup}
\label{app:Experiments Setting}

\subsection{Settings of Evaluation Environments}
\label{app: Settings of Evaluation Environments}

To evaluate our proposed AdaptDICE framework, inspired by~\citep{pan2024survive,chen2026cross}, we conduct experiments in MuJoCo~\citep{todorov2012mujoco} and Robosuite~\citep{zhu2020robosuite}, two physics-based simulation frameworks widely used for reinforcement and imitation learning. MuJoCo offers locomotion tasks with realistic dynamics, while Robosuite provides robotic manipulation tasks, together serving as complementary benchmarks for testing offline cross-domain imitation learning (CDIL) algorithms.

To systematically examine AdaptDICE under varied cross-domain discrepancies, we design source–target environment pairs that differ in morphology, observation space, and control dimensionality, while maintaining identical task objectives. For MuJoCo tasks, the domain gap is introduced by modifying the agent’s embodiment (e.g., adding legs or joints), which alters both the dynamics and state–action representations.
\setminted{
    bgcolor=bg,
    fontsize=\small,
    linenos=true,
    breaklines=true,
    frame=none
}

\pch{
\textbf{Hopper with an extra thigh:} We add an extra thigh body attached to the torso. following analogous modifications used in ~\citep{xu2023cross}. Detailed modifications of the XML file are:
}
\vspace{-0.2cm}
\begin{minted}{xml}
<body name="thigh1" pos="0 0 1.45">
    <joint axis="0 -1 0" name="thigh_joint1" pos="0 0 1.45" range="-150 0" type="hinge"/>
    <geom friction="0.9" fromto="0 0 1.45 0 0 1.85" name="thigh_geom1" size="0.05" type="capsule"/>
</body>
\end{minted}

\vspace{-0.2cm}
\pch{
\textbf{Ant with an extra leg:} We add a fifth leg (``middle\_leg'') to the central body, inspired by the cross-domain transfer settings in \citep{zhu2024cross}, which similarly modifies agent morphology to induce representation and dynamics gaps. Detailed modifications of the XML file are:
}

\begin{minted}{xml}
<body name="middle_leg" pos="0 0 0">
    <geom fromto="0.0 0.0 0.0 0.0 -0.28 0.0" name="aux_5_geom" size="0.08" type="capsule"/>
    <body name="aux_5" pos="0.0 -0.28 0.0">
        <joint axis="0 0 1" name="hip_5" pos="0.0 0.0 0.0" range="-10 10" type="hinge"/>
        <geom fromto="0.0 0.0 0.0 0.0 -0.28 0.0" name="middle_leg_geom" size="0.08" type="capsule"/>
        <body pos="0.0 -0.28 0.0">
            <joint axis="1 1 0" name="ankle_5" pos="0.0 0.0 0.0" range="30 70" type="hinge"/>
            <geom fromto="0.0 0.0 0.0 0.0 -0.56 0.0" name="fifth_ankle_geom" size="0.08" type="capsule"/>
        </body>
    </body>
</body>
\end{minted}
\vspace{-0.2cm}

\pch{
\textbf{HalfCheetah with an extra back leg:} We follow the morphological modification introduced by \citep{zhang2021learning}, adding a second back leg set (thigh, shin, foot) to create embodiment mismatches while preserving the original locomotion objective. Detailed modifications of the XML file are:
}

\begin{minted}{xml}
<body name="bthigh2" pos="-.5 .06 0">
    <joint axis="0 1 0" damping="6" name="bthigh2" pos="0 0 0" range="-.52 1.05" stiffness="240" type="hinge"/>
    <geom axisangle="1 -1 0 3.8" name="bthigh2" pos=".1 0 -.13" size="0.046 .145" type="capsule"/>
    <body name="bshin2" pos=".16 .06 -.25">
        <joint axis="0 1 0" damping="4.5" name="bshin2" pos="0 0 0" range="-.785 .785" stiffness="180" type="hinge"/>
        <geom axisangle="0 1 0 -2.03" name="bshin2" pos="-.14 0 -.07" rgba="0.9 0.6 0.6 1" size="0.046 .15" type="capsule"/>
        <body name="bfoot2" pos="-.28 0 -.14">
            <joint axis="0 1 0" damping="3" name="bfoot2" pos="0 0 0" range="-.4 .785" stiffness="120" type="hinge"/>
            <geom axisangle="0 1 0 -.27" name="bfoot2" pos=".03 0 -.097" rgba="0.9 0.6 0.6 1" size="0.046 .094" type="capsule"/>
        </body>
    </body>
</body>
\end{minted}

For Robosuite tasks, we evaluate transfer across robot arms (Panda $\to$ UR5e) performing identical manipulation tasks, introducing embodiment and kinematic mismatches while keeping object configurations and goals consistent. This ensures that performance differences arise from cross-domain transfer capability rather than task definition. The corresponding state and action dimensions of each source–target pair are summarized in~\Cref{tab:environment_dimensions}.

\begin{table}[H]
\centering
\caption{State and action dimensions of the source and target domains.}
\label{tab:environment_dimensions}
\begin{tabular}{l c c}
\toprule
\textbf{Environment} & \multicolumn{1}{c}{\textbf{Source}} & \multicolumn{1}{c}{\textbf{Target}} \\
\cmidrule(lr){2-2} \cmidrule(lr){3-3}
 & \textbf{State/Action} & \textbf{State/Action} \\
\midrule
Hopper          & 11 / 3 & 13 / 4 \\
HalfCheetah     & 17 / 6 & 23 / 9 \\
Ant             & 27 / 8 & 31 /10 \\
BlockLifting    & 42 / 8 & 47 / 7 \\
DoorOpening     & 46 / 8 & 51 / 7 \\
TableWiping     & 37 / 7 & 34 / 6 \\
\bottomrule
\end{tabular}
\end{table}

Below, we present the source–target environment pairs used in our experiments. For each task, the left image shows the source domain, and the right image shows the corresponding target domain.

\begin{itemize}
\label{item:mujoco and robosuite env}
    \item \textbf{Hopper and Three-Thigh Hopper}:
        \begin{center}
            \begin{minipage}[t]{0.2\textwidth}
                \centering
                \includegraphics[width=0.9\textwidth]{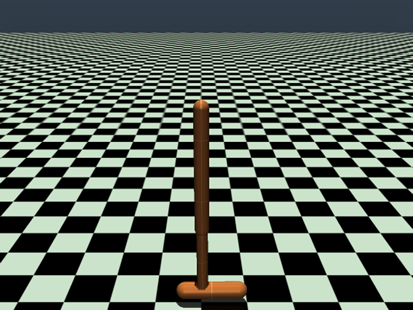}
                \label{fig:mujoco_source1}
            \end{minipage}
            \hspace{0.02\textwidth}
            \begin{minipage}[t]{0.2\textwidth}
                \centering
                \includegraphics[width=0.9\textwidth]{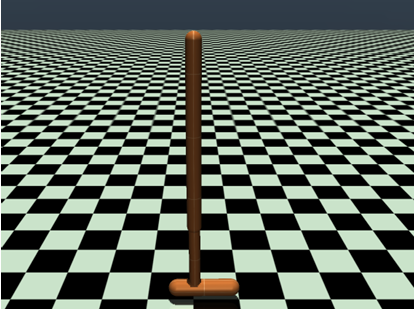}
                \label{fig:mujoco_target1}
            \end{minipage}
        \end{center}
    
    \item \textbf{Ant and Five-Leg Ant}:
        \begin{center}
            \begin{minipage}[t]{0.2\textwidth}
                \centering
                \includegraphics[width=0.9\textwidth]{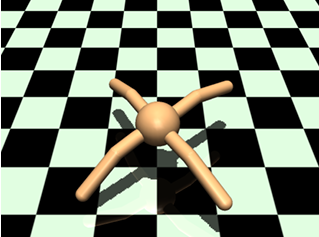}
                \label{fig:mujoco_source2}
            \end{minipage}
            \hspace{0.02\textwidth}
            \begin{minipage}[t]{0.2\textwidth}
                \centering
                \includegraphics[width=0.9\textwidth]{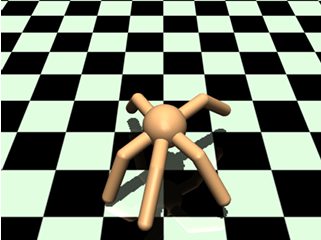}
                \label{fig:mujoco_target2}
            \end{minipage}
        \end{center}
    
    \item \textbf{HalfCheetah and Three-Leg HalfCheetah}:
        \begin{center}
            \begin{minipage}[t]{0.2\textwidth}
                \centering
                \includegraphics[width=0.9\textwidth]{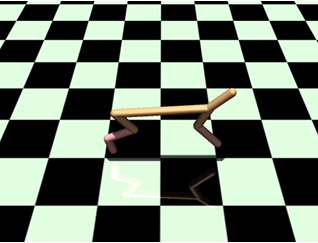}
                \label{fig:mujoco_source3}
            \end{minipage}
            \hspace{0.02\textwidth}
            \begin{minipage}[t]{0.2\textwidth}
                \centering
                \includegraphics[width=0.9\textwidth]{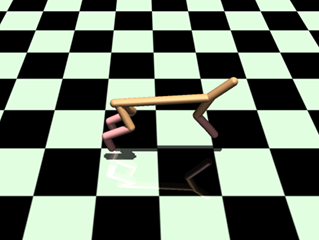}
                \label{fig:mujoco_target3}
            \end{minipage}
        \end{center}
        
    \item \textbf{Panda and UR5e BlockLifting}:
        \begin{center}
            \begin{minipage}[t]{0.2\textwidth}
                \centering
                \includegraphics[width=0.9\textwidth]{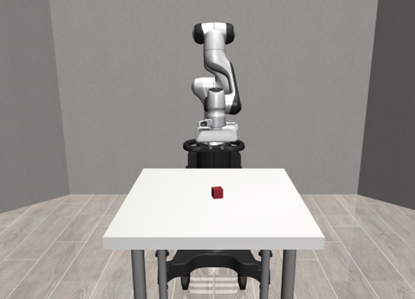}
                \label{fig:robosuite_source1}
            \end{minipage}
            \hspace{0.02\textwidth}
            \begin{minipage}[t]{0.2\textwidth}
                \centering
                \includegraphics[width=0.9\textwidth]{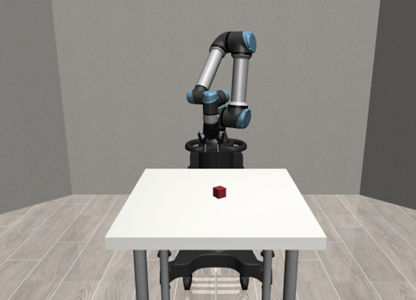}
                \label{fig:robosuite_target1}
            \end{minipage}
        \end{center}
        
    \item \textbf{Panda and UR5e DoorOpening}:
        \begin{center}
            \begin{minipage}[t]{0.2\textwidth}
                \centering
                \includegraphics[width=0.9\textwidth]{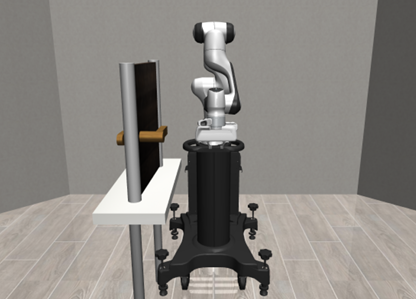}
                \label{fig:robosuite_source2}
            \end{minipage}
            \hspace{0.02\textwidth}
            \begin{minipage}[t]{0.2\textwidth}
                \centering
                \includegraphics[width=0.9\textwidth]{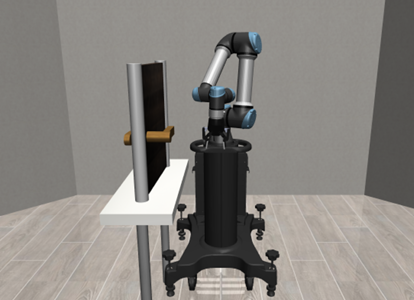}
                \label{fig:robosuite_target2}
            \end{minipage}
        \end{center}
        
    \item \textbf{Panda and UR5e TableWiping}:
        \begin{center}
            \begin{minipage}[t]{0.2\textwidth}
                \centering
                \includegraphics[width=0.9\textwidth]{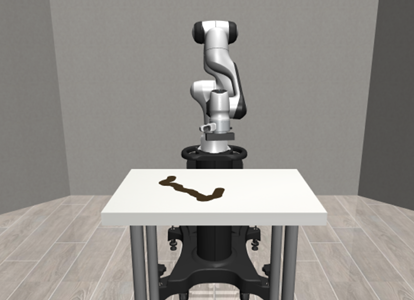}
                \label{fig:robosuite_source3}
            \end{minipage}
            \hspace{0.02\textwidth}
            \begin{minipage}[t]{0.2\textwidth}
                \centering
                \includegraphics[width=0.9\textwidth]{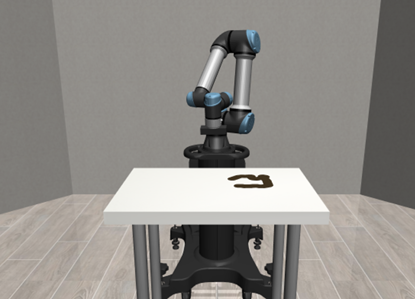}
                \label{fig:robosuite_target3}
            \end{minipage}
        \end{center}
\end{itemize}


\subsection{Dataset Configuration of Each Experiment}
\label{app: Dataset Setting Of Each Experiments}

\Cref{tab: target dataset table} summarizes the target-domain dataset configurations used in our experiments. Each dataset is constructed by mixing a small number of expert trajectories with a larger set of sub-optimal trajectories, which consist of expert demonstrations and purely random rollouts. We categorize the datasets based on either the proportion of expert versus sub-optimal trajectories (Default, Expert Rich, Sub-Optimal Rich) or the overall scale of available data (Data Rich). For all experiments, we used the same dataset setting of source domain, as described in \Cref{tab: source_dataset_table}.

\begin{table}[H]
    \centering
    \caption{Configurations (including \texttt{Data Rich}) of the target-domain dataset in various scenarios.}
    \begin{threeparttable}
    \scalebox{0.9}{
    \begin{tabular}{c c c c c l}
        \toprule
         \textbf{Environment} & \textbf{Dataset Set} & \textbf{Expert Traj.} & \textbf{Sub-optimal Traj.} & \textbf{Sub-optimal Composition} \\
        \midrule
         Hopper/TableWiping  & \multirow{2}{*}{Default} & 1 & 60 & 10 expert + 50 random \\
         Others \tnote{*} & & 1 & 101 & 1 expert + 100 random \\
        \midrule
        \multirow{2}{*}{\pch{Hopper/TableWiping}} &
          \pch{Expert Rich} & \pch{5} & \pch{60} & \pch{10 expert + 50 random} \\
         & \pch{Sub-Optimal Rich} & \pch{1} & \pch{300} & \pch{50 expert + 250 random} \\
         \midrule
         \multirow{2}{*}{Others\tnote{*}} &
          Expert Rich & 5 & 101 & 1 expert + 100 random \\
         & Sub-Optimal Rich& 1 & 505 & 5 expert + 500 random \\
         \midrule
         All & Data Rich & 10 & 800 & 400 expert + 400 random\\
        \bottomrule
    \end{tabular}
    }
    \begin{tablenotes}
    \small
    \item[*] Others includes Ant, HalfCheetah, BlockLifting, and DoorOpening.
    \end{tablenotes}
    \end{threeparttable}
    \label{tab: target dataset table}
\end{table}

\begin{table}[H]
    \centering
    \caption{Configuration of the source-domain dataset.}
    \scalebox{0.9}{
    \begin{tabular}{ c c l}
        \toprule
        \textbf{Expert Traj.} & \textbf{Sub-optimal Traj.} & \textbf{Sub-optimal Composition} \\
        \midrule
        400 & 2000 &  400 expert + 1600 random\\
        \bottomrule
    \end{tabular}
    }
    \label{tab: source_dataset_table}
\end{table}

Regarding the acquisition of expert demonstrations, in addition to the official MuJoCo \texttt{.hdf5} datasets provided by {D4RL}~\citep{fu2020d4rl}, the remaining expert datasets were generated by training policies with {Soft Actor-Critic (SAC)}~\citep{haarnoja2018soft}. Specifically, we train an SAC agent until it reaches expert-level performance, and then use the trained policy to collect expert trajectories.
\newpage

\subsection{Training Curves}
\label{app: Training Curves}

We provide the training curves of evaluation, ablation study, 
\pch{
and data configuration experiments described
}
in Section~\ref{sec: experiments}. 

\pch{\textbf{Evaluation Results.} The experimental results in~\Cref{fig:training_curves} demonstrate that AdaptDICE achieves the superior and stable performance across all testing environments. Compared to baseline methods such as SMODICE, GWIL, and IGDF+IQ-Learn, AdaptDICE not only exhibits faster convergence during the early stages of training but also attains significantly higher final average returns; while other methods largely struggle to learn effective policies in these settings, AdaptDICE maintains robust performance.}

\pch{\textbf{Ablation Study.} Based on the results in~\Cref{fig:ablation results}, our ablation study confirms the criticality of combining both source and target domain weights. While the Target-Only variant suffers from severe performance collapse in environments such as HalfCheetah, the Source-Only variant yields low returns due to limited transferability. However, despite the poor performance of the Source-Only baseline, it is essential for the stability of our method. The proposed hybrid density $w_{\text{cross}}$ effectively leverages this source information to anchor the learning process, ensuring that AdaptDICE maintains robust performance in the later stages of training without diverging.}

\pch{
\textbf{Effect of Dataset Configurations.} The results in~\Cref{fig: sensitivity test of AdaptDICE} indicate that expanding the dataset scale enhances model performance; both increasing expert data (\texttt{Expert-Rich}) and sub-optimal data (\texttt{Sub-Optimal Rich}) yield improvements over the \texttt{Default} configuration. Notably, utilizing a large amount of sub-optimal data (indicated by the blue line) achieves the best performance in most tasks. This demonstrates AdaptDICE's strong capability to extract useful information from imperfect demonstrations to refine its policy.
}

\begin{figure}[!h]
    \centering
    \begin{subfigure}[t]{0.27\textwidth}
        \centering
        \includegraphics[width=\textwidth]{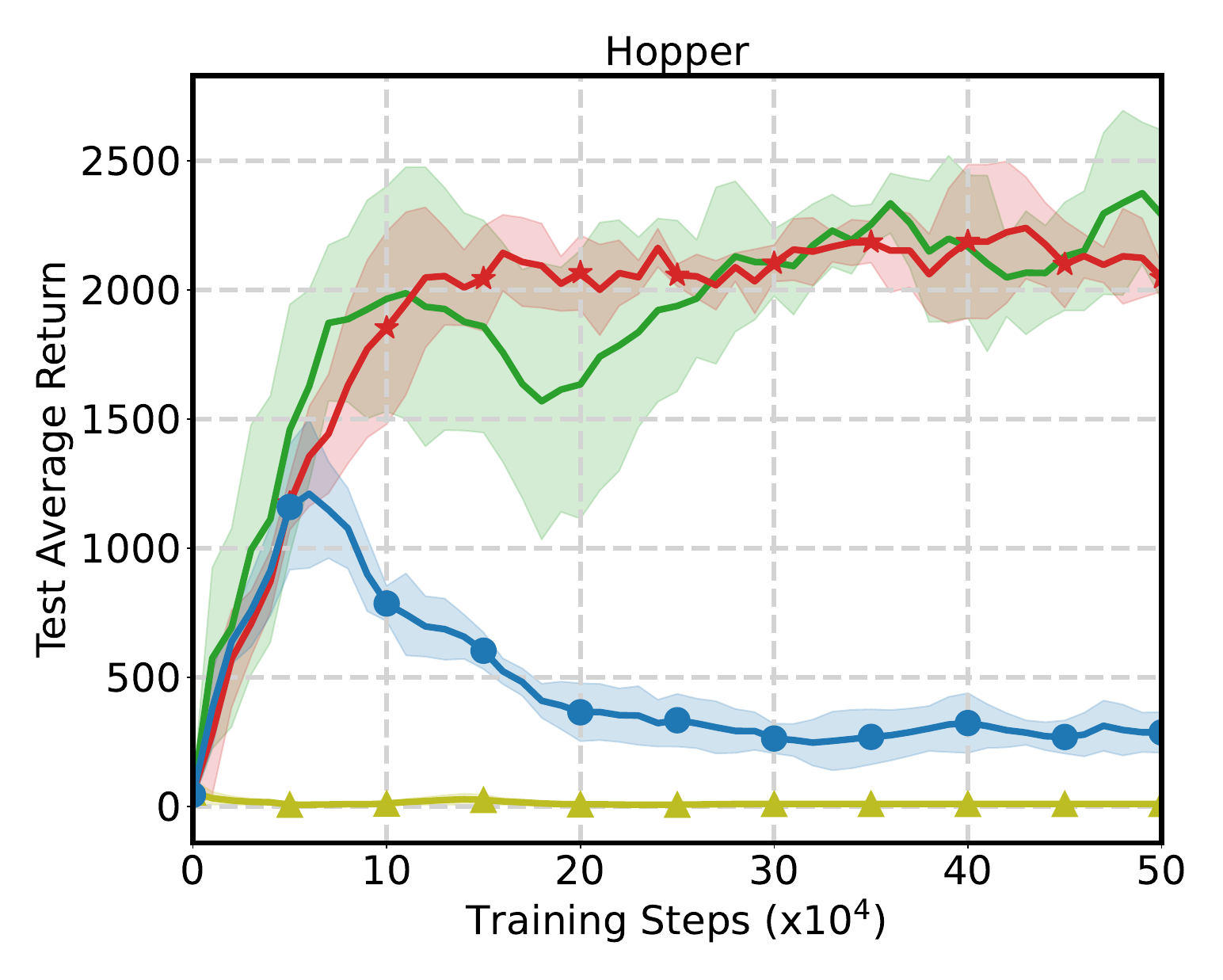}
        \caption{Hopper}
        \label{fig:curve1}
    \end{subfigure}
    \begin{subfigure}[t]{0.27\textwidth}
        \centering
        \includegraphics[width=\textwidth]{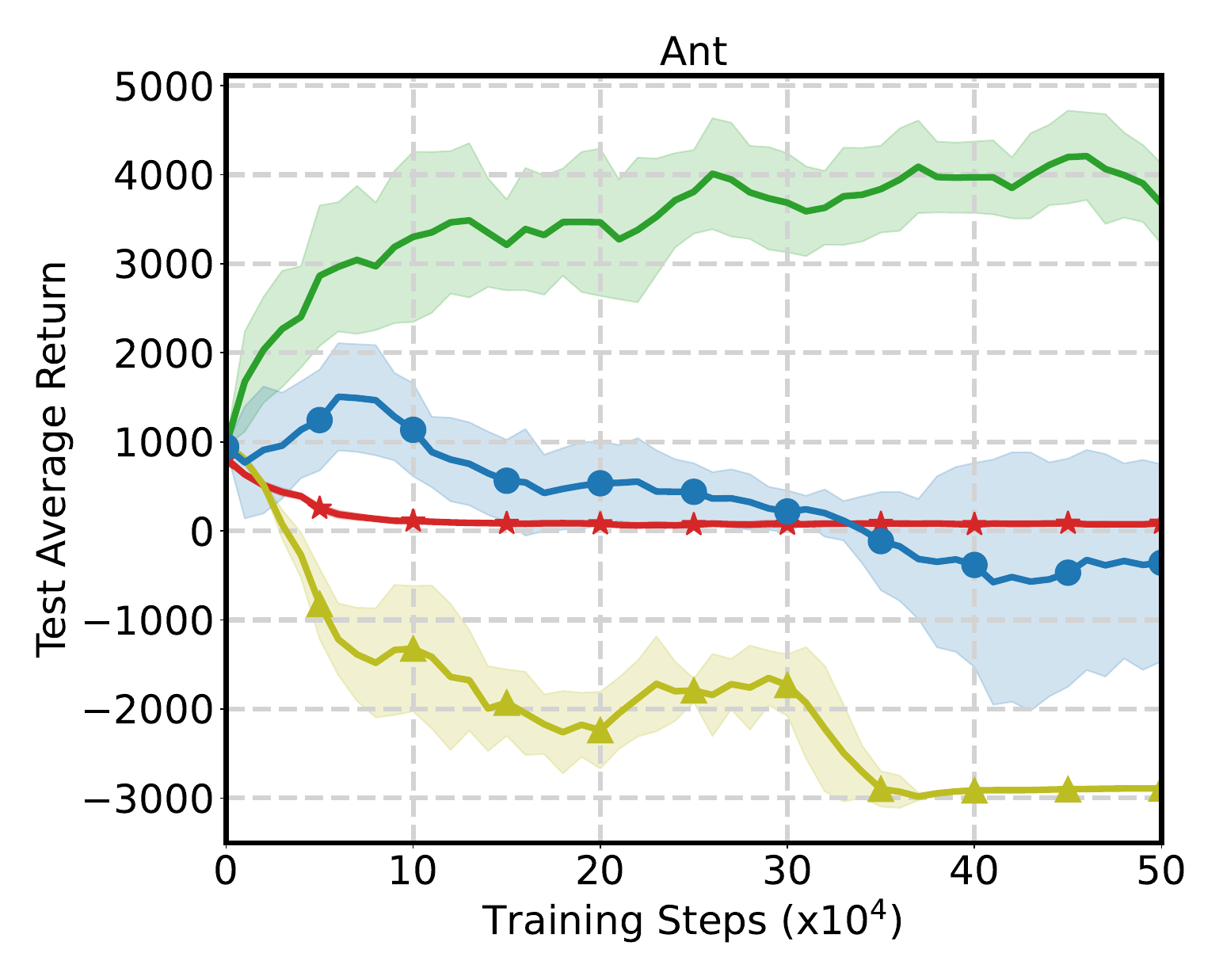}
        \caption{Ant}
        \label{fig:curve2}
    \end{subfigure}
    \begin{subfigure}[t]{0.27\textwidth}
        \centering
        \includegraphics[width=\textwidth]{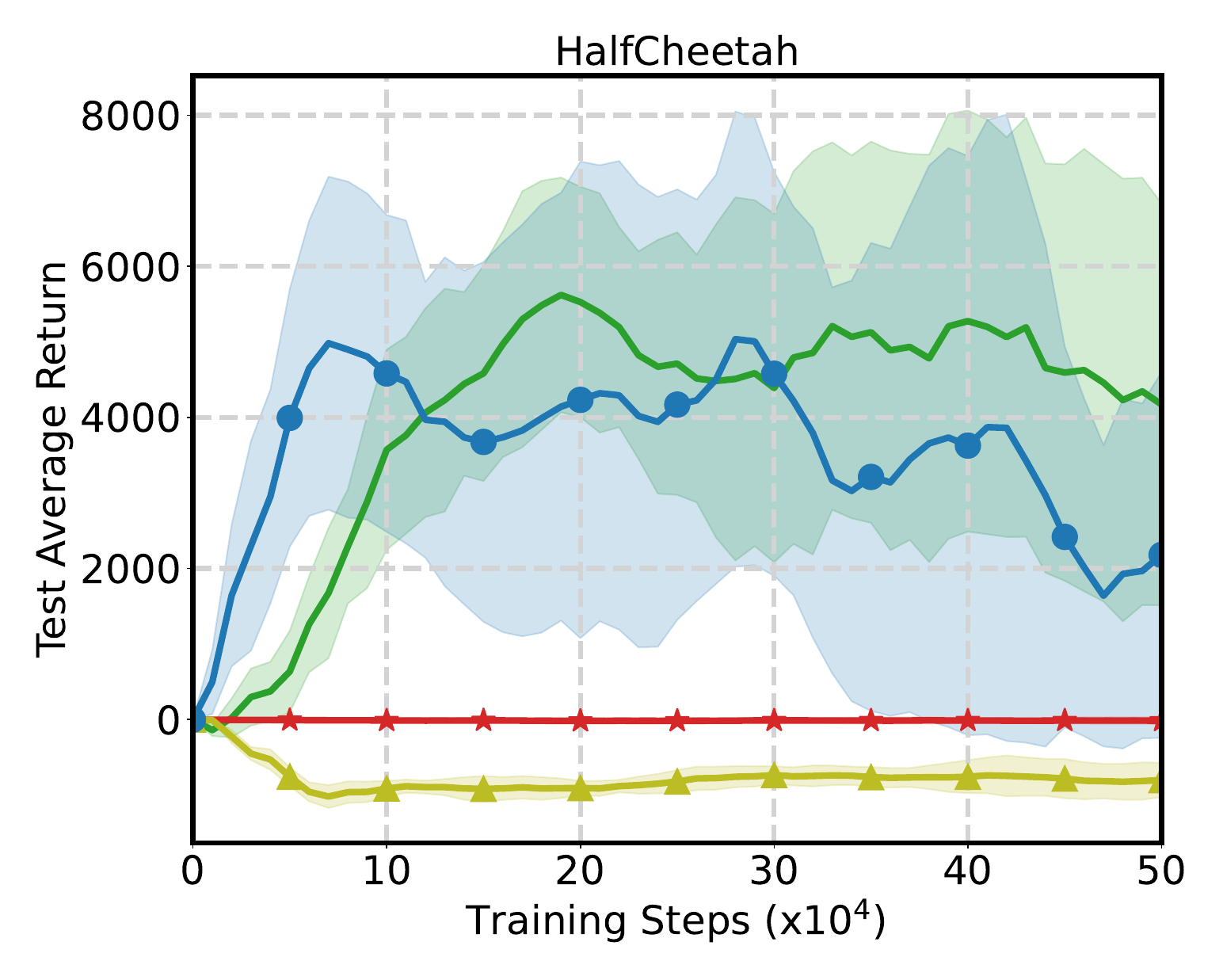}
        \caption{HalfCheetah}
        \label{fig:curve3}
    \end{subfigure}

    \begin{subfigure}[t]{0.27\textwidth}
        \centering
        \includegraphics[width=\textwidth]{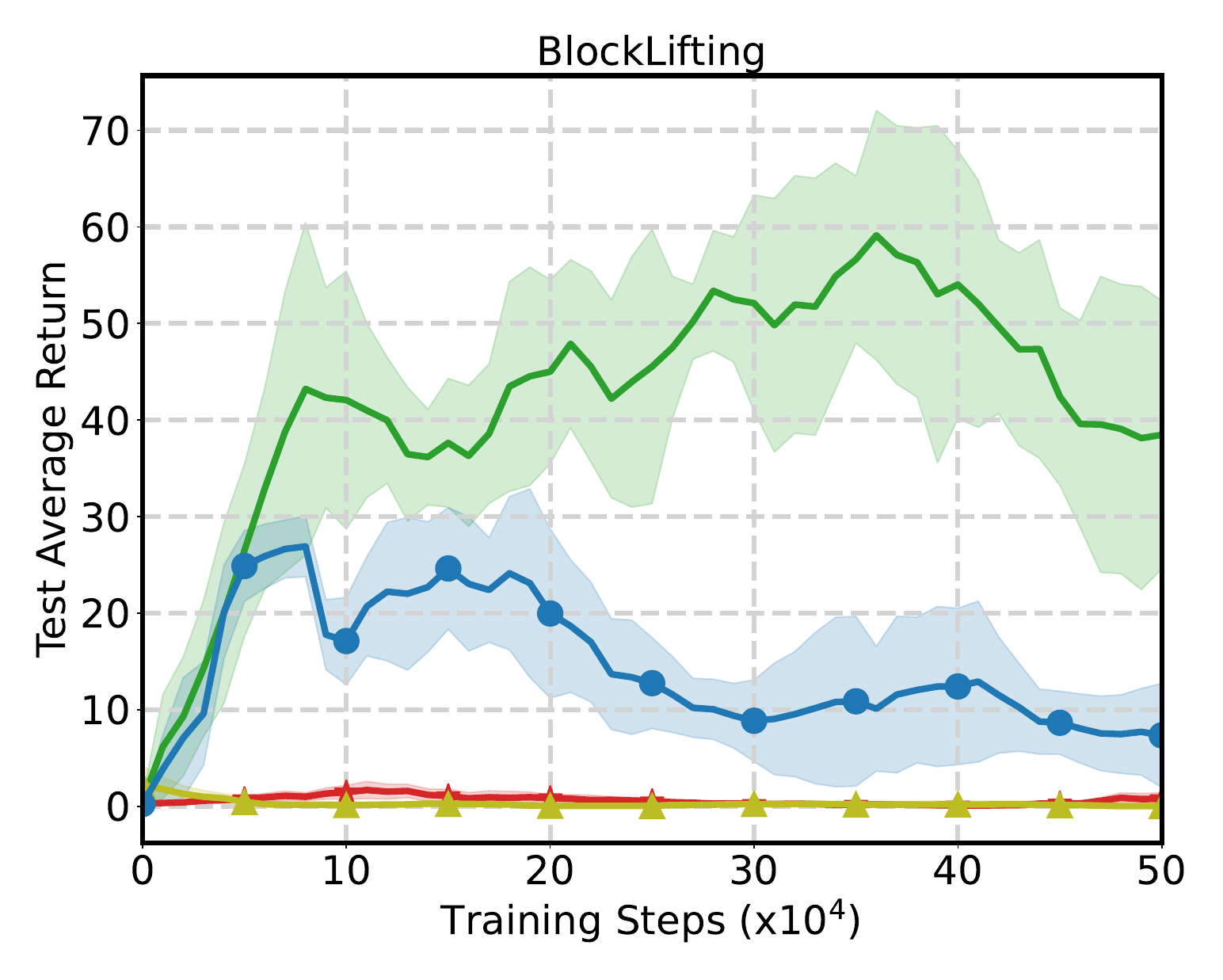}
        \caption{BlockLifting}
        \label{fig:curve4}
    \end{subfigure}
    \begin{subfigure}[t]{0.27\textwidth}
        \centering
        \includegraphics[width=\textwidth]{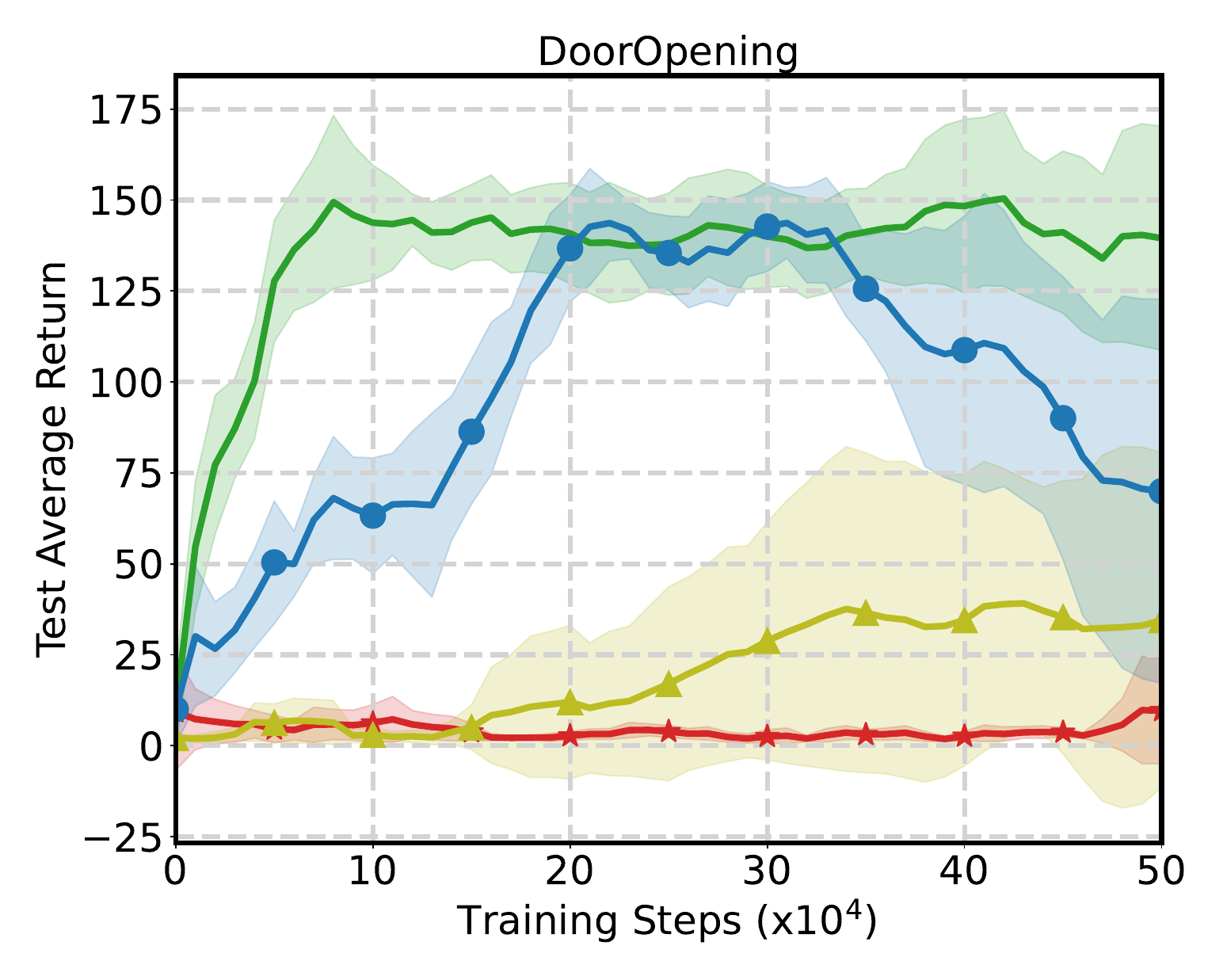}
        \caption{DoorOpening}
        \label{fig:curve5}
    \end{subfigure}
    \begin{subfigure}[t]{0.27\textwidth}
        \centering
        \includegraphics[width=\textwidth]{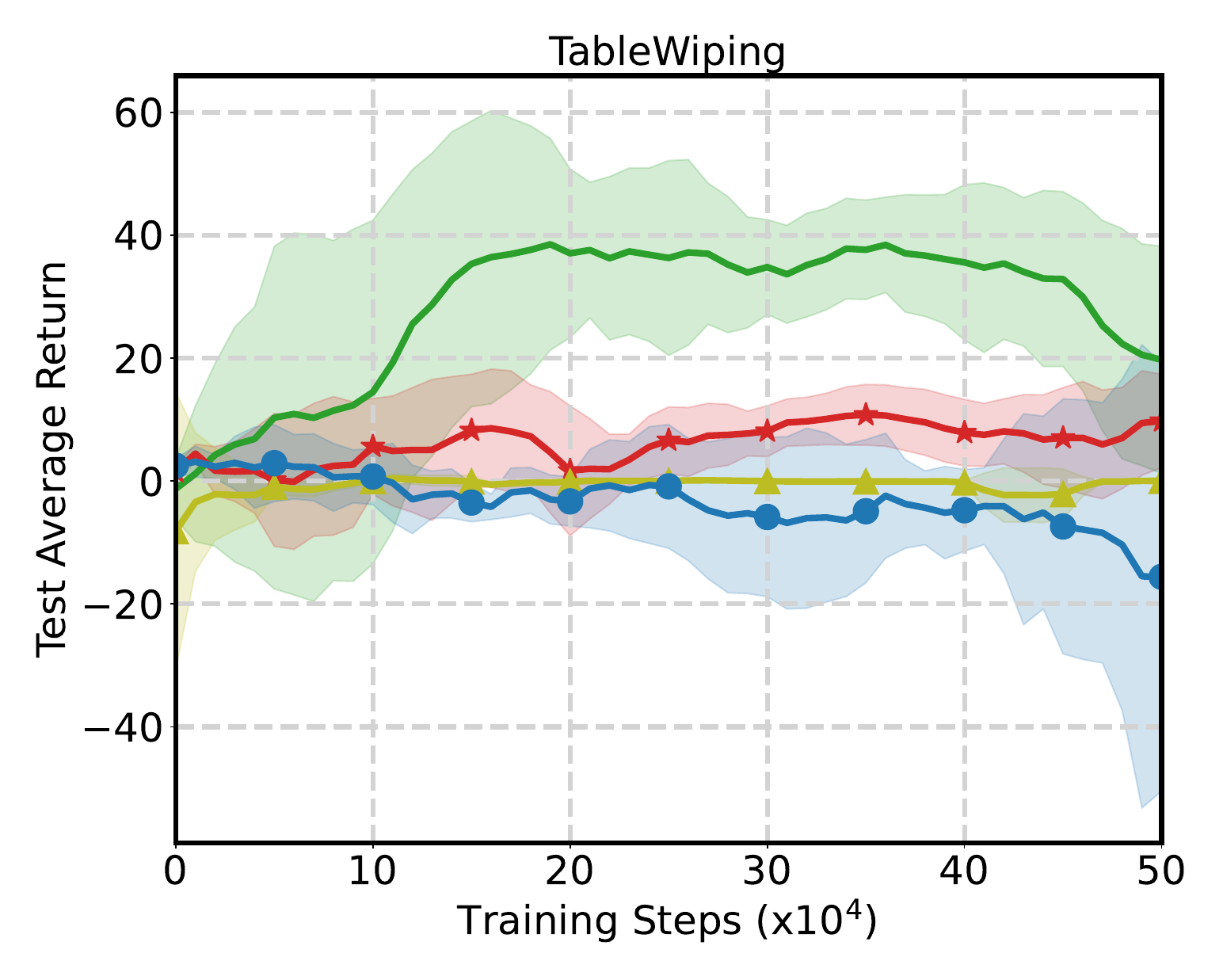}
        \caption{TableWiping}
        \label{fig:curve6}
    \end{subfigure}
    \hspace{1cm}
    \begin{subfigure}[t]{0.6\textwidth}
        \centering
        \includegraphics[width=\textwidth]{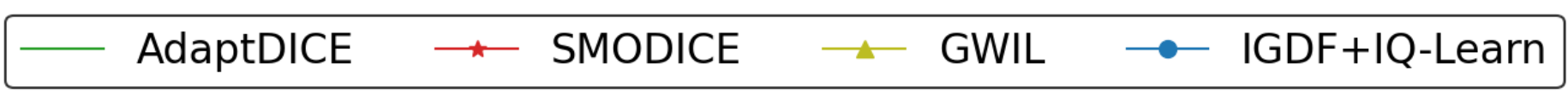}
        \label{fig:main_bar}
    \end{subfigure}
    \caption{Training curves of AdaptDICE and the baseline methods in the \texttt{Default} setting: (a)-(c) MuJoCo locomotion tasks; (d)-(f) Robot arm manipulation tasks in Robosuite.}
    \label{fig:training_curves}
\end{figure}

\begin{figure}[H]
    \centering
    \begin{subfigure}[t]{0.27\textwidth}
        \centering
        \includegraphics[width=\textwidth]{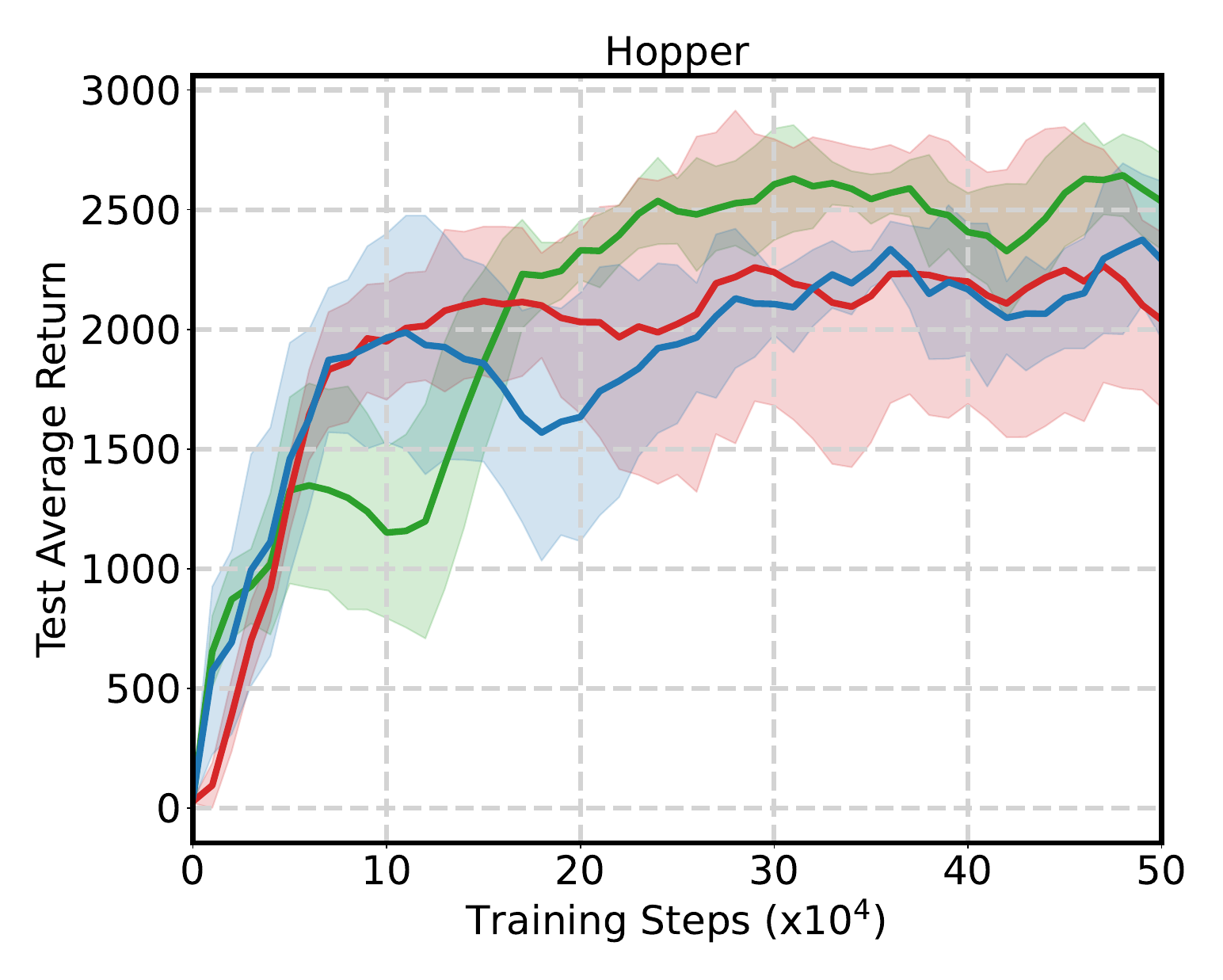}
        \caption{Hopper}
        \label{fig:abl_hopper}
    \end{subfigure}
    \begin{subfigure}[t]{0.27\textwidth}
        \centering
        \includegraphics[width=\textwidth]{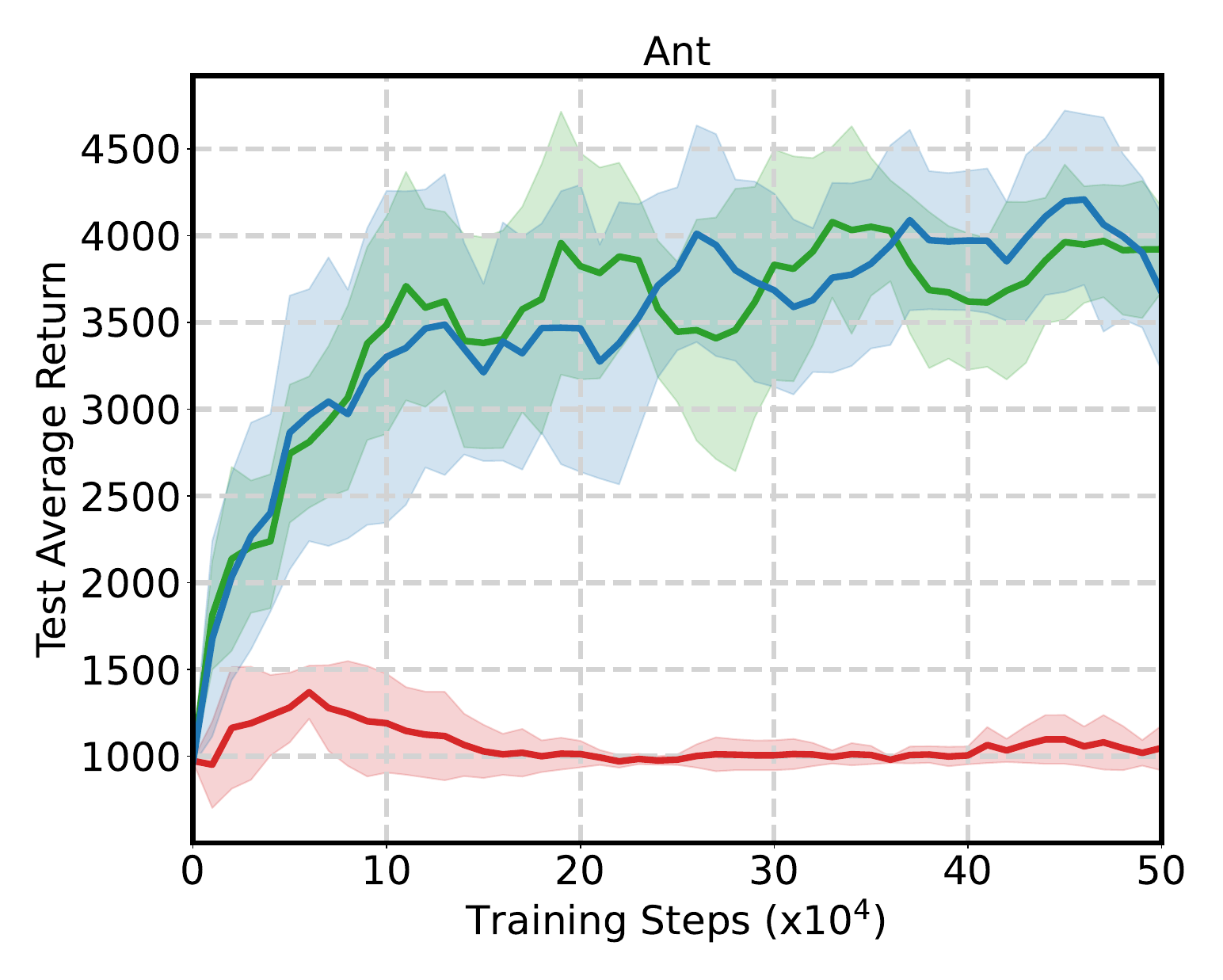}
        \caption{Ant}
        \label{fig:abl_ant}
    \end{subfigure}
    \begin{subfigure}[t]{0.27\textwidth}
        \centering
        \includegraphics[width=\textwidth]{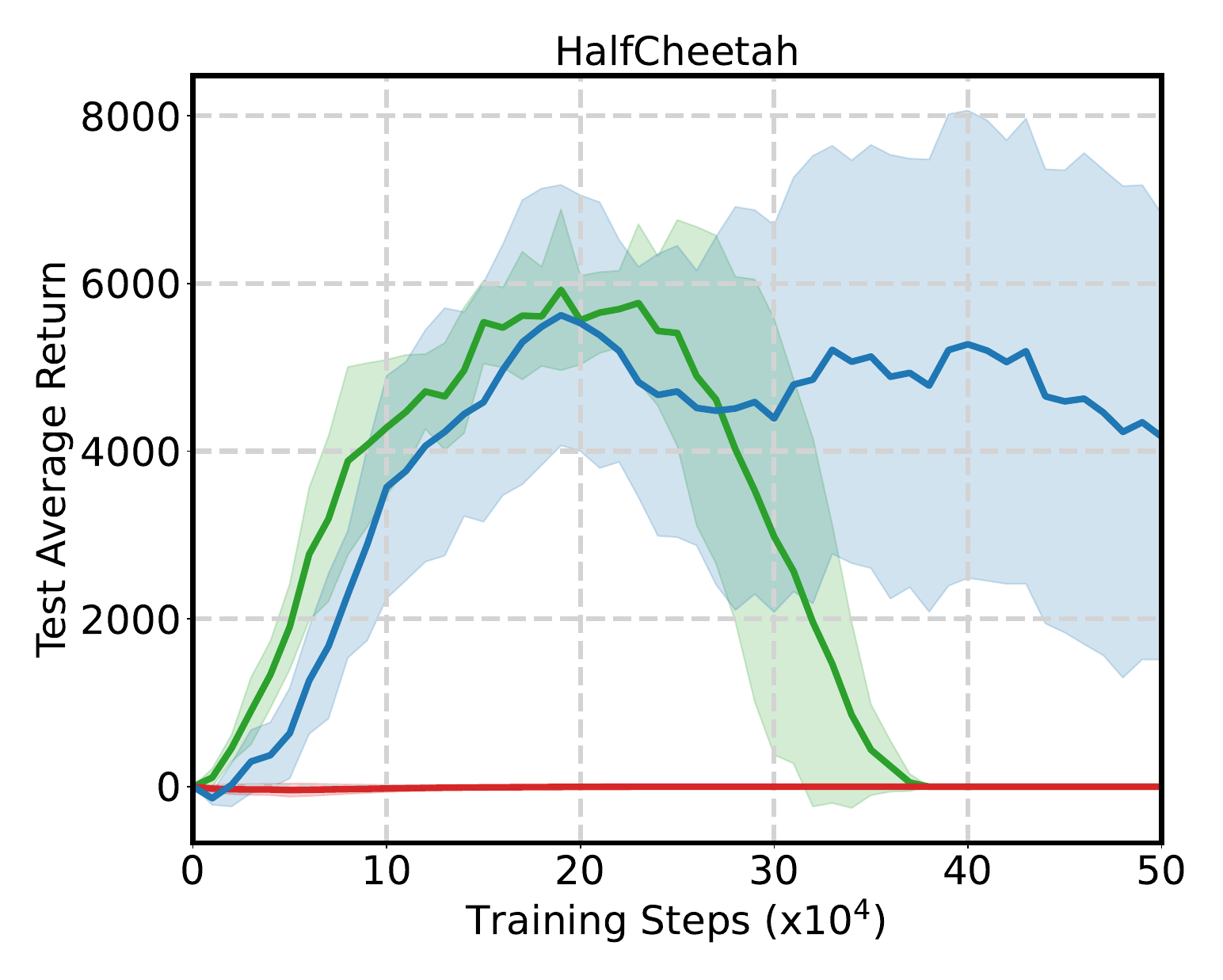}
        \caption{HalfCheetah}
        \label{fig:abl_cheetah}
    \end{subfigure}

    \begin{subfigure}[t]{0.27\textwidth}
        \centering
        \includegraphics[width=\textwidth]{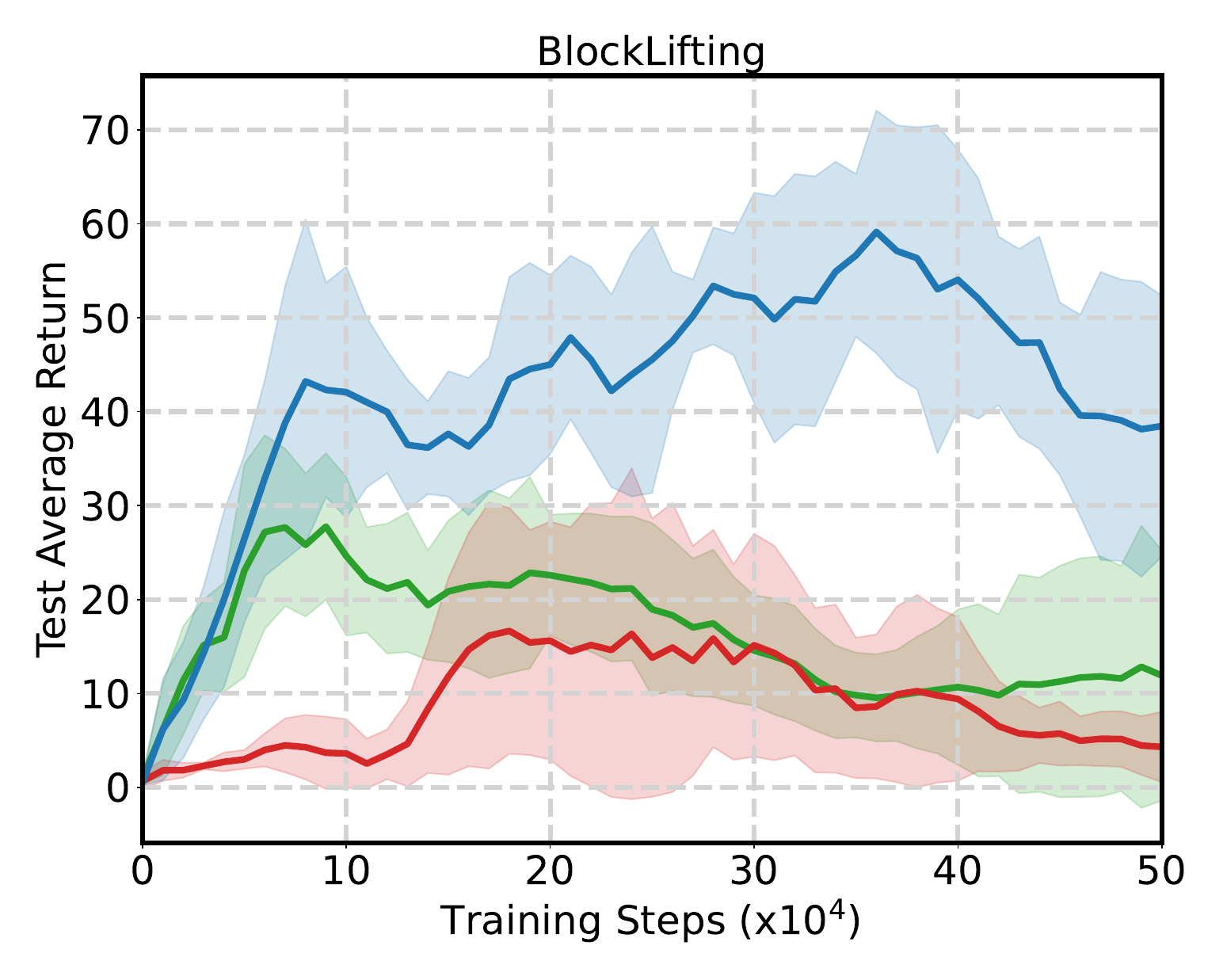}
        \caption{BlockLifting}
        \label{fig:abl_lift}
    \end{subfigure}
    \begin{subfigure}[t]{0.27\textwidth}
        \centering
        \includegraphics[width=\textwidth]{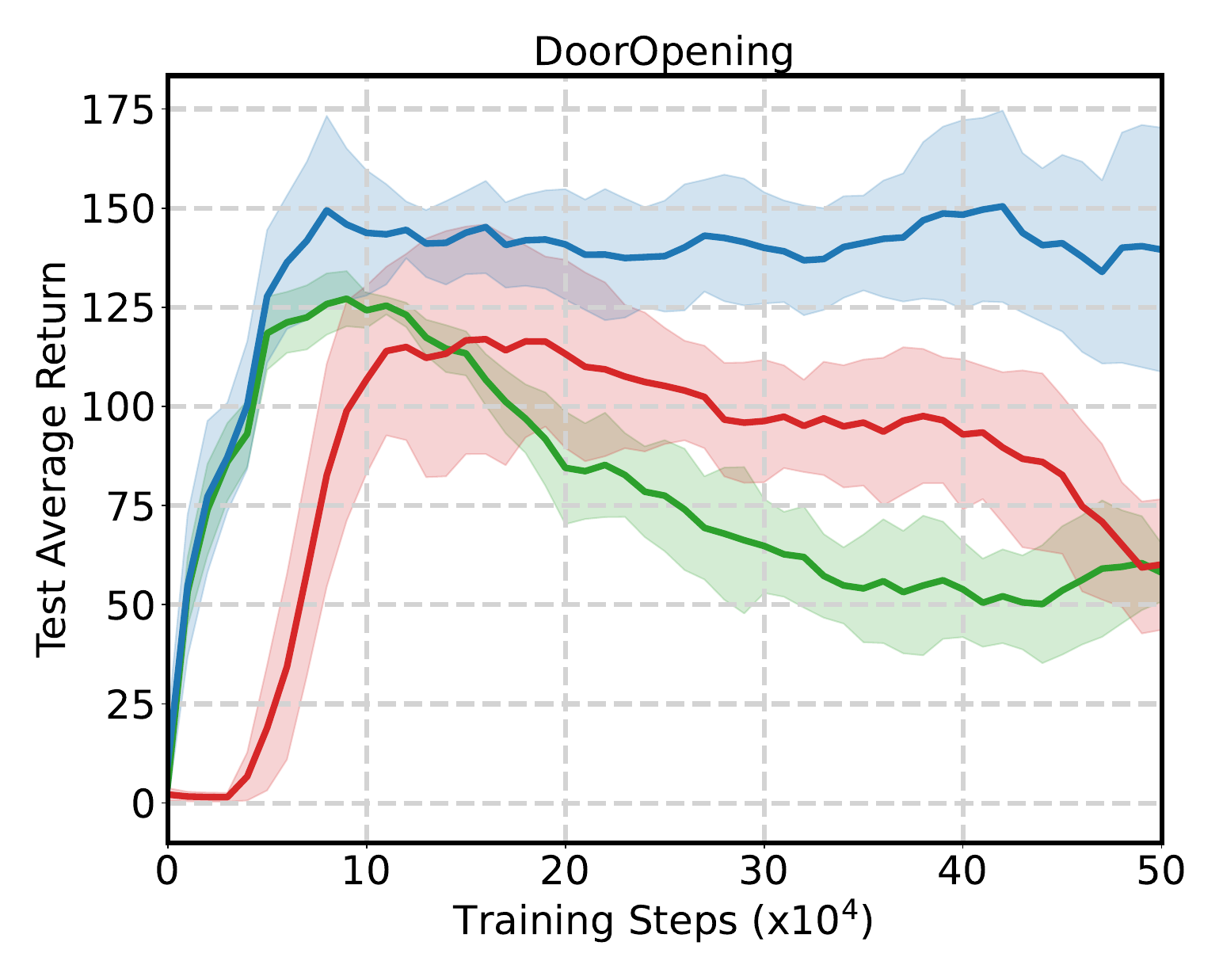}
        \caption{DoorOpening}
        \label{fig:abl_door}
    \end{subfigure}
    \begin{subfigure}[t]{0.27\textwidth}
        \centering
        \includegraphics[width=\textwidth]{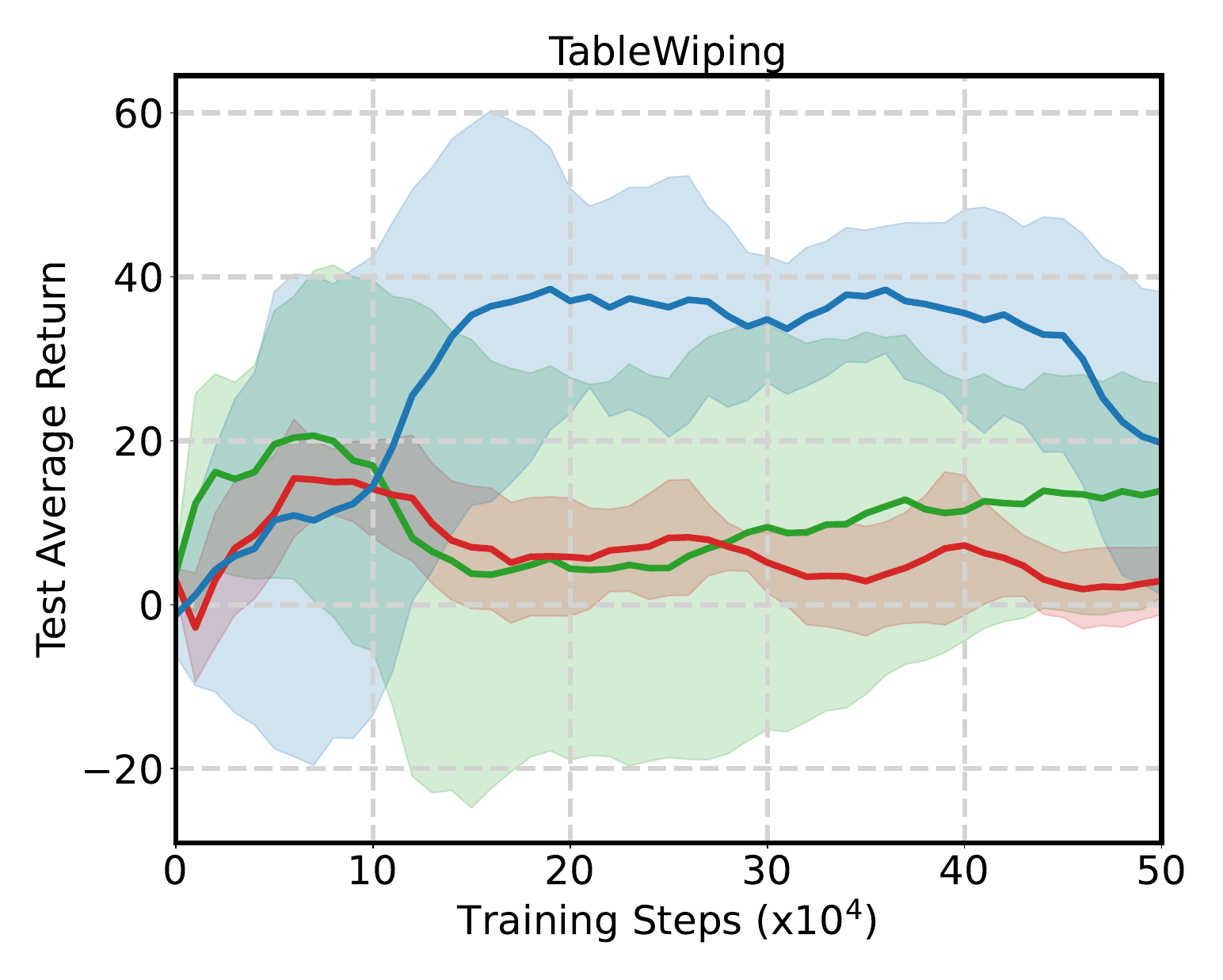}
        \caption{TableWiping}
        \label{fig:abl_wipe}
    \end{subfigure}
    \begin{subfigure}[t]{0.45\textwidth}
        \centering
        \includegraphics[width=\textwidth]{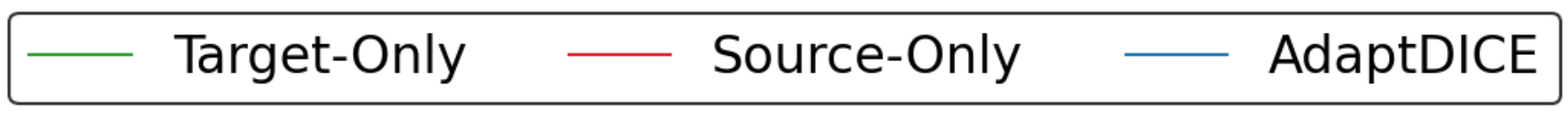}
        \label{fig:abl_bar}
    \end{subfigure}
    \caption{Ablation study: Training curves of the full AdaptDICE and its two ablation variants (\ie using only \(w_{\text{src}}\) or \(w_{\text{tar}}\)).}
    \label{fig:ablation results}
\end{figure}

\begin{figure}[ht]
    \centering
    \begin{subfigure}[t]{0.27\textwidth}
        \centering
        \includegraphics[width=\textwidth]{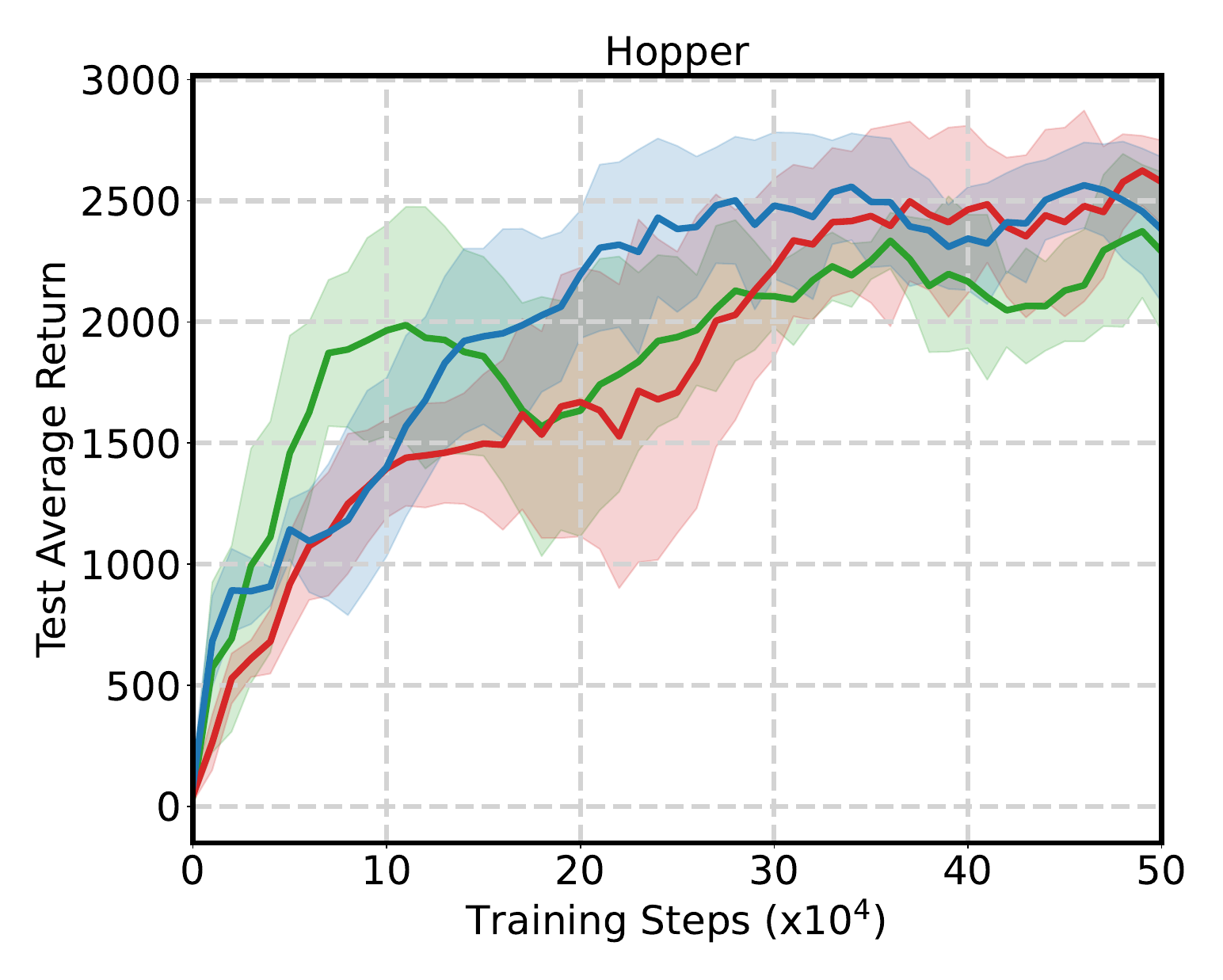}
        \caption{Hopper}
        \label{fig:sen_a_h}
    \end{subfigure}
    \begin{subfigure}[t]{0.27\textwidth}
        \centering
        \includegraphics[width=\textwidth]{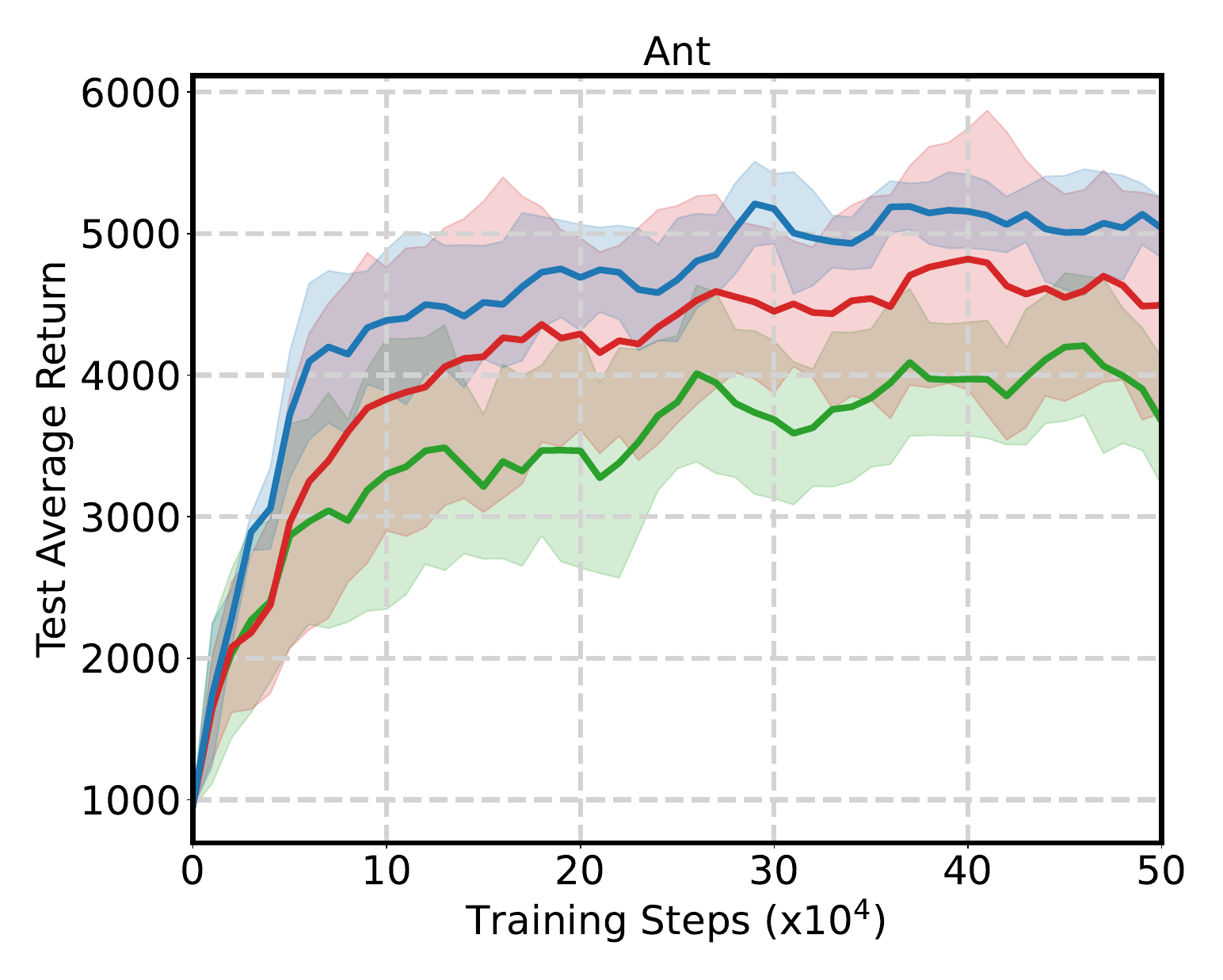}
        \caption{Ant}
        \label{fig:sen_a_a}
    \end{subfigure}
    \begin{subfigure}[t]{0.27\textwidth}
        \centering
        \includegraphics[width=\textwidth]{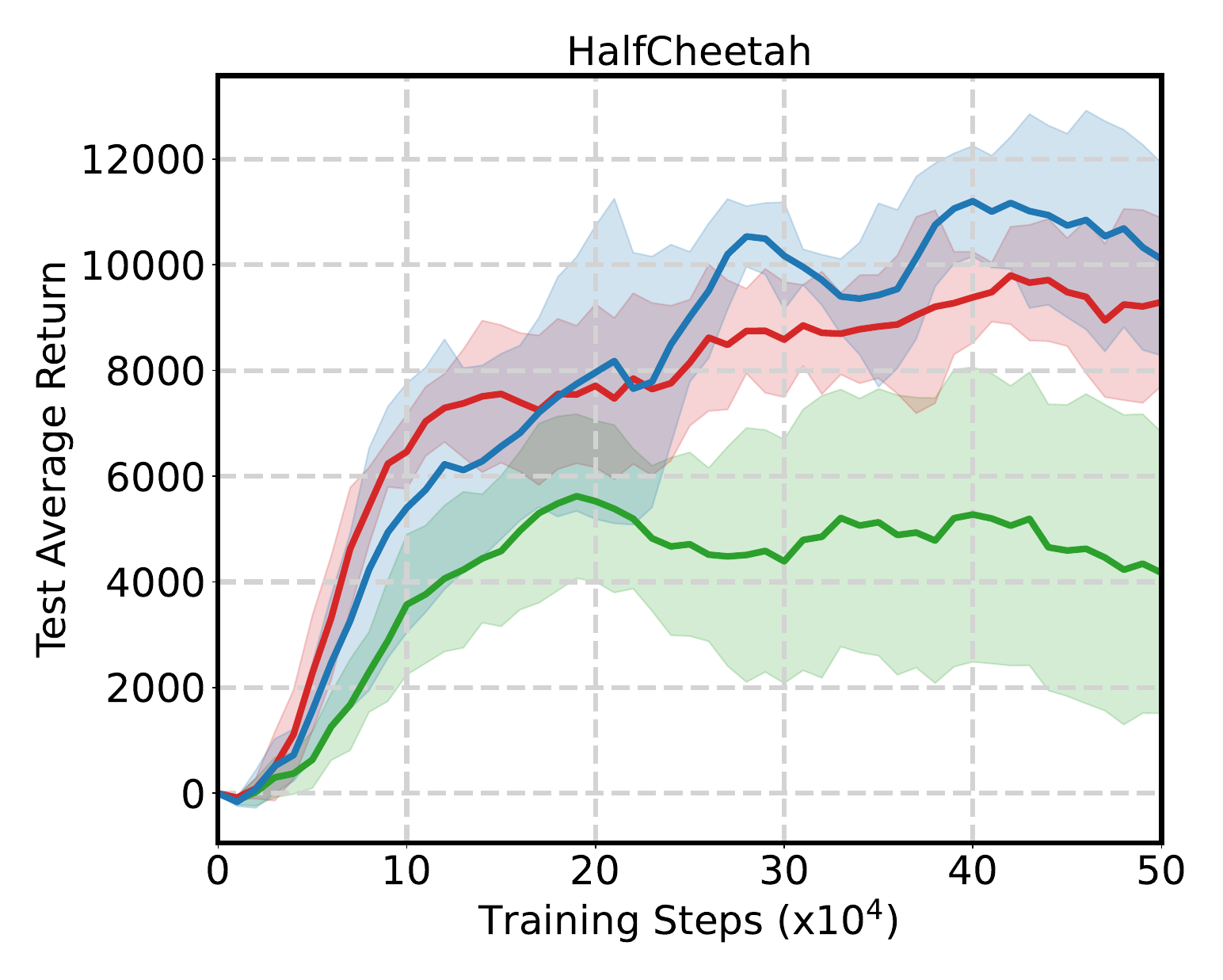}
        \caption{HalfCheetah}
        \label{fig:sen_a_c}
    \end{subfigure}
    \begin{subfigure}[t]{0.27\textwidth}
        \centering
        \includegraphics[width=\textwidth]{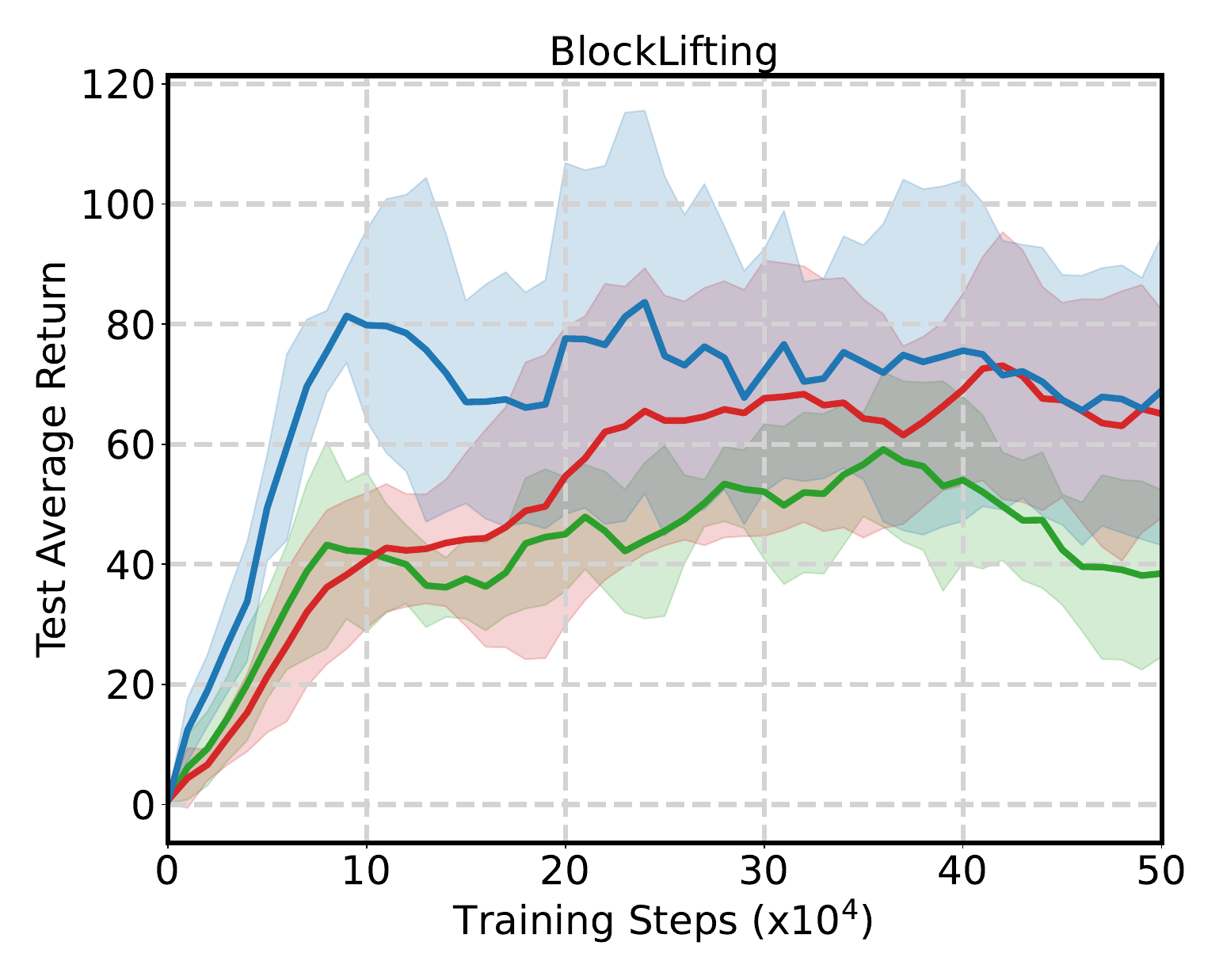}
        \caption{BlockLifting}
        \label{fig:sen_a_l}
    \end{subfigure}
    \begin{subfigure}[t]{0.27\textwidth}
        \centering
        \includegraphics[width=\textwidth]{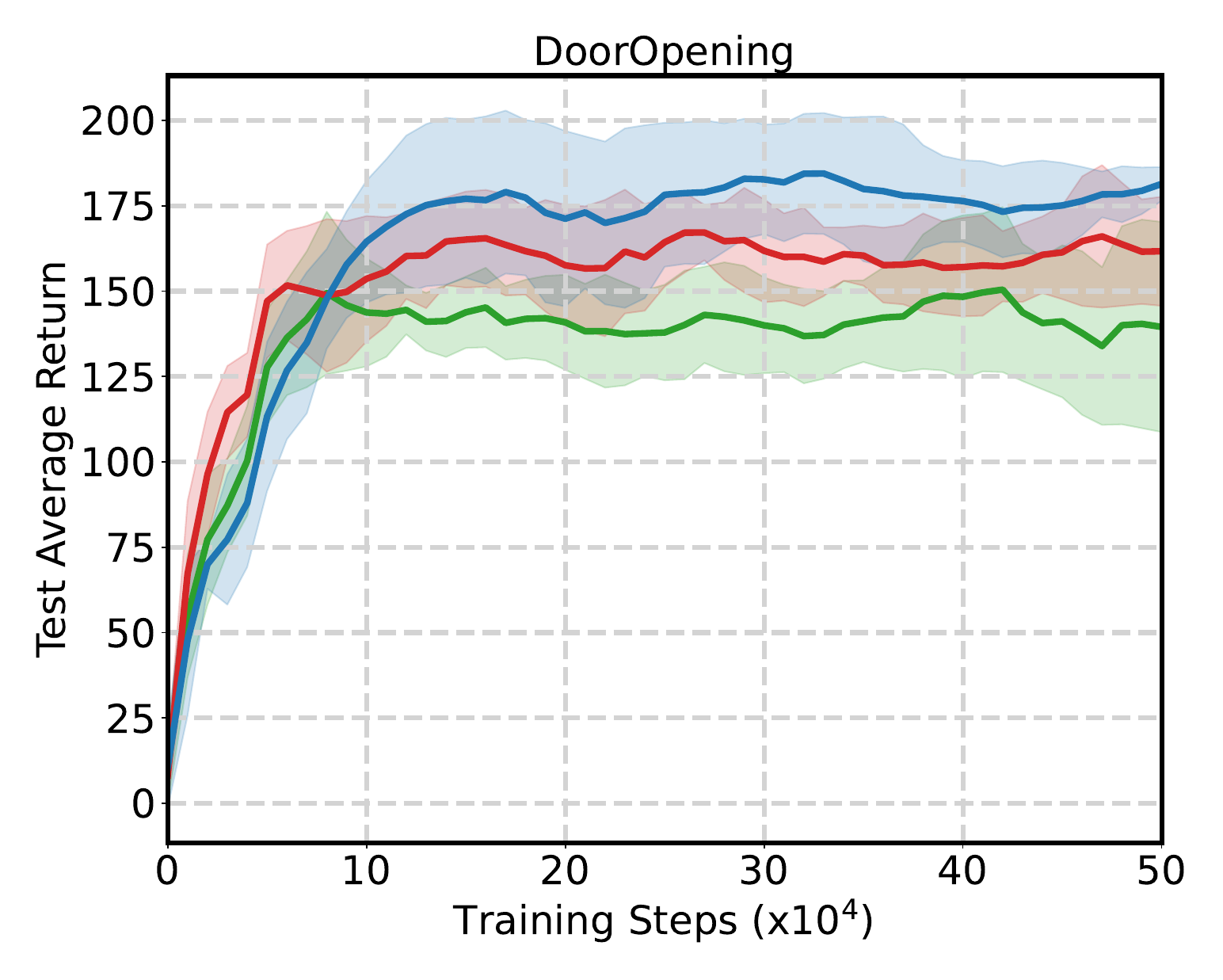}
        \caption{DoorOpening}
        \label{fig:sen_a_d}
    \end{subfigure}
    \begin{subfigure}[t]{0.27\textwidth}
        \centering
        \includegraphics[width=\textwidth]{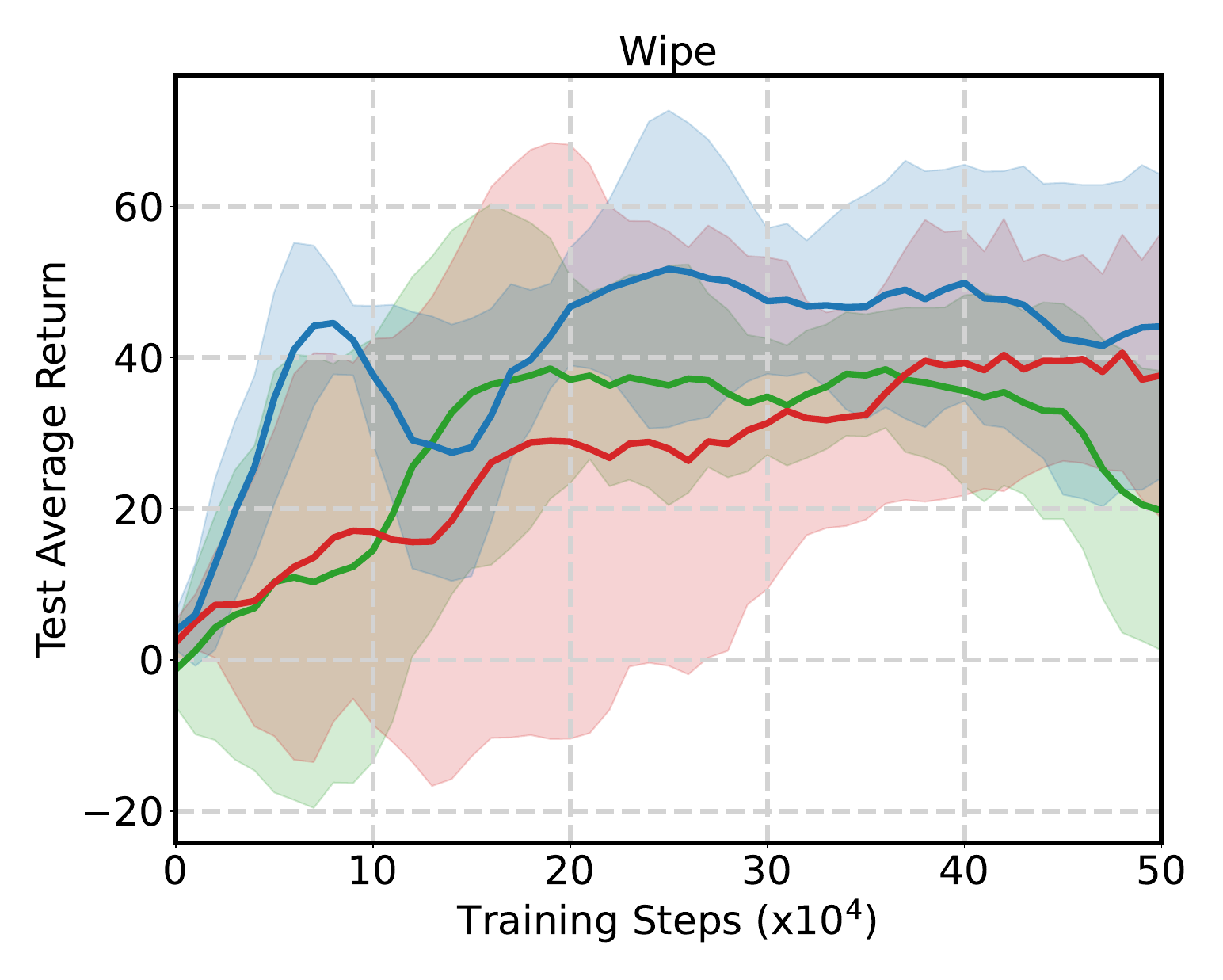}
        \caption{TableWiping}
        \label{fig:sen_a_w}
    \end{subfigure}
    \begin{subfigure}[t]{0.45\textwidth}
        \centering
        \includegraphics[width=\textwidth]{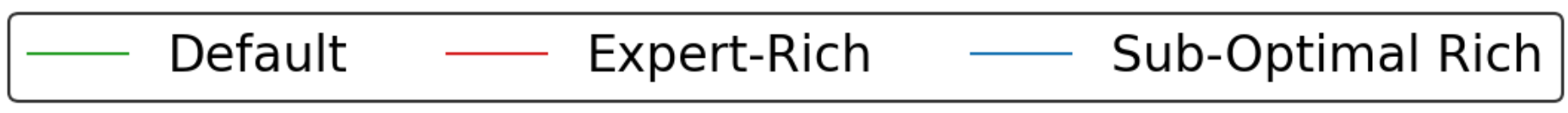}
        \label{fig:sen_a_bar}
    \end{subfigure}
    \caption{Effect of dataset configurations under AdaptDICE: We compare performance across the \texttt{Expert Rich} and \texttt{Sub-Optimal Rich} regimes, showing that both types of data expansion lead to improvements over the \texttt{Default} dataset setting.}
    \label{fig: sensitivity test of AdaptDICE}
\end{figure}

\newpage

\subsection{Effect of Dataset Configurations on Baseline Methods}

In the main text, we presented the analysis of AdaptDICE under varying dataset configurations, highlighting that both \texttt{Expert Rich} and \texttt{Sub-Optimal Rich} regimes improve performance compared to the \texttt{Default}. 
To assess whether other baseline methods (SMODICE, GWIL, and IGDF+IQLearn) also benefit from increased data quantities, we conduct similar experiments for these algorithms, using the \texttt{Data Rich} configurations outlined in \Cref{tab: target dataset table}.

The training curves in \Cref{fig: sensitivity test of SMODICE,fig: sensitivity test of GWIL,fig: sensitivity test of IGDF+IQ-Learn}, as well as those final performance values in \Cref{tab:sensitivity other baselines} and \Cref{tab:sensitivity default other baselines},
illustrate the impact of increased dataset sizes on the performance of SMODICE, GWIL, and IGDF+IQ-Learn across the evaluated environments.
Compared to the \texttt{Default} settings, SMODICE and IGDF+IQ-Learn exhibit consistent performance improvements under the \texttt{Data Rich} settings. 
In contrast, GWIL shows limited improvement under \texttt{Data Rich} settings, with consistently poor performance across all environments. This can be attributed to GWIL's reliance on online learning and its unsupervised Cross-Domain Imitation Learning (CDIL) assumption of isomorphic state-action spaces. The isomorphic assumption often fails in diverse environments with varying dynamics, leading to poor generalization. Moreover, GWIL's online learning nature limits its ability to effectively utilize larger offline datasets, as it struggles to adapt to increased data variety without explicit supervision or alignment with target domain dynamics.

These results highlight that methods like SMODICE and IGDF+IQ-Learn, which are designed to handle offline data efficiently, benefit significantly from increased dataset sizes, while GWIL's performance is constrained by its methodological assumptions and online learning requirements.





\begin{figure}[H]
    \centering
    \begin{subfigure}[t]{0.27\textwidth}
        \centering
        \includegraphics[width=\textwidth]{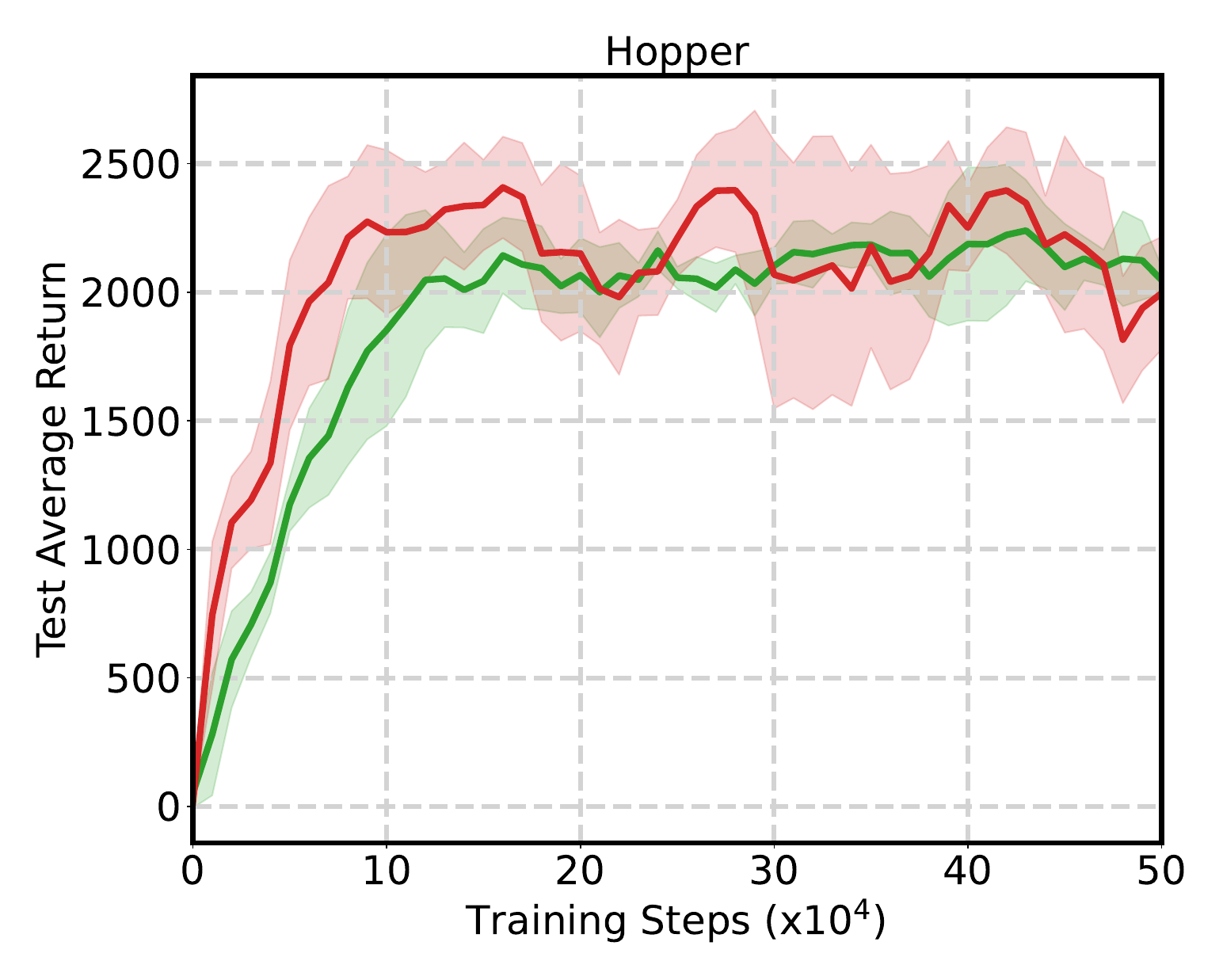}
        \caption{Hopper}
        \label{fig:sen_smodice_hp}
    \end{subfigure}
    \begin{subfigure}[t]{0.27\textwidth}
        \centering
        \includegraphics[width=\textwidth]{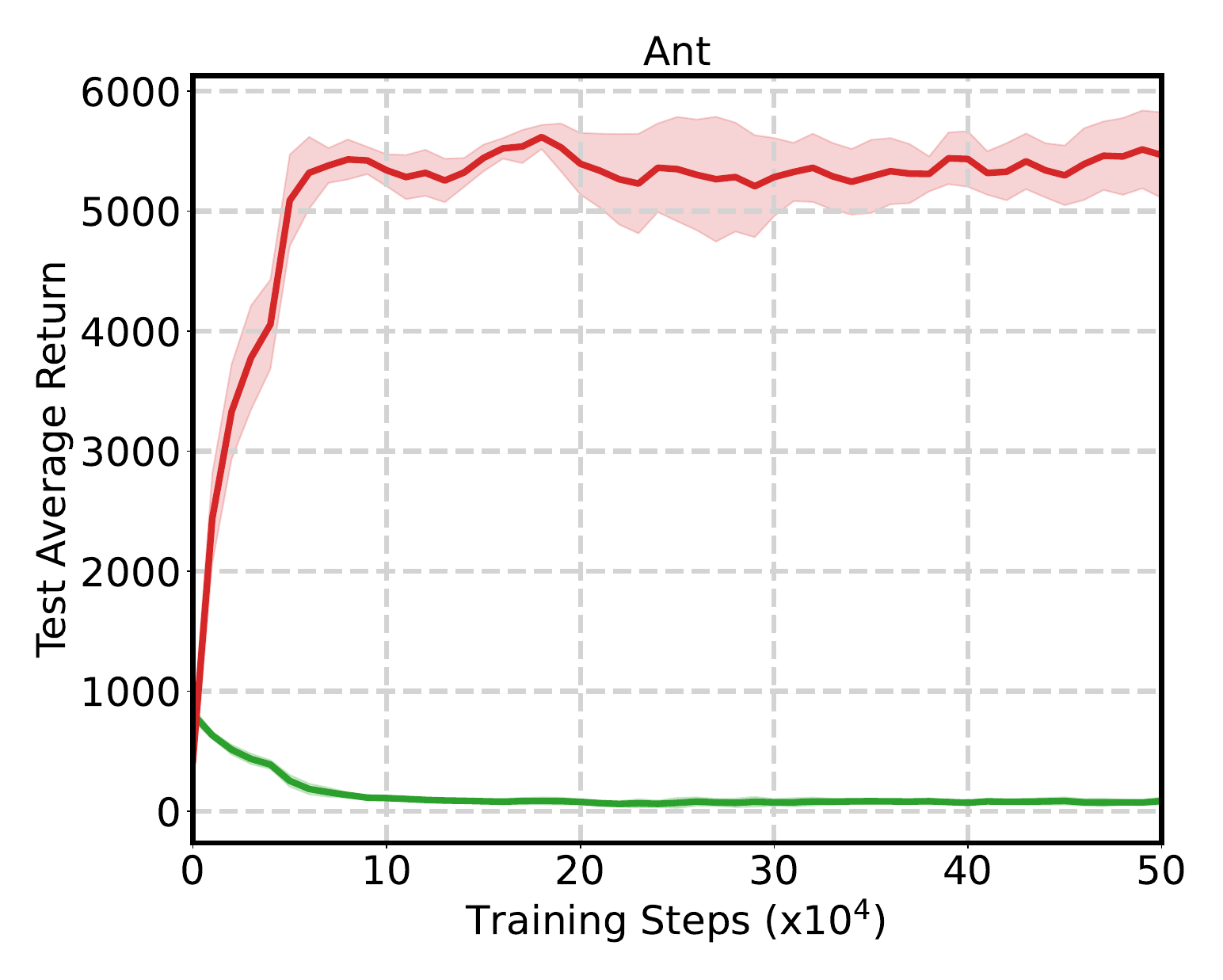}
        \caption{Ant}
        \label{fig:sen_smodice_hc}
    \end{subfigure}
    \begin{subfigure}[t]{0.27\textwidth}
        \centering
        \includegraphics[width=\textwidth]{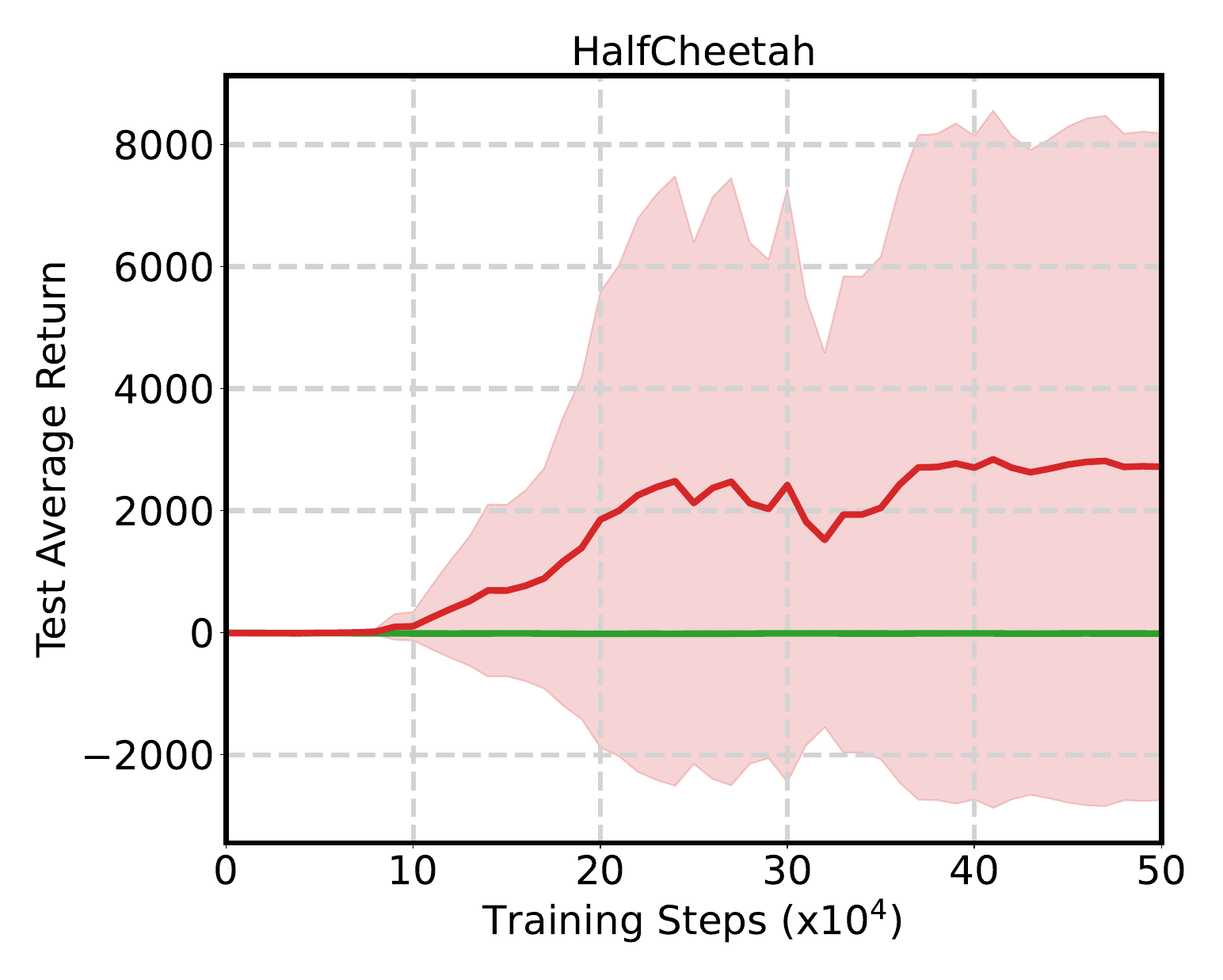}
        \caption{HalfCheetah}
        \label{fig:sen_smodice_c}
    \end{subfigure}
    \begin{subfigure}[t]{0.27\textwidth}
        \centering
        \includegraphics[width=\textwidth]{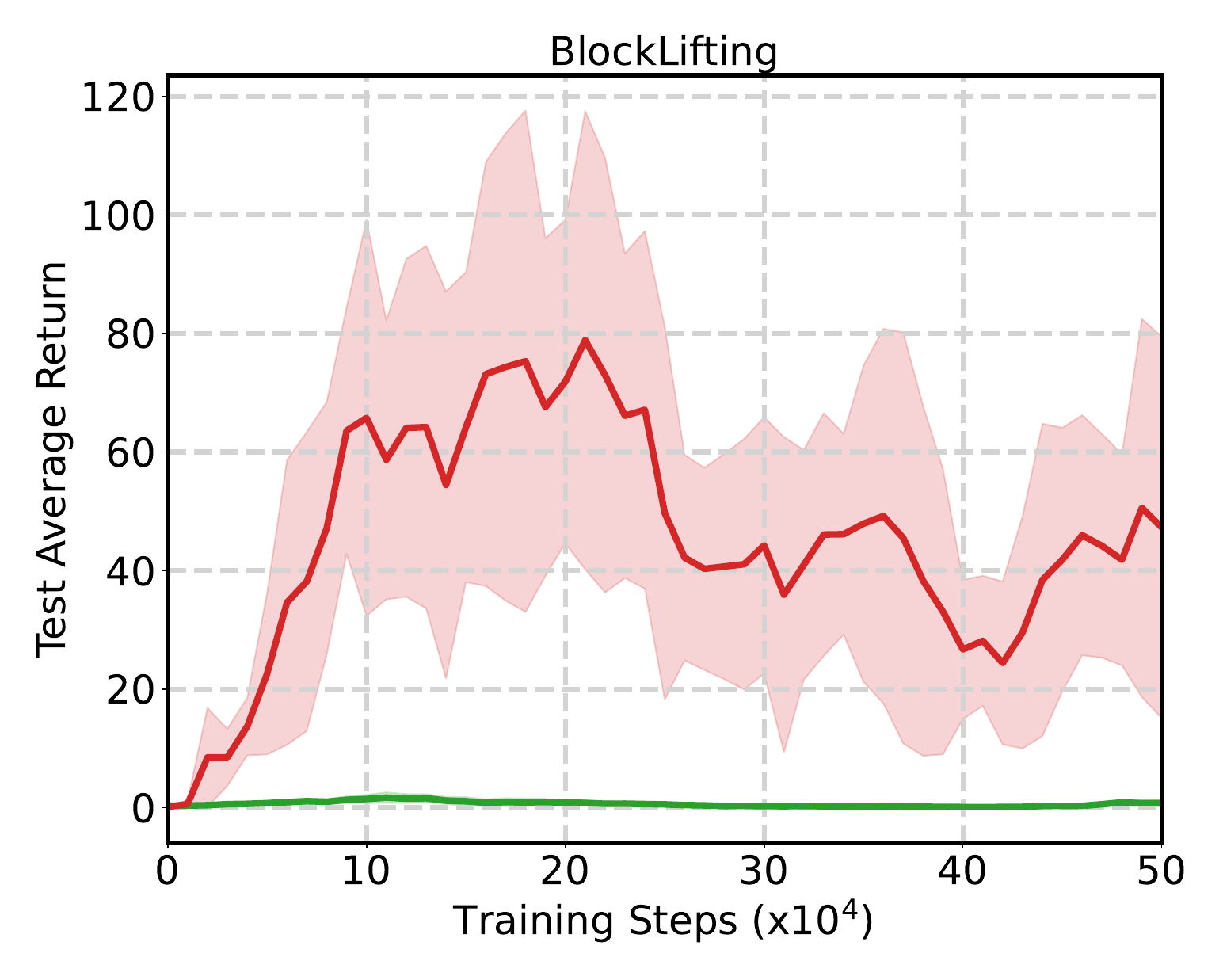}
        \caption{BlockLifting}
        \label{fig:sen_smodice_l}
    \end{subfigure}
    \begin{subfigure}[t]{0.27\textwidth}
        \centering
        \includegraphics[width=\textwidth]{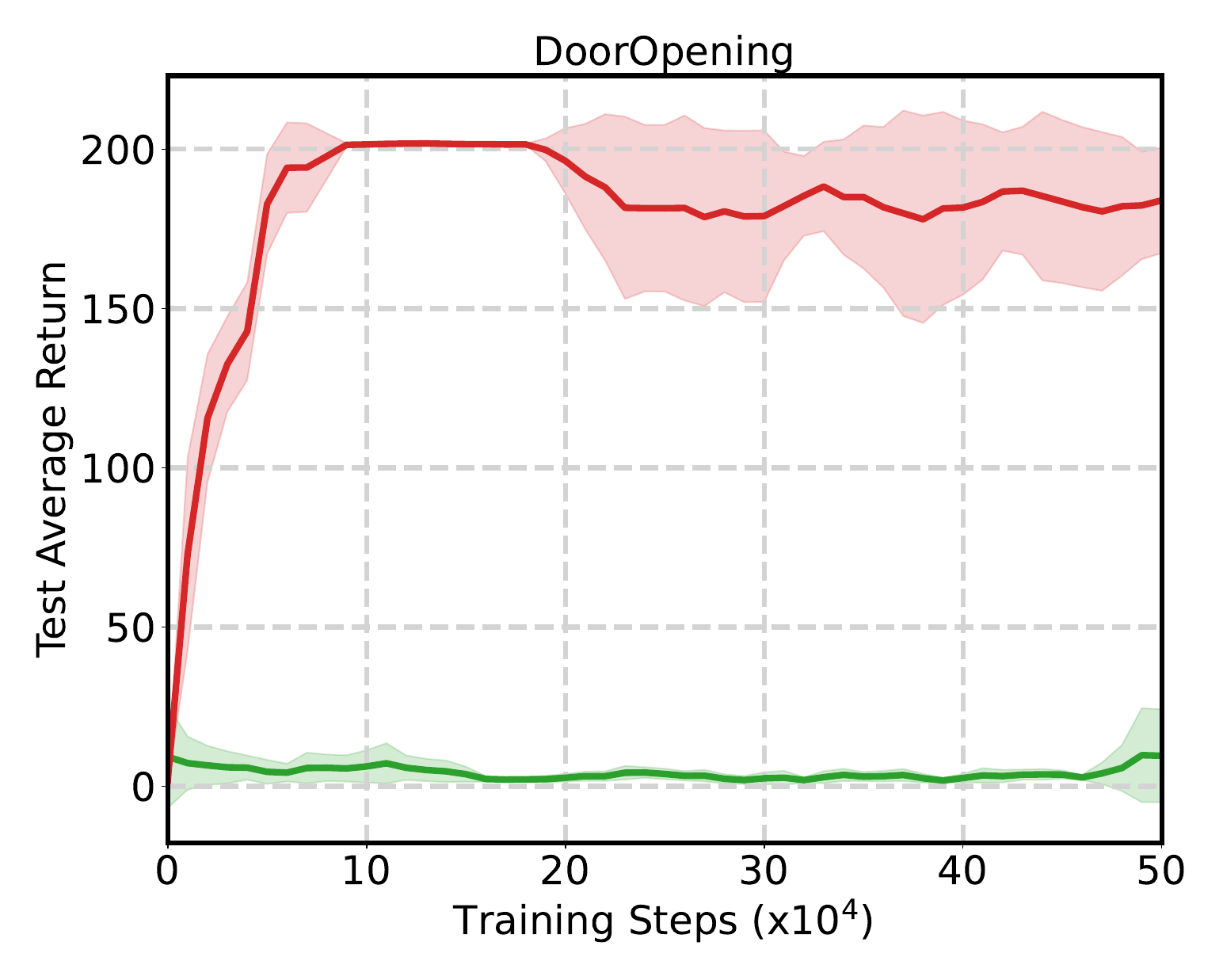}
        \caption{DoorOpening}
        \label{fig:sen_smodice_d}
    \end{subfigure}
    \begin{subfigure}[t]{0.27\textwidth}
        \centering
        \includegraphics[width=\textwidth]{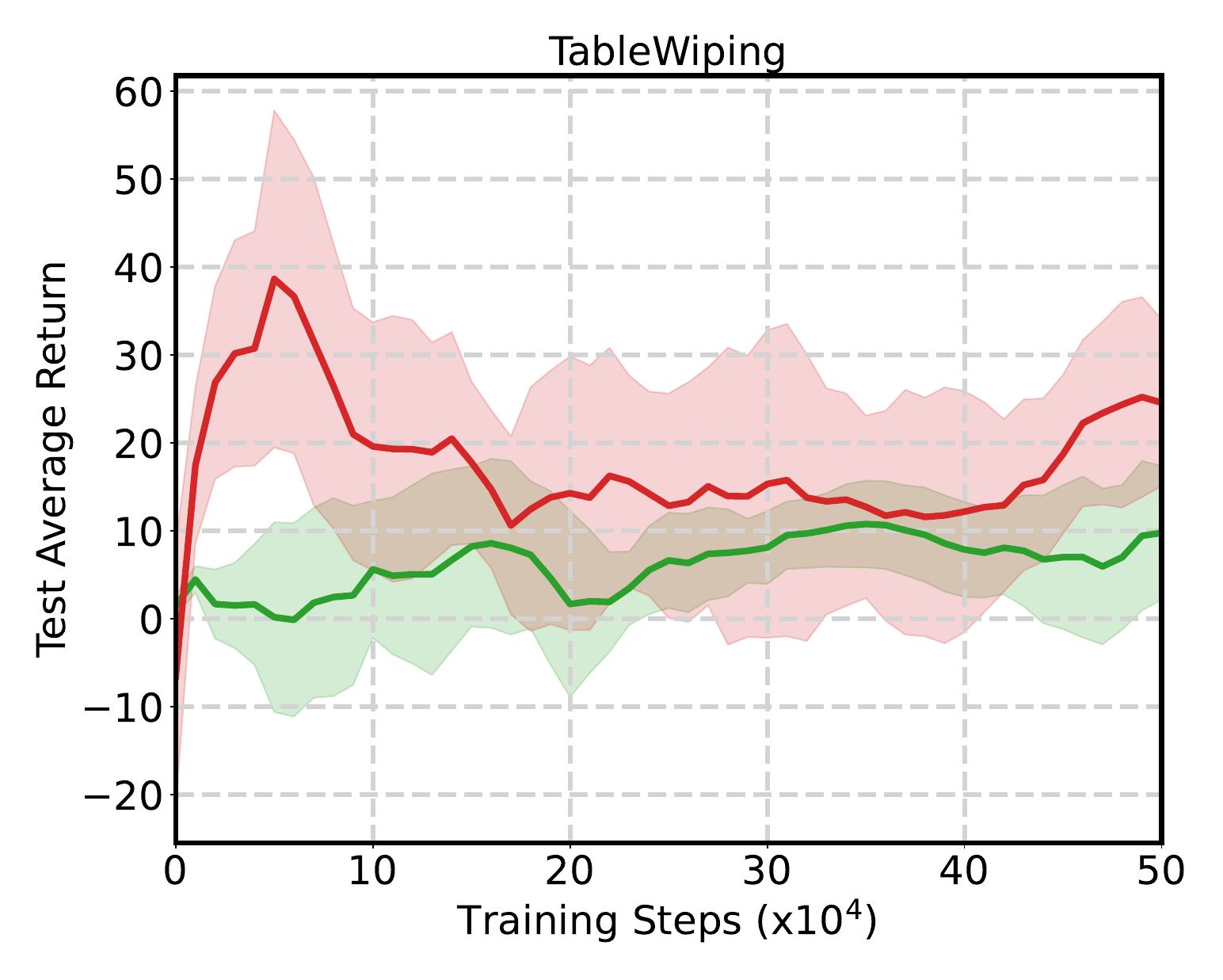}
        \caption{TableWiping}
        \label{fig:sen_smodice_t}
    \end{subfigure}
    \begin{subfigure}[t]{0.3\textwidth}
        \centering
        \includegraphics[width=\textwidth]{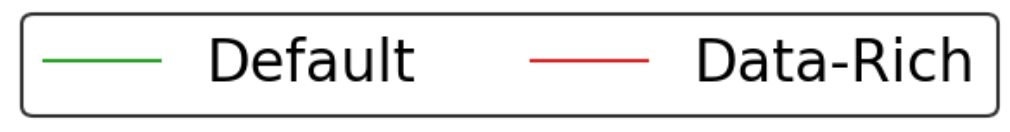}
        \label{fig:sen_smodice_bar}
    \end{subfigure}
    \caption{Effect of dataset configurations under SMODICE.}
    \label{fig: sensitivity test of SMODICE}
\end{figure}

\begin{figure}[H]
    \centering
    \begin{subfigure}[t]{0.27\textwidth}
        \centering
        \includegraphics[width=\textwidth]{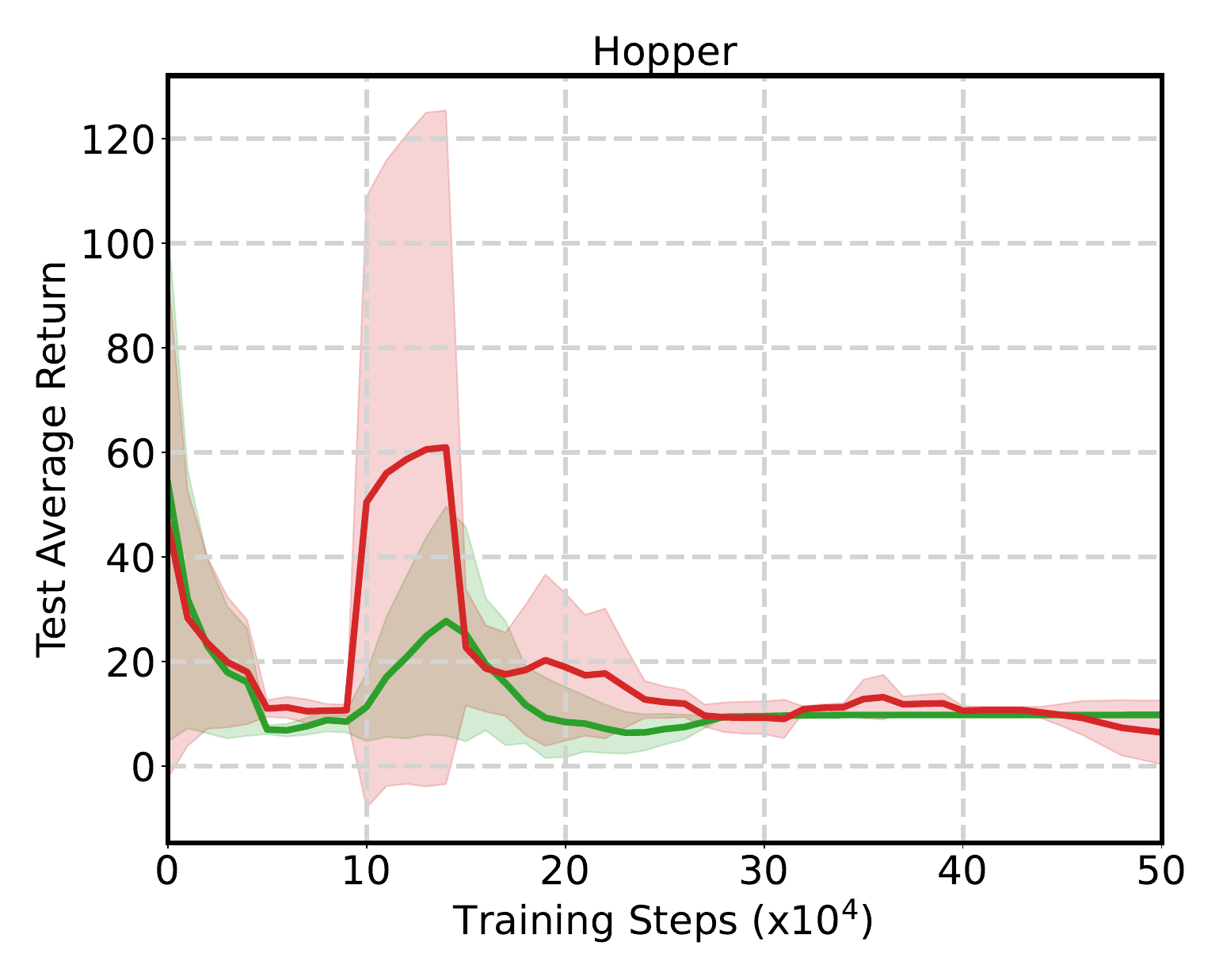}
        \caption{Hopper}
        \label{fig:sen_gwil_hp}
    \end{subfigure}
    \begin{subfigure}[t]{0.27\textwidth}
        \centering
        \includegraphics[width=\textwidth]{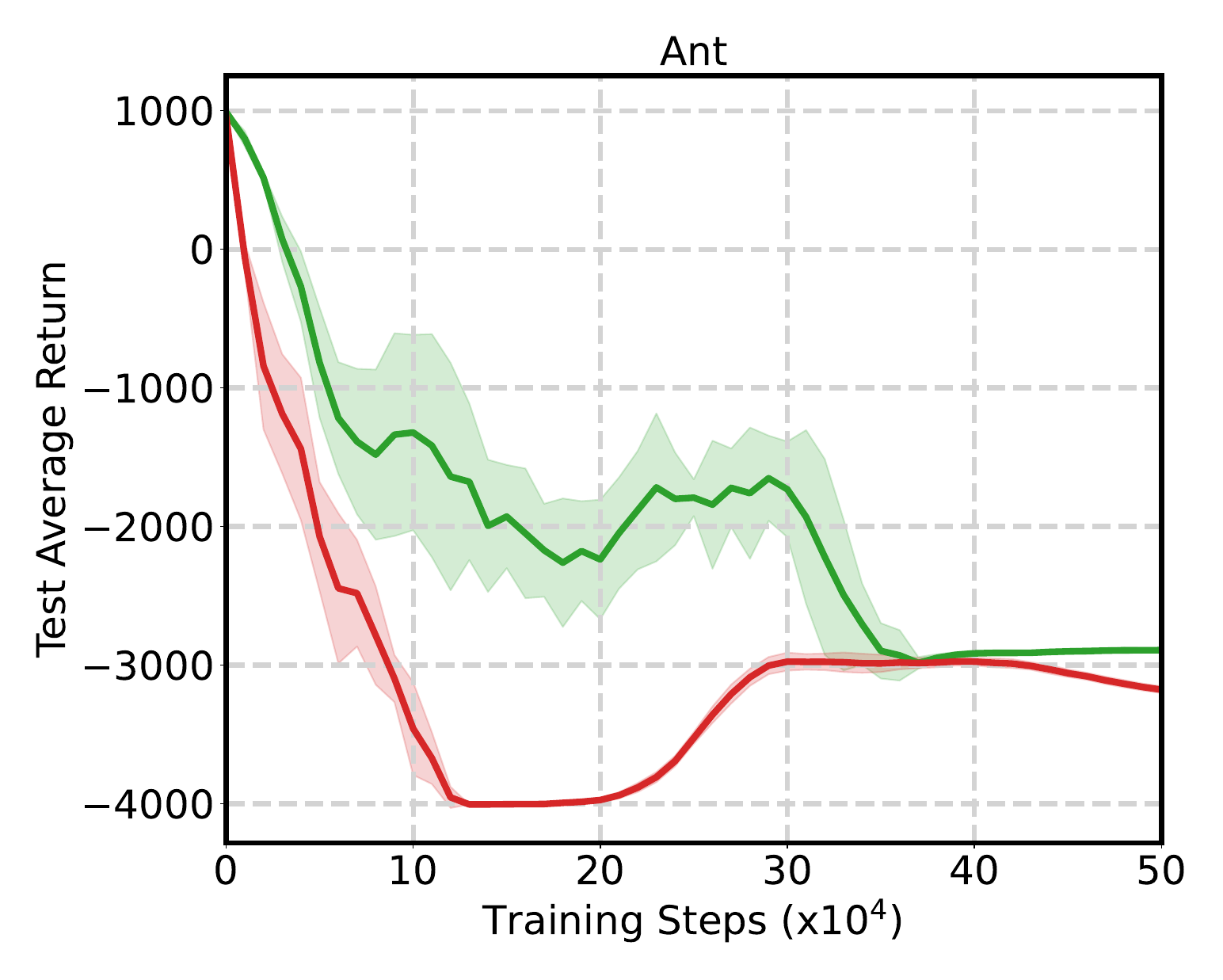}
        \caption{Ant}
        \label{fig:sen_gwil_ant}
    \end{subfigure}
    \begin{subfigure}[t]{0.27\textwidth}
        \centering
        \includegraphics[width=\textwidth]{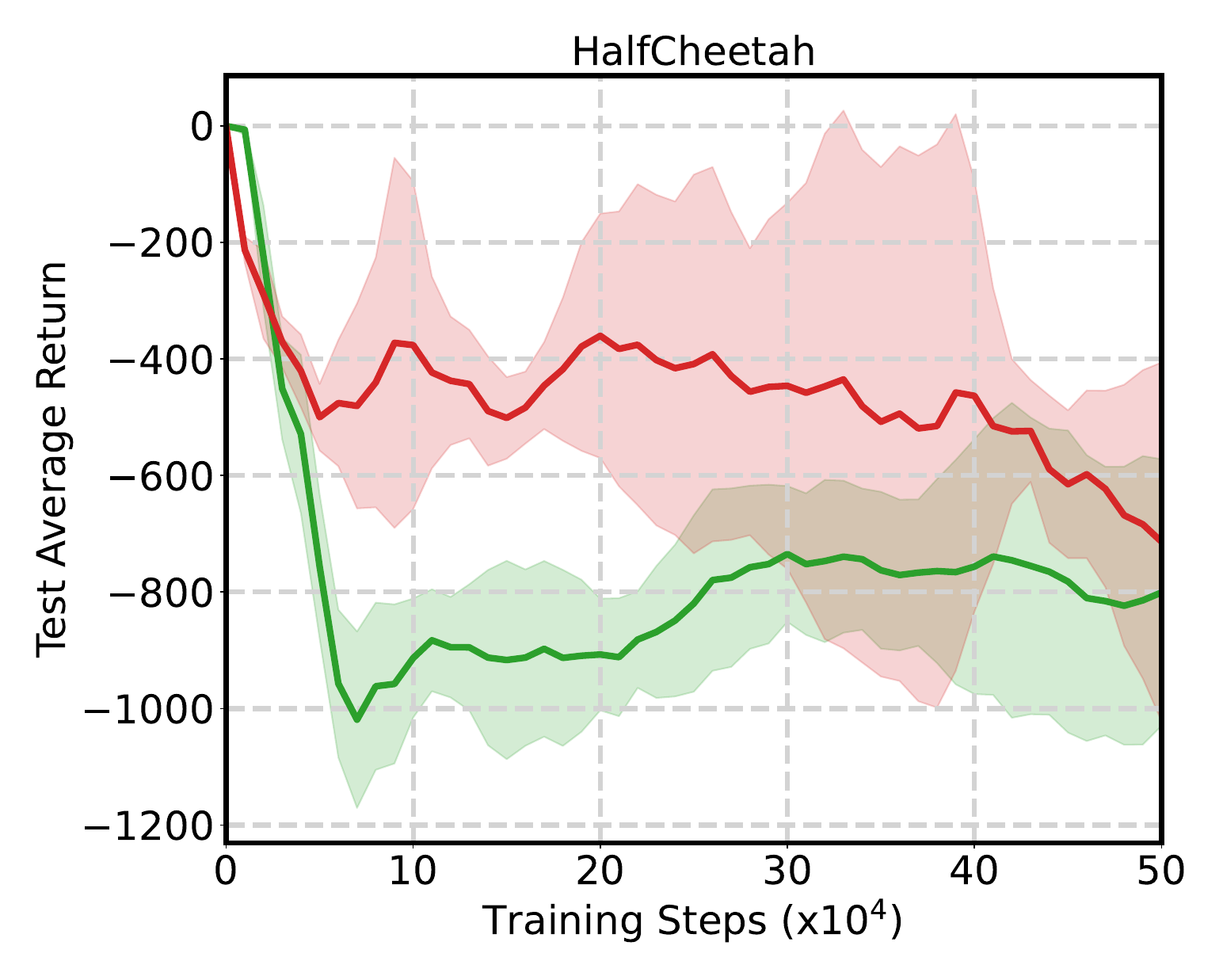}
        \caption{HalfCheetah}
        \label{fig:sen_gwil_c}
    \end{subfigure}
    \begin{subfigure}[t]{0.27\textwidth}
        \centering
        \includegraphics[width=\textwidth]{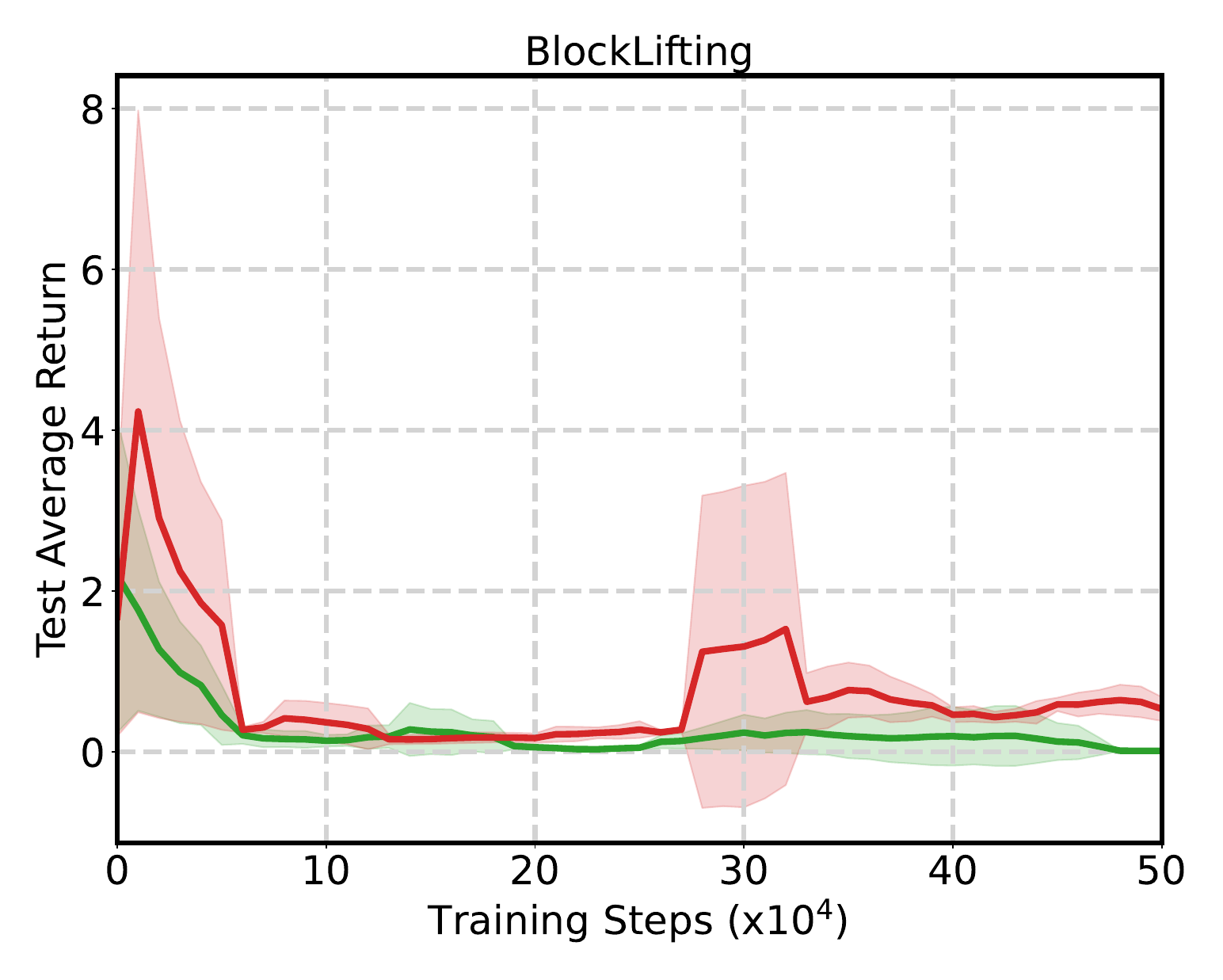}
        \caption{BlockLifting}
        \label{fig:sen_gwil_l}
    \end{subfigure}
    \begin{subfigure}[t]{0.27\textwidth}
        \centering
        \includegraphics[width=\textwidth]{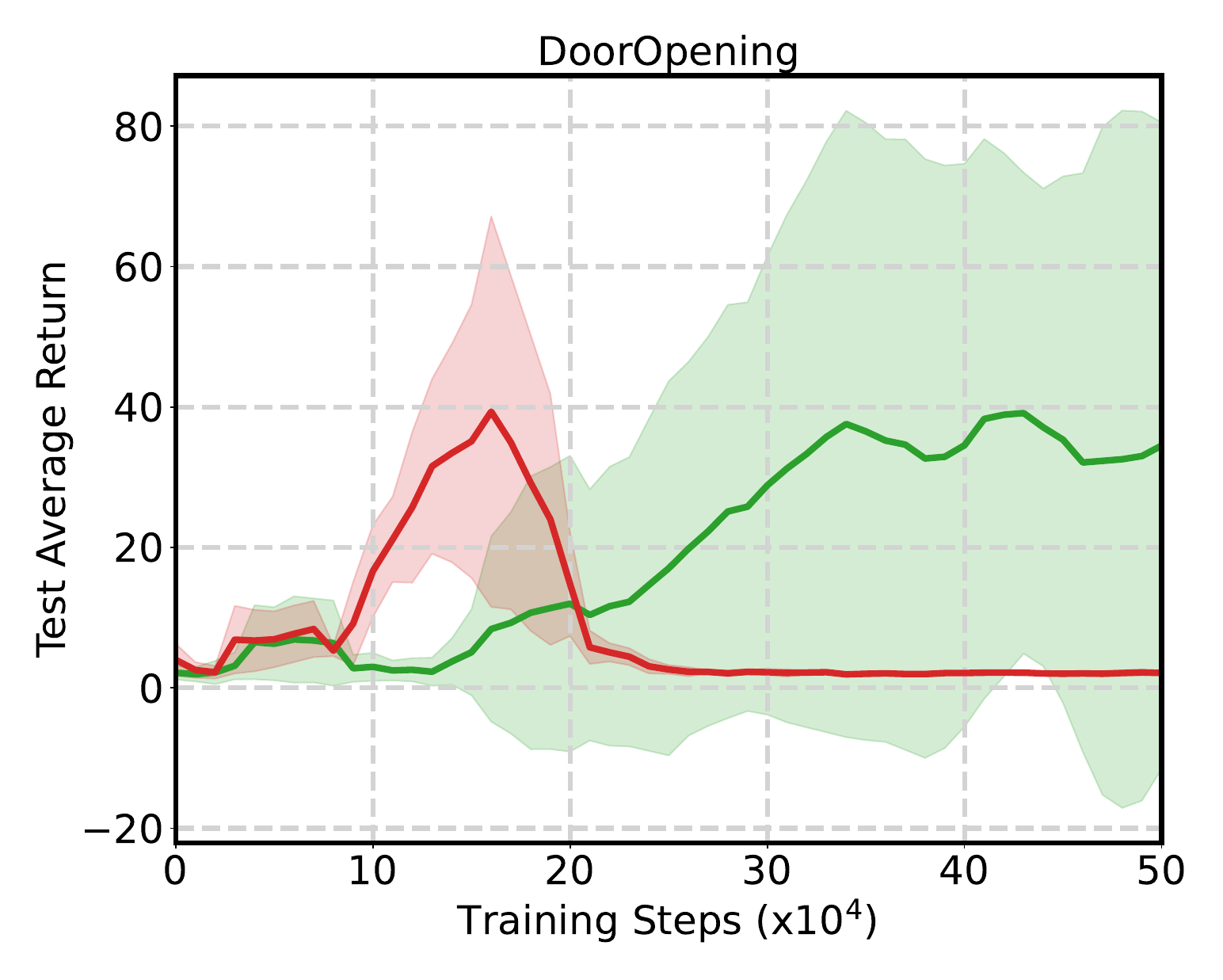}
        \caption{DoorOpening}
        \label{fig:sen_gwil_d}
    \end{subfigure}
    \begin{subfigure}[t]{0.27\textwidth}
        \centering
        \includegraphics[width=\textwidth]{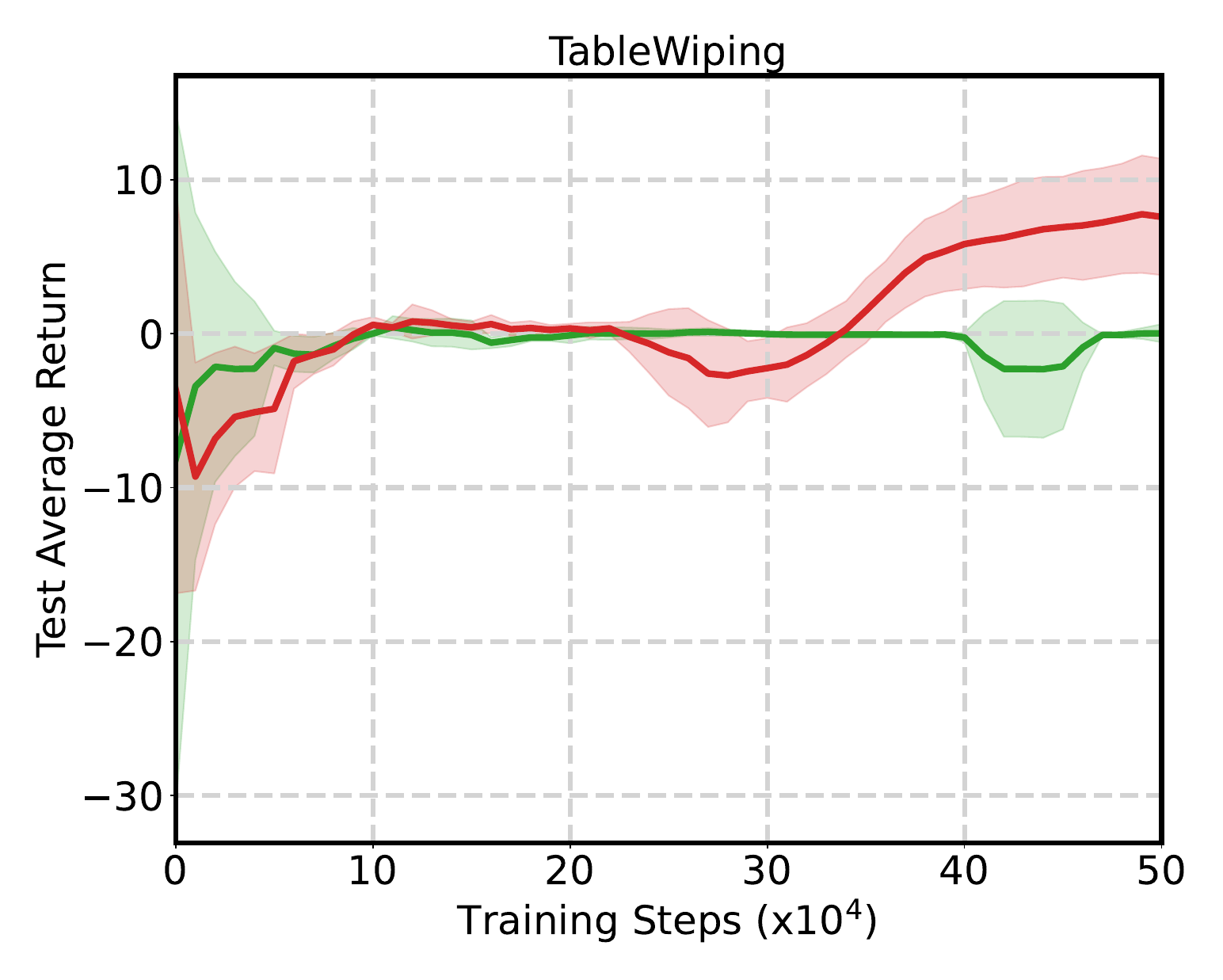}
        \caption{TableWiping}
        \label{fig:sen_gwil_h}
    \end{subfigure}
    \begin{subfigure}[t]{0.3\textwidth}
        \centering
        \includegraphics[width=\textwidth]{Figures/sensitivity/sensitivity_data_rich_bar_woexp.png}
        \label{fig:sen_gwil_bar}
    \end{subfigure}
    \caption{Effect of dataset configurations under GWIL.}
    \label{fig: sensitivity test of GWIL}
\end{figure}

\begin{figure}[H]
    \centering
    \begin{subfigure}[t]{0.27\textwidth}
        \centering
        \includegraphics[width=\textwidth]{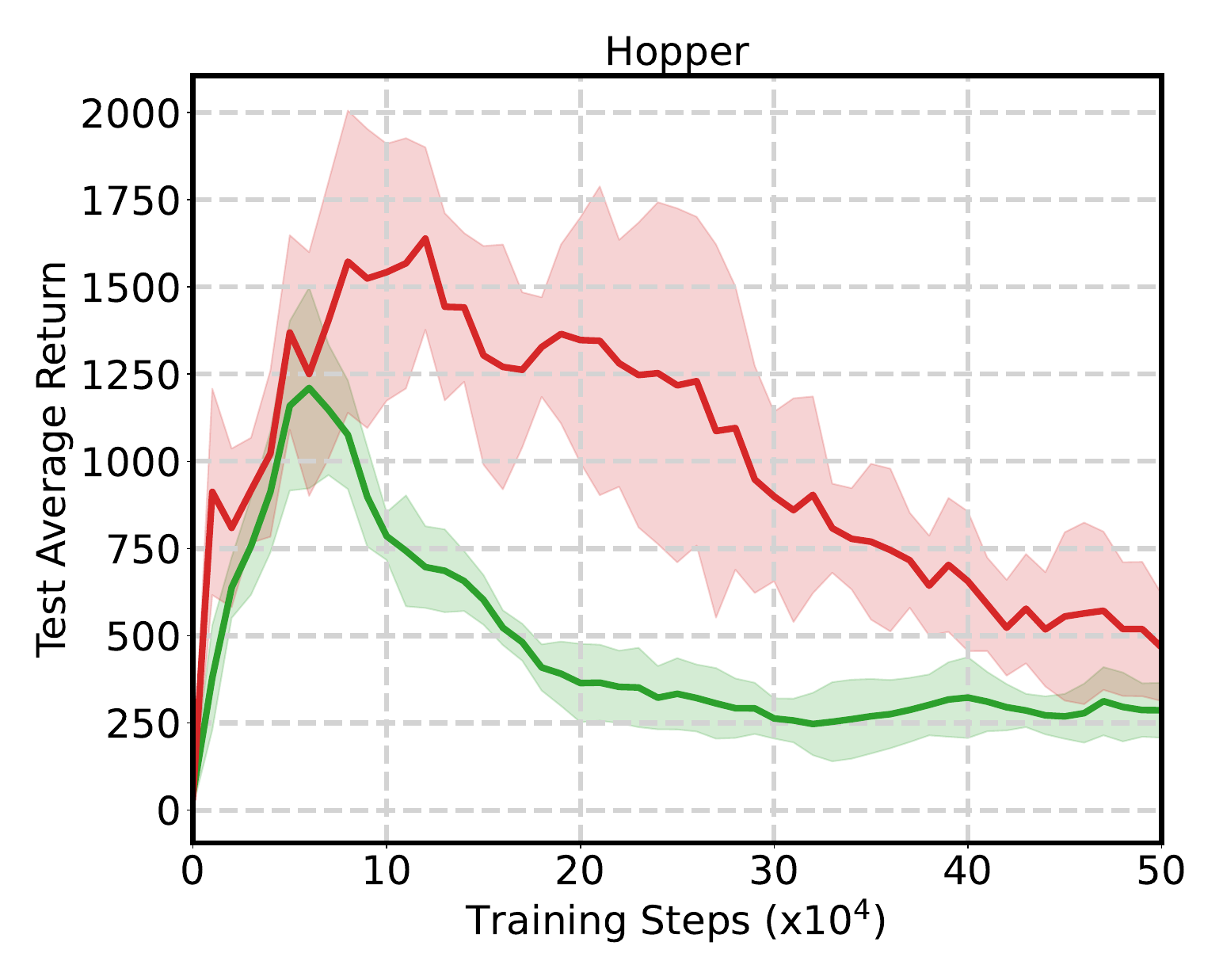}
        \caption{Hopper}
    \end{subfigure}
    \begin{subfigure}[t]{0.27\textwidth}
        \centering
        \includegraphics[width=\textwidth]{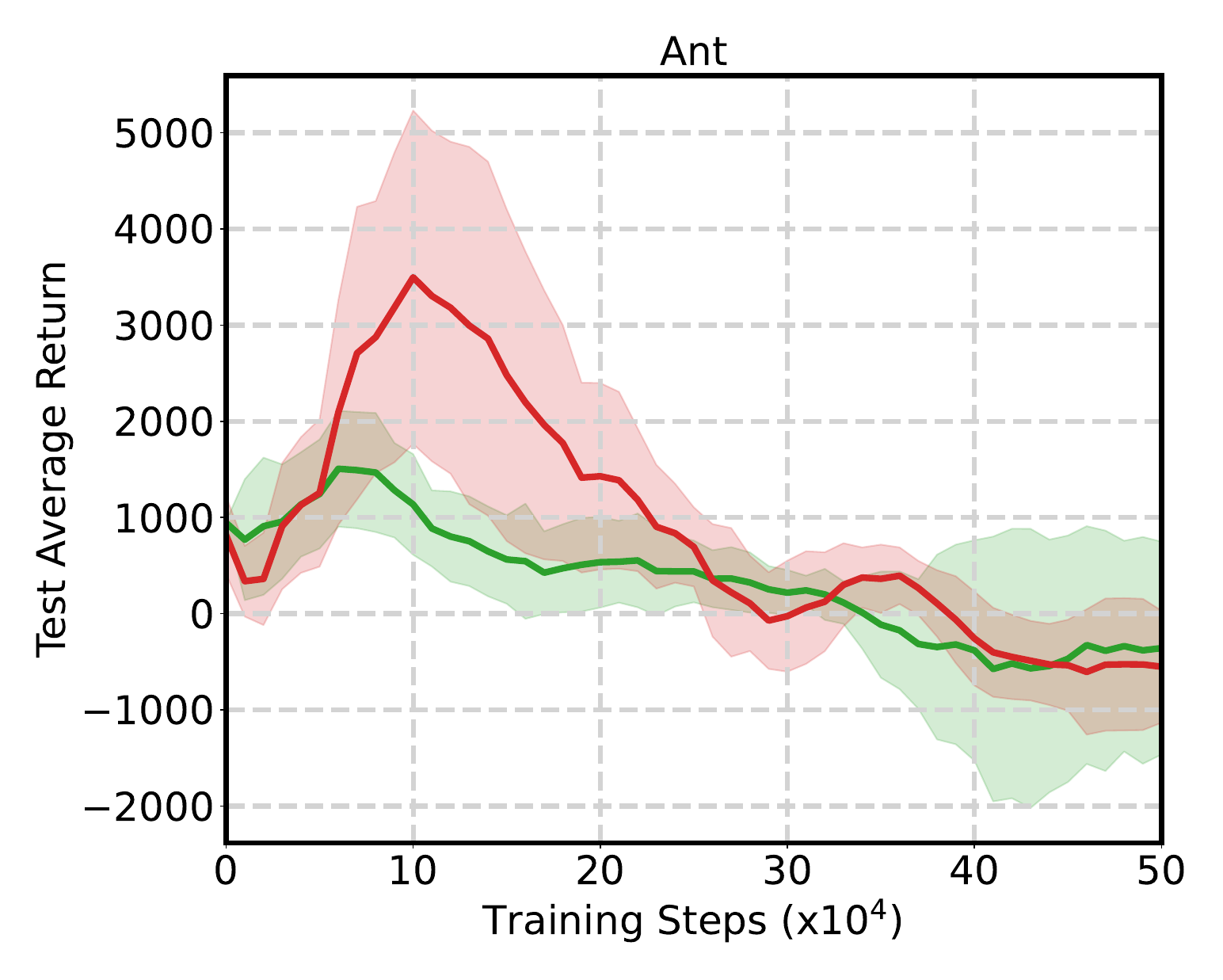}
        \caption{Ant}
        \label{fig:sen_igdf_ant}
    \end{subfigure}
    \begin{subfigure}[t]{0.27\textwidth}
        \centering
        \includegraphics[width=\textwidth]{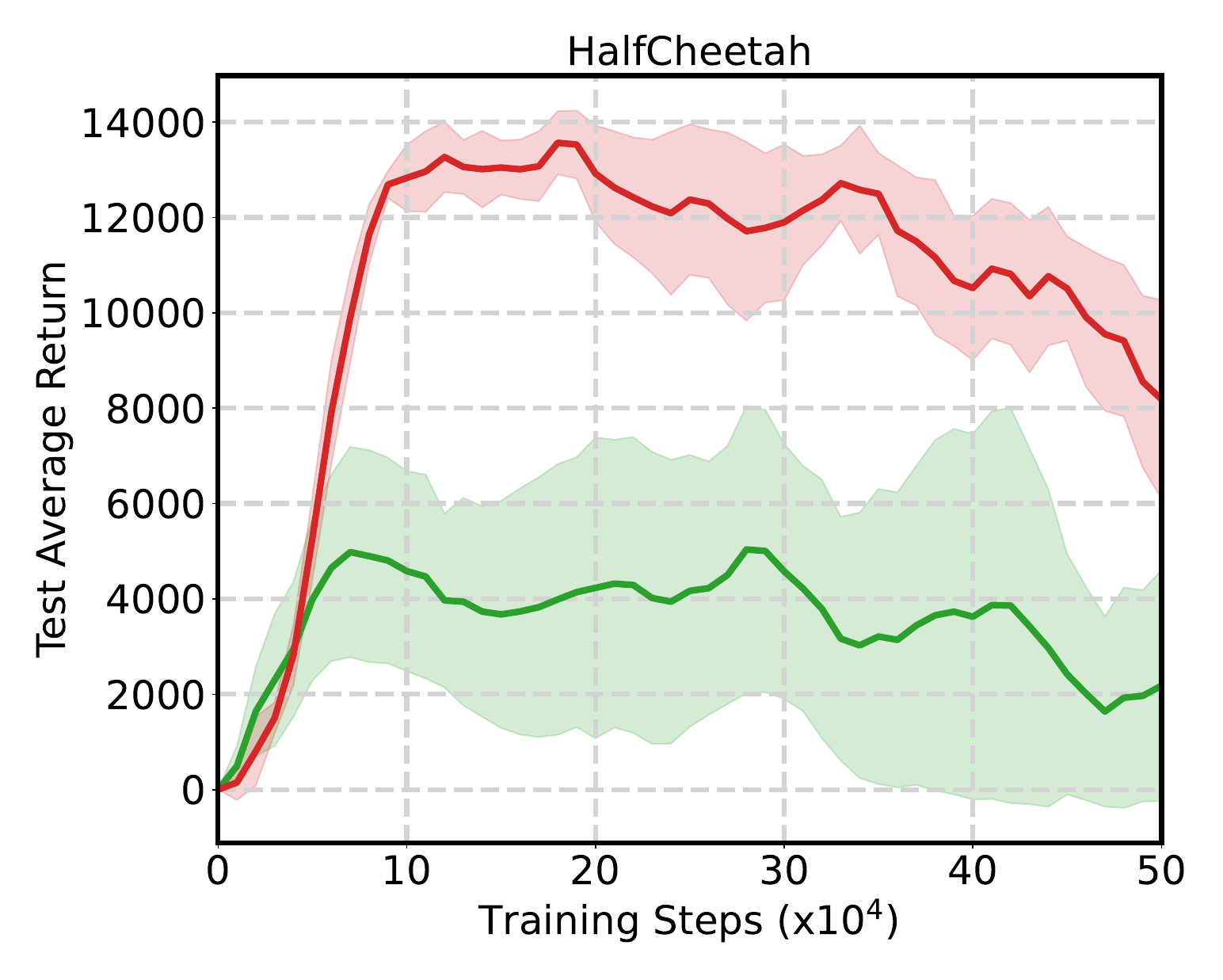}
        \caption{HalfCheetah}
    \end{subfigure}
    \begin{subfigure}[t]{0.27\textwidth}
        \centering
        \includegraphics[width=\textwidth]{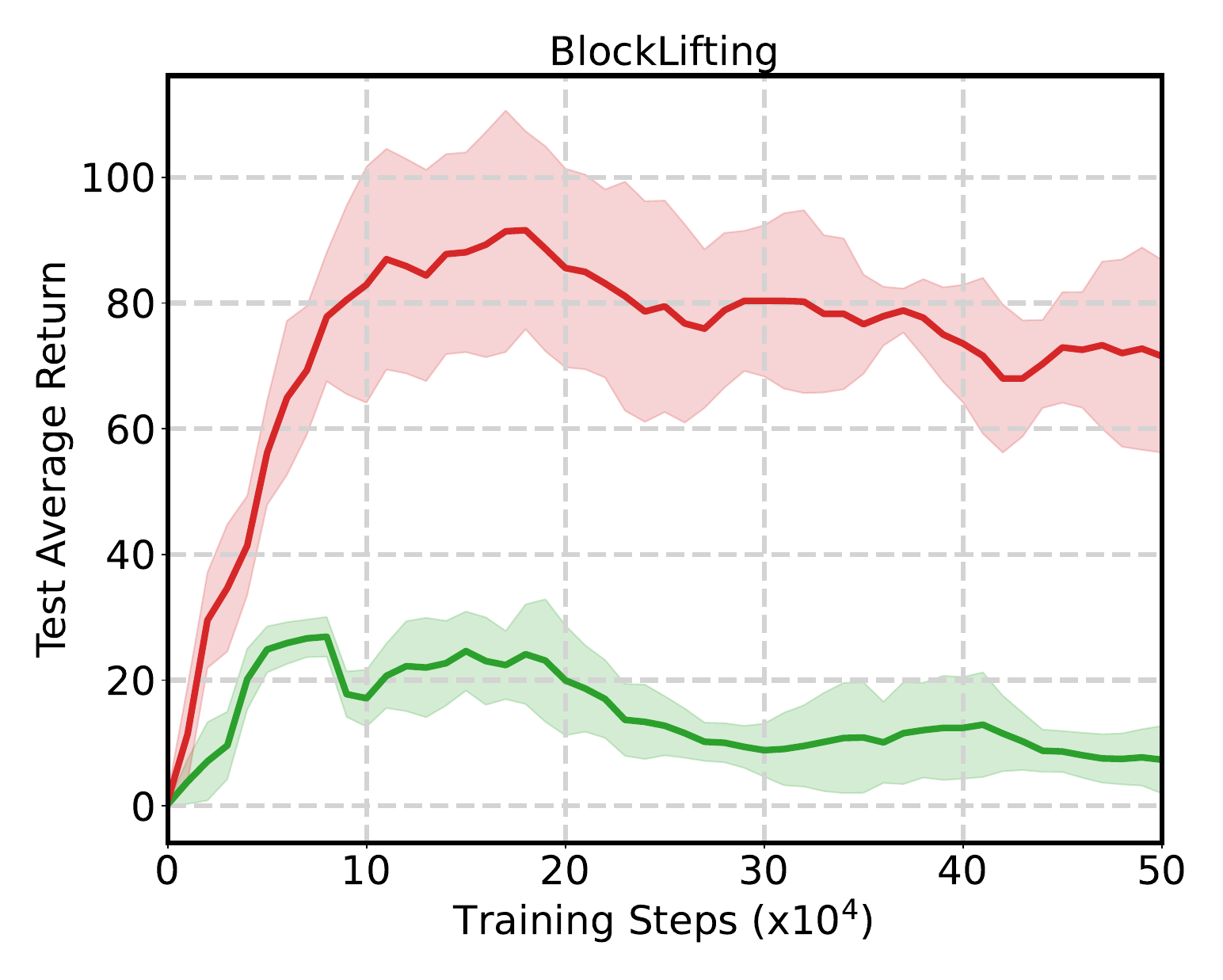}
        \caption{BlockLifting}
    \end{subfigure}
    \begin{subfigure}[t]{0.27\textwidth}
        \centering
        \includegraphics[width=\textwidth]{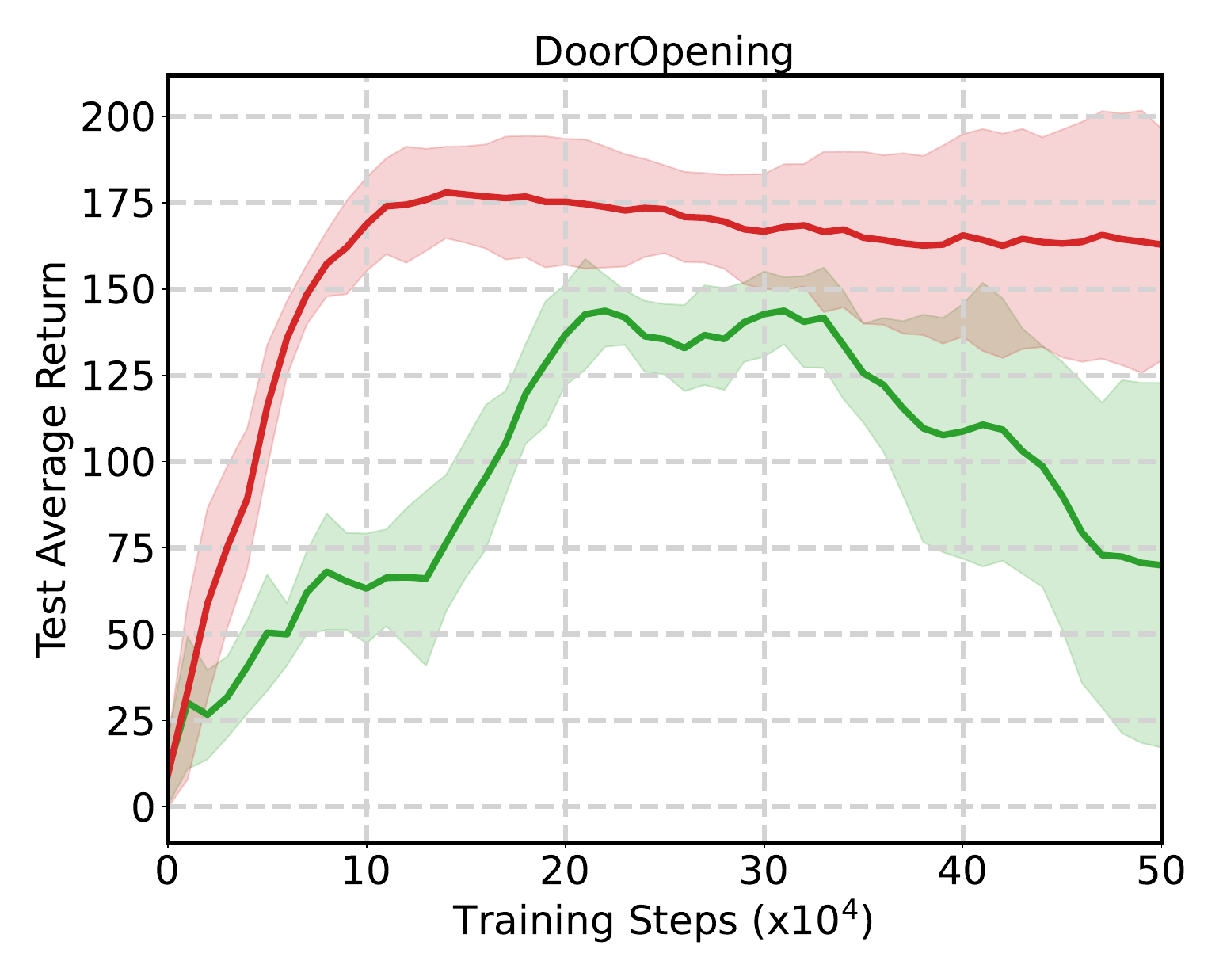}
        \caption{DoorOpening}
    \end{subfigure}
    \begin{subfigure}[t]{0.27\textwidth}
        \centering
        \includegraphics[width=\textwidth]{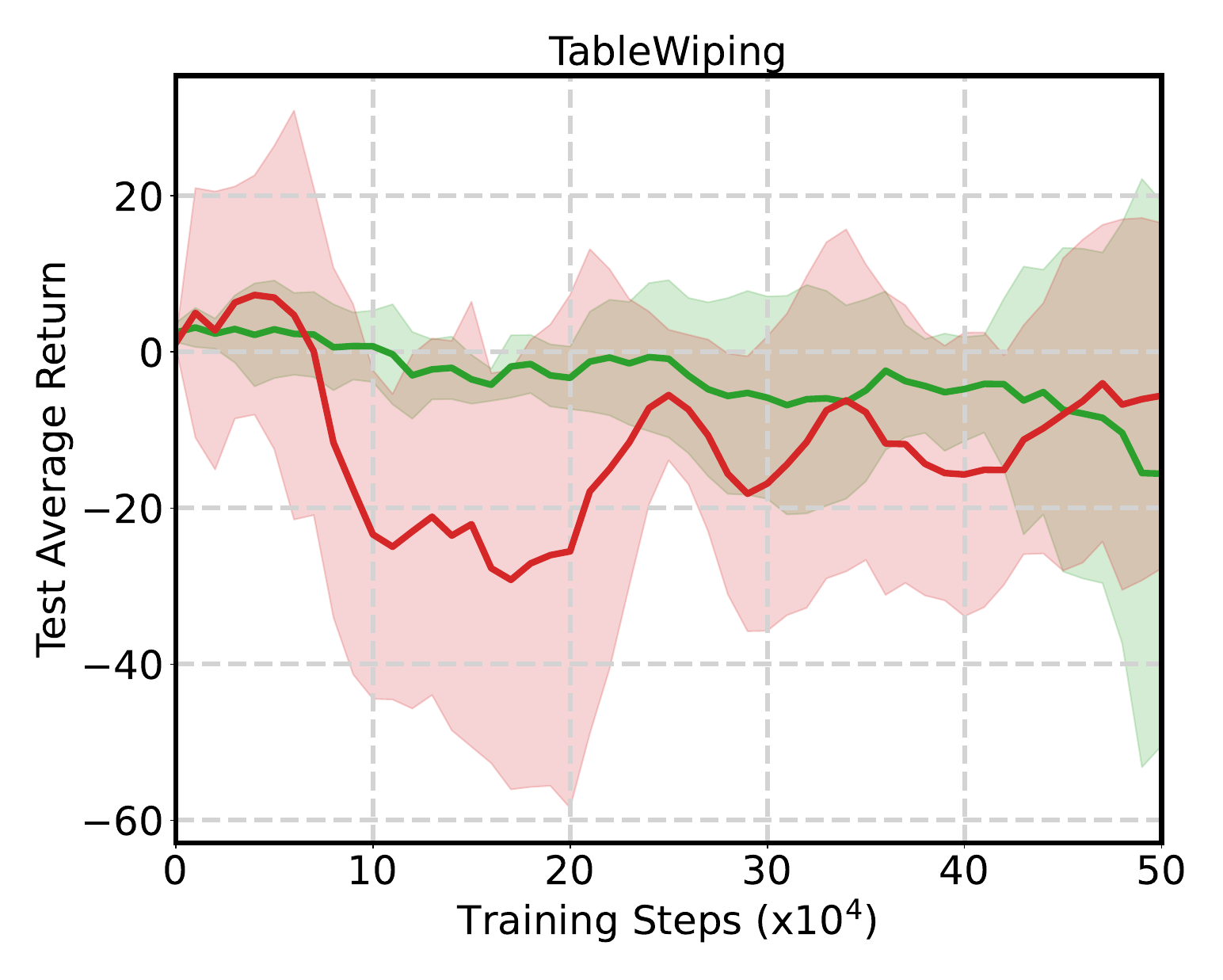}
        \caption{TableWiping}
        \label{fig:sen_igdf_t}
    \end{subfigure}
    \begin{subfigure}[t]{0.3\textwidth}
        \centering
        \includegraphics[width=\textwidth]{Figures/sensitivity/sensitivity_data_rich_bar_woexp.png}
    \end{subfigure}
    \caption{Effect of dataset configurations under IGDF+IQ-Learn.}
    \label{fig: sensitivity test of IGDF+IQ-Learn}
\end{figure}


\begin{table}[H]
\centering
\caption{Experimental results of other baseline methods in \texttt{Data Rich} setting.}
\scalebox{0.9}{
\begin{tabular}{l | c c c}
\toprule
\textbf{Environment} & SMODICE & GWIL & IGDF+IQLearn \\
\midrule
Hopper       & $1996.0\pm219.1$ & $6.4\pm6.1$ & $466.3\pm153.3$ \\
\pch{Ant}    &\pch{$5467.9\pm2673.1$} & \pch{$-3177.1\pm1559.0$}   & \pch{$-551.6\pm582.7$} \\
HalfCheetah  & $2718.7\pm5468.2$ & $-713.8\pm307.8$ & $8184.5\pm2083.3$ \\
BlockLifting & $47.3\pm32.2$ & $0.5\pm0.1$ & $71.5\pm15.3$ \\
DoorOpening  & $184.0\pm16.6$ & $2.1\pm0.3$ & $162.8\pm33.6$ \\
\pch{TableWiping}  & \pch{$24.6\pm9.5$}  & \pch{$7.6\pm3.8$}  & \pch{$-5.6\pm22.1$} \\
\bottomrule
\end{tabular}}
\label{tab:sensitivity other baselines}
\end{table}

\begin{table}[H]
\centering
\caption{Experimental results of other baseline methods in \texttt{Default} setting.}
\scalebox{0.9}{
\begin{tabular}{l | c c c}
\toprule
\textbf{Environment} & SMODICE & GWIL & IGDF+IQLearn \\
\midrule
Hopper       & $2046.6\pm53.3$ & $9.8\pm0.03$ & $286.6\pm78.6$ \\
\pch{Ant}  &\pch{$86.3\pm30.6$} & \pch{$-2891.6\pm17.4$}   & \pch{$-358.3\pm1106.6$} \\
HalfCheetah  & $-15.1\pm42.5$ & $-800.7\pm228.8$ & $2179.5\pm2422.1$ \\
BlockLifting & $0.8\pm0.6$ & $0.01\pm0.001$ & $7.3\pm5.4$ \\
DoorOpening  & $9.6\pm14.6$ & $34.5\pm46.0$ & $70.0\pm52.8$ \\
\pch{TableWiping}  & \pch{$9.8\pm7.6$}  & \pch{$0.03\pm0.6$}  & \pch{$-15.5\pm34.9$} \\
\bottomrule
\end{tabular}}
\label{tab:sensitivity default other baselines}
\end{table}

\subsection{Computational Resources and Hyperparameters}
\label{app: Hyperparameters}

All experiments were conducted on a high-performance computing cluster. 
The training jobs were executed on a server equipped with an Intel Xeon Gold 6154 CPU (3.0 GHz, 36 cores) 
and eight NVIDIA Tesla V100 GPUs, each with 32 GB of memory. 
The software environment was based on CUDA version 12.9 to ensure compatibility with the GPU architecture. 
This computational setup provided sufficient resources to support large-batch training and 
efficient parallelization across multiple GPUs, which was essential for handling the complexity of our experiments.

\begin{table}[H]
\centering
\caption{Hyperparameters used in our experiments.}
\label{tab:hyperparams}
\scalebox{0.9}{
\begin{tabular}{l l}
\toprule
\textbf{Component} & \textbf{Setting} \\
\midrule
\pch{$\gamma$ (discount factor)}     & \pch{0.99} \\
\pch{$\alpha$ (regularization coefficient)} & \pch{0.05} \\
Policy hidden dim          & (256, 256) \\ 
Critic hidden dim          & (256, 256) \\ 
Cost hidden dim            & (256, 256) \\
Batch size                 & 512 \\
Optimizer                  & Adam \\
Learning rate              & $\{3\times10^{-4},\, 1\times10^{-4}\}$\\
Gradient penalty coeffs & (0.1, 1$\times10^{-4}$) \\
Actor L2 penalty        & 1$\times10^{-2}$ \\
Mapping networks (decoder / action decoder)  & (256, 256), Adam, 3$\times10^{-4}$ \\
\bottomrule
\end{tabular}}
\end{table}

\subsection{Implementation of Baseline Methods}
\label{app: Implementations of each baselines}

In this subsection, we provide the detailed implementation of the baseline methods considered in our experiments as follows:

\paragraph{DemoDICE.}
DemoDICE~\citep{kim2022demodice} is an offline imitation learning algorithm that directly optimizes a density ratio estimator to recover the pseudo reward function. It has been shown to be effective in settings with limited demonstrations by exploiting unlabeled trajectories. 
We used the official PyTorch implementation from \url{https://github.com/geon-hyeong/imitation-dice}. 

\paragraph{SMODICE.}
SMODICE~\citep{ma2022versatile} is an offline imitation learning algorithm that performs state-occupancy matching via distribution correction estimation, suitable for cross-domain transfers. We used the official PyTorch implementation from \url{https://github.com/JasonMa2016/SMODICE}. 
To adapt SMODICE for our CDIL experiments, we trained the discriminator using expert observations from source domain and offline dataset from target domain to estimate state occupancy distributions. The policy was trained using target-domain imperfect demonstrations, as these provide diverse behavior for robust learning. Contrary to the original paper’s assumption of identical state-action spaces, we applied zero-padding or truncation to align state-action vectors and ensure dimensional consistency.

\paragraph{GWIL.}
GWIL~\citep{fickinger2022crossdomain} is a cross-domain imitation learning method that employs Gromov-Wasserstein optimal transport to align behavioral distributions between source (expert) and target (learner) domains, enabling transfer without requiring paired data or dimension matching.
We used the official PyTorch implementation from \url{https://github.com/facebookresearch/gwil}. Originally designed for online settings, we adapted GWIL to offline CDIL by replacing the online replay buffer with an offline dataset of imperfect demonstrations. Consistent with the original paper, which uses only 1 source demonstration for training, we also employed 1 source domain expert trajectory in our experiments

\paragraph{IGDF+IQ-Learn.}
IGDF~\citep{DBLP:conf/icml/WenBXYZ0024} is a cross-domain offline reinforcement learning method that uses contrastive learning to learn invariant state-action representations for data filtering across domains. We adapted it to offline imitation learning (IL) by integrating IQ-Learn~\citep{garg2021iq}, which optimizes policies directly from demonstrations without adversarial training.
We used the official PyTorch implementations from \url{https://github.com/BattleWen/IGDF} (IGDF) and \url{https://github.com/Div-Infinity/IQ-Learn} (IQ-Learn). In the encoder phase, we trained a contrastive encoder using imperfect demonstrations from both source and target domains. In the IL phase, we filtered high-quality source domain expert trajectories (\(\xi = 0.5\), selecting the top-50\% based on representation similarity) and used target domain expert demonstrations to train the policy via IQ-Learn. Despite the paper’s assumption of identical state-action spaces, we applied zero-padding and truncation to align dimensions.

\pch{
\subsection{Additional Experimental Results}
}
\label{app: Experiments Results of Rebuttal}
\pch{
\textbf{Cross-domain transfer under domain shifts in transition dynamics.} To evaluate domain shifts with mismatches in transition dynamics, we conducted additional experiments using the Off-Dynamics Reinforcement Learning (ODRL) benchmark proposed by \citep{lyu2024odrl}.
This benchmark is explicitly designed to assess robustness under dynamics mismatches. In our experiments, we evaluated AdaptDICE in target domains with modified transition dynamics, including Hopper with gravity scaled to twice the original value and Ant with ground friction reduced to 0.1x the original.
The results (\Cref{fig: off-dynamic experiments}) show that AdaptDICE consistently outperforms baseline methods under these dynamics shifts.
}

\begin{figure}[H]
    \centering
    \begin{subfigure}[t]{0.27\textwidth}
        \centering
        \includegraphics[width=\textwidth]{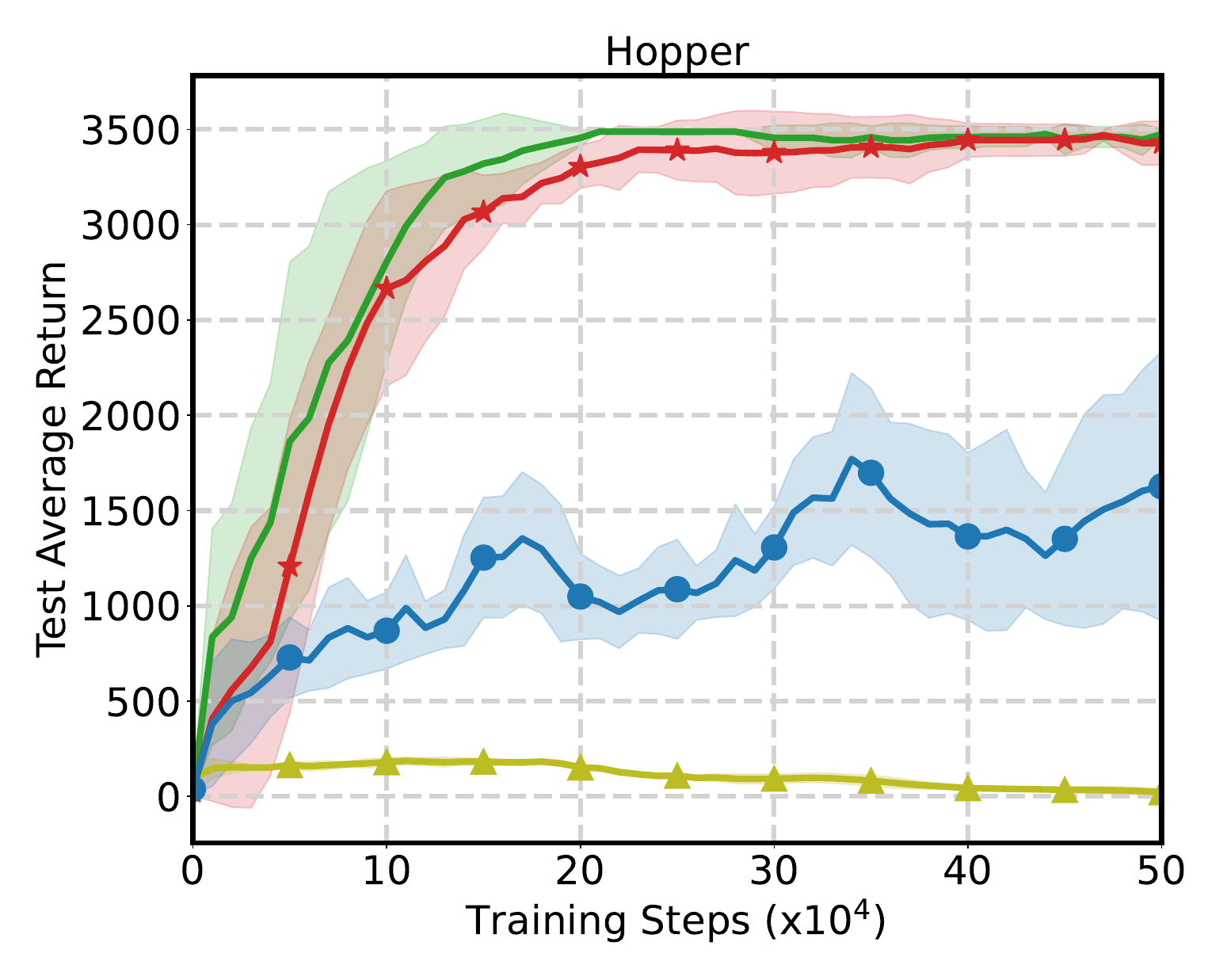}
        \caption{Hopper}
        \label{fig:offdyn hopper}
    \end{subfigure}
    \begin{subfigure}[t]{0.27\textwidth}
        \centering
        \includegraphics[width=\textwidth]{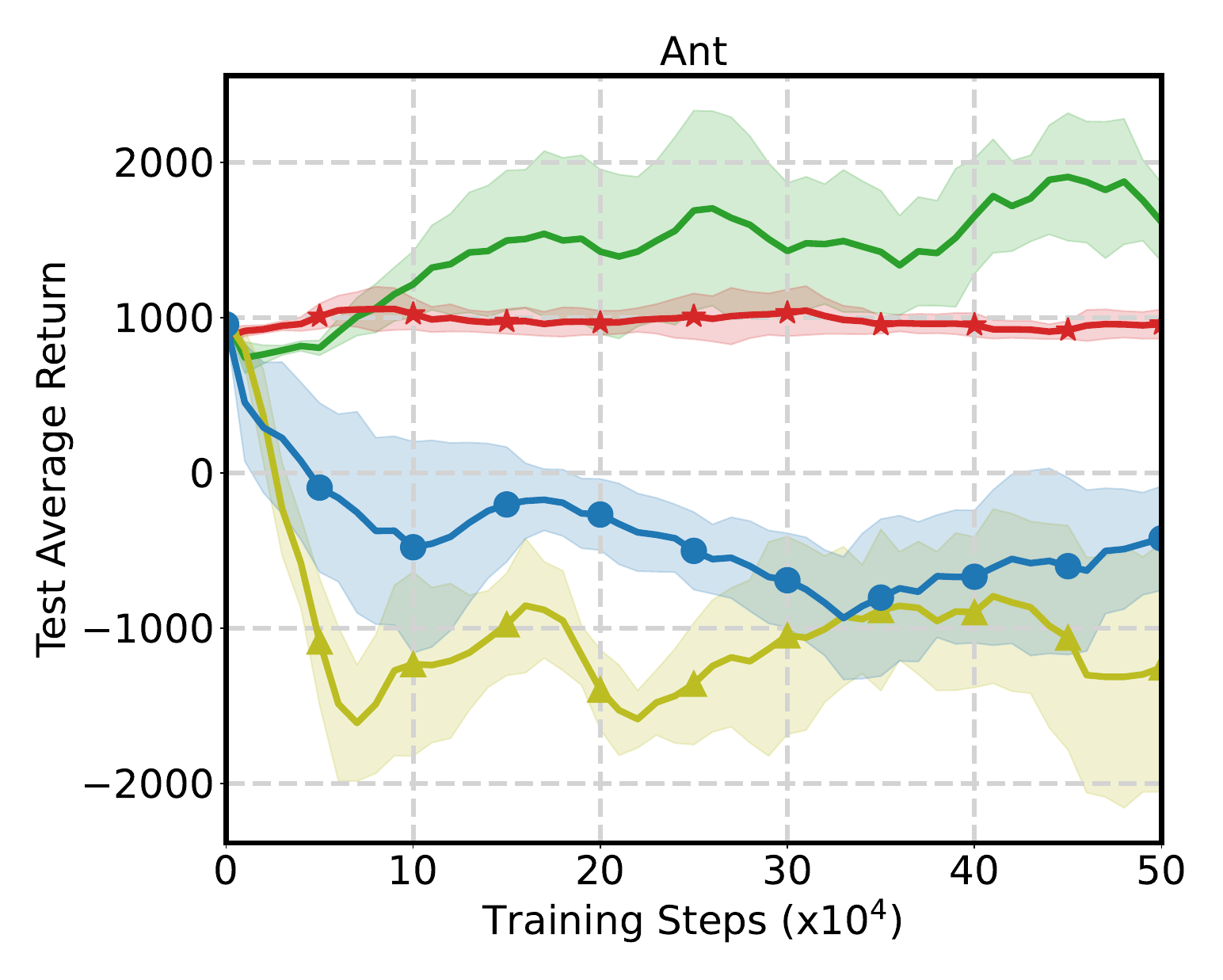}
        \caption{Ant}
        \label{fig:offdyn ant}
    \end{subfigure}
    \begin{subfigure}[t]{0.6\textwidth}
        \centering
        \includegraphics[width=\textwidth]{Figures/main_experiment/main_bar_woexp.png}
        \label{fig:offdyn bar}
    \end{subfigure}
    \caption{\pch{Experiments on Hopper and Ant in the off-dynamics settings.}}
    \label{fig: off-dynamic experiments}
\end{figure}

\pch{
\textbf{Cross-domain transfer from only one source-domain expert trajectory.} In \Cref{fig: adaptdice vs gwil}, we evaluated AdaptDICE using only a single expert trajectory from the source domain. Notably, even in this low-data regime, AdaptDICE achieves competitive performance relative to GWIL, highlighting that its gains are not solely attributable to access to larger source datasets.}

\begin{figure}[!h]
    \centering
    \begin{subfigure}[t]{0.26\textwidth}
        \centering
        \includegraphics[width=\textwidth]{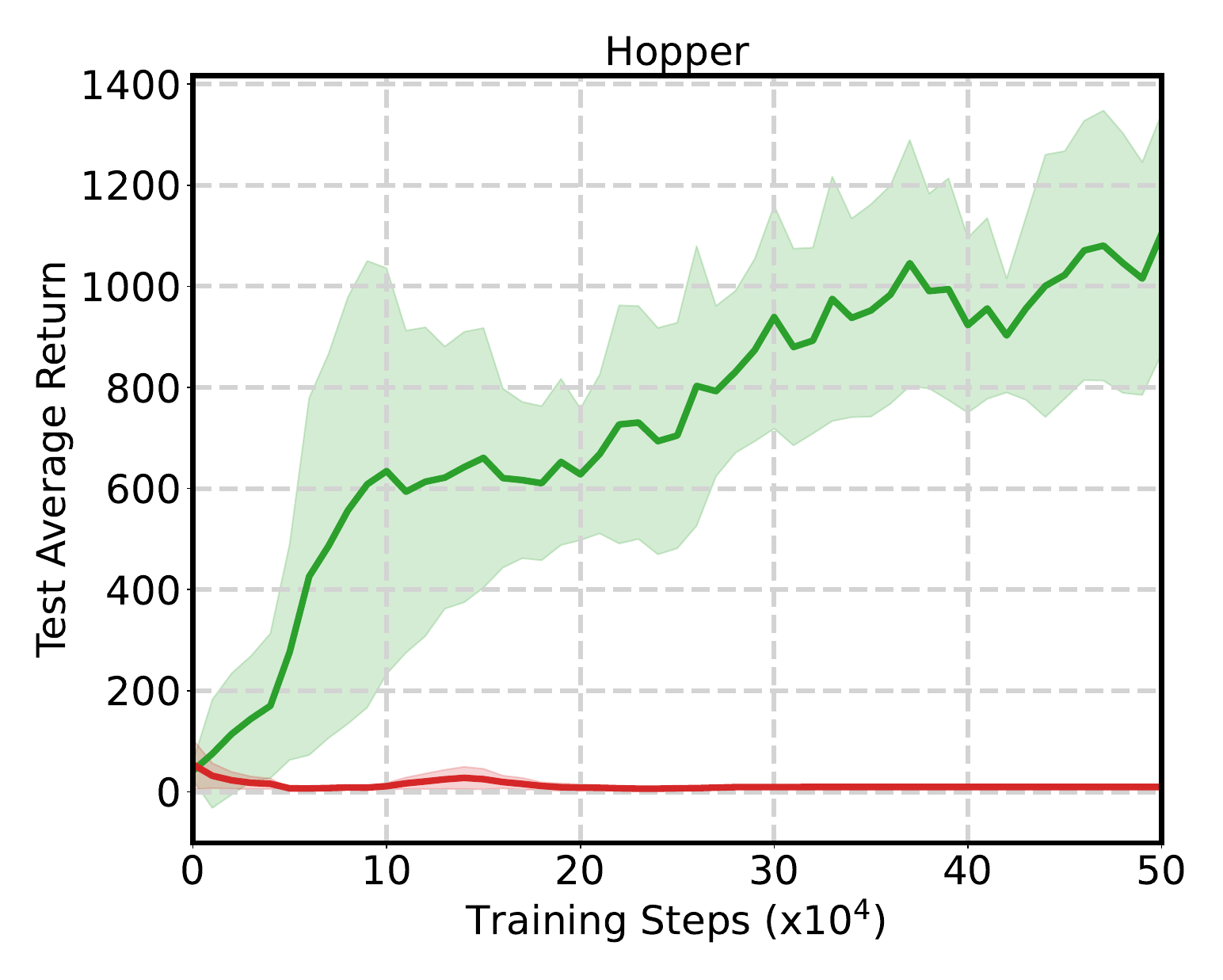}
        \caption{Hopper}
        \label{fig:ag hopper}
    \end{subfigure}
    \begin{subfigure}[t]{0.26\textwidth}
        \centering
        \includegraphics[width=\textwidth]{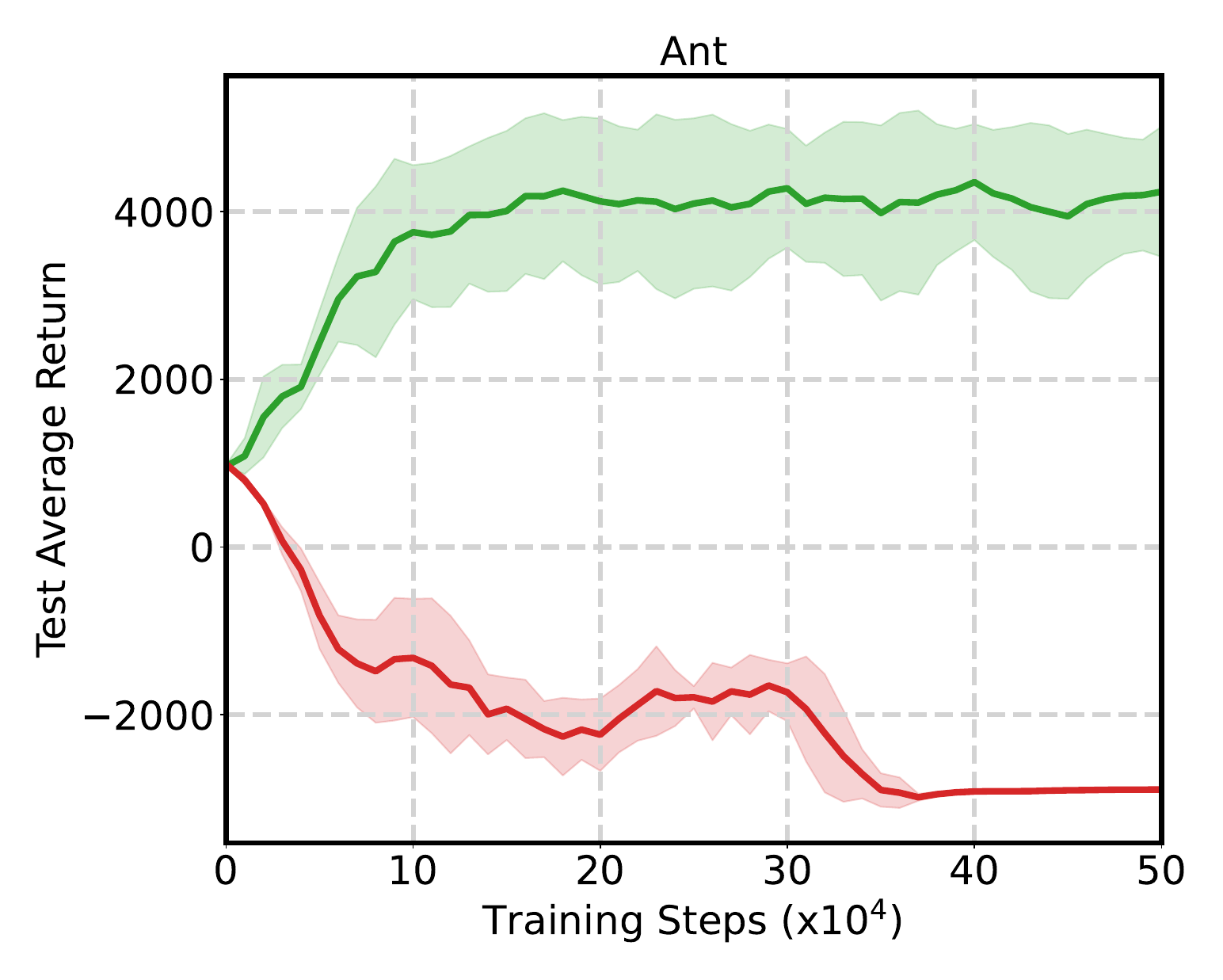}
        \caption{Ant}
        \label{fig:ag ant}
    \end{subfigure}
    \begin{subfigure}[t]{0.26\textwidth}
        \centering
        \includegraphics[width=\textwidth]{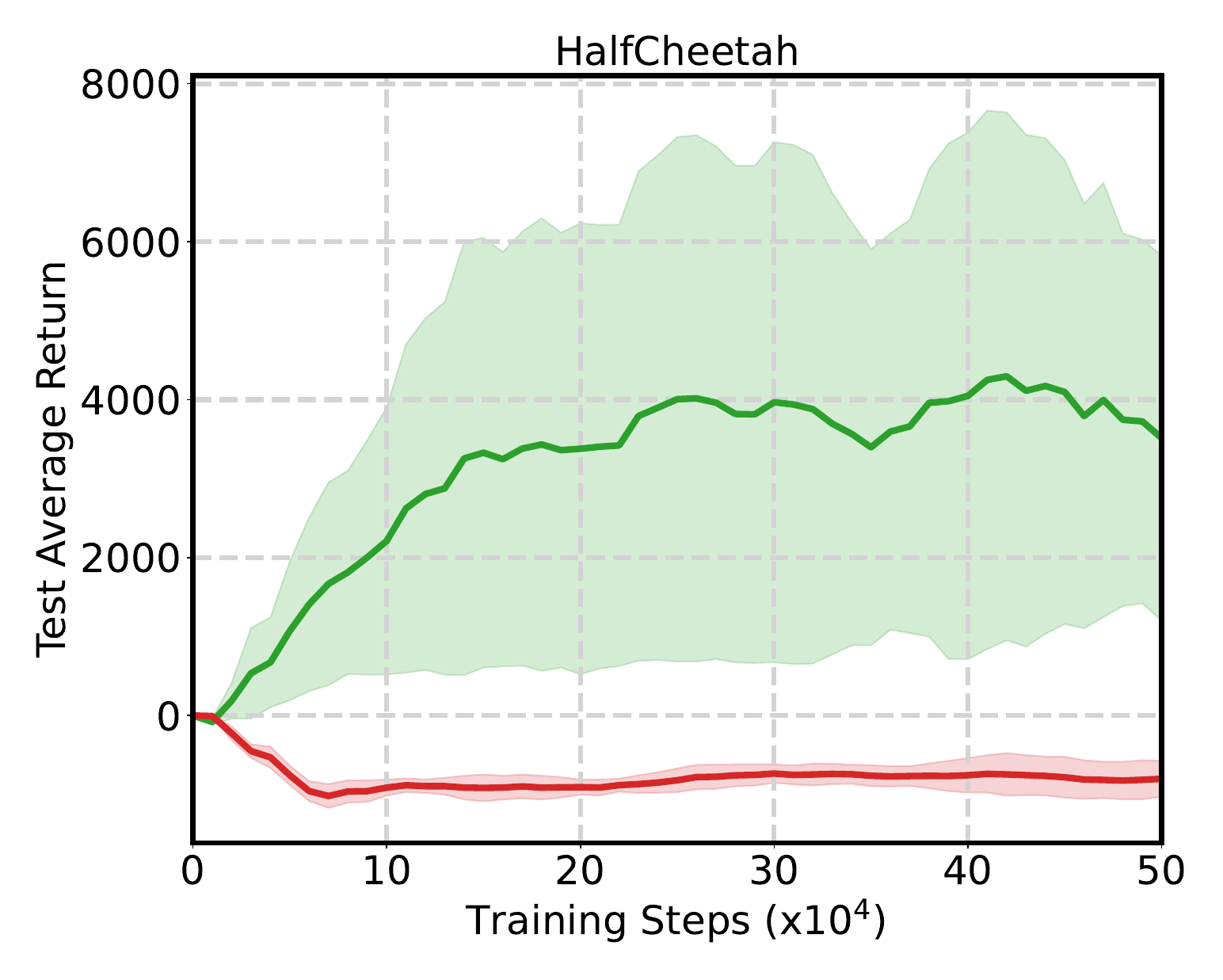}
        \caption{HalfCheetah}
        \label{fig:ag cheetah}
    \end{subfigure}
    \begin{subfigure}[t]{0.26\textwidth}
        \centering
        \includegraphics[width=\textwidth]{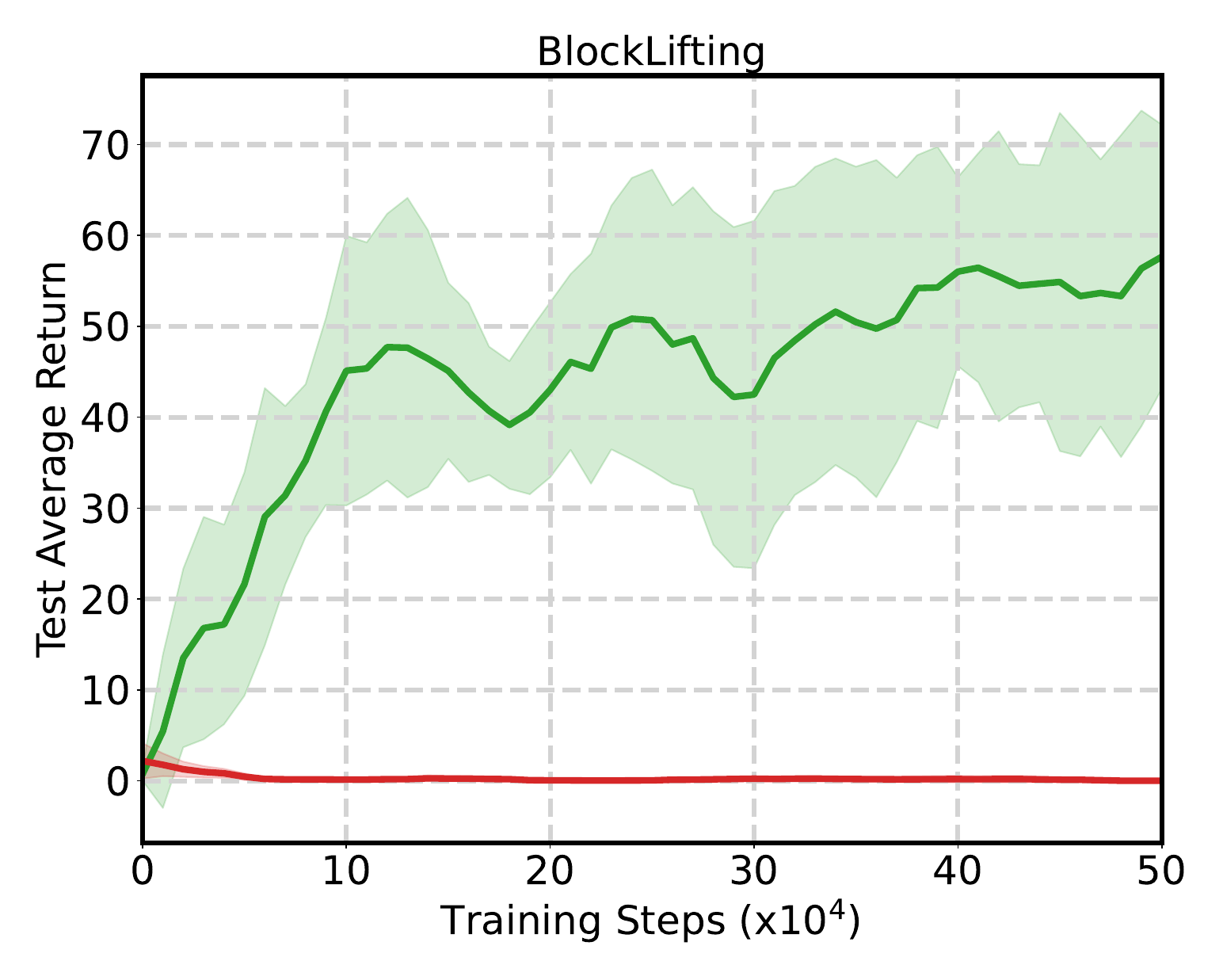}
        \caption{BlockLifting}
        \label{fig:ag lift}
    \end{subfigure}
    \begin{subfigure}[t]{0.26\textwidth}
        \centering
        \includegraphics[width=\textwidth]{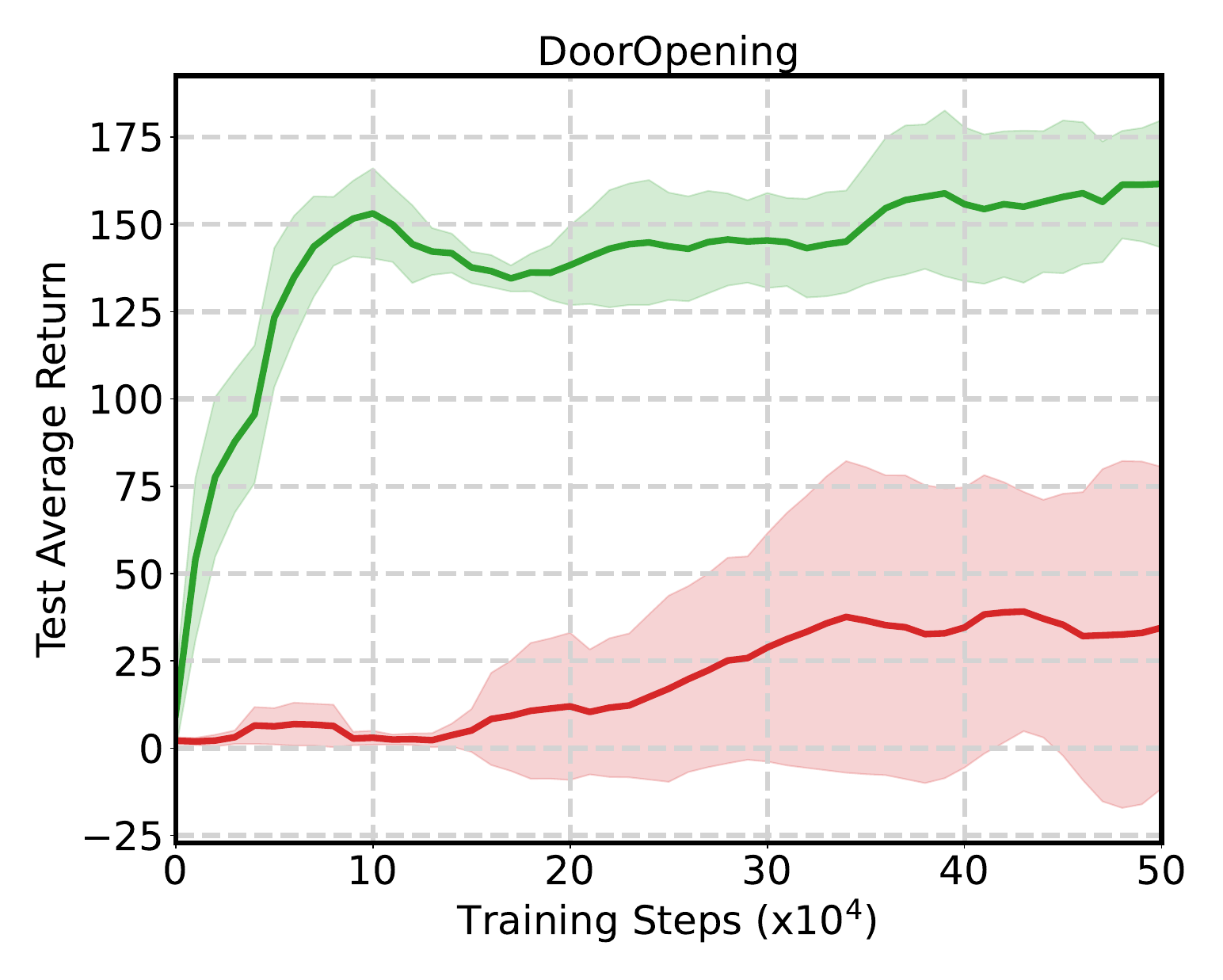}
        \caption{DoorOpening}
        \label{fig:ag door}
    \end{subfigure}
    \begin{subfigure}[t]{0.26\textwidth}
        \centering
        \includegraphics[width=\textwidth]{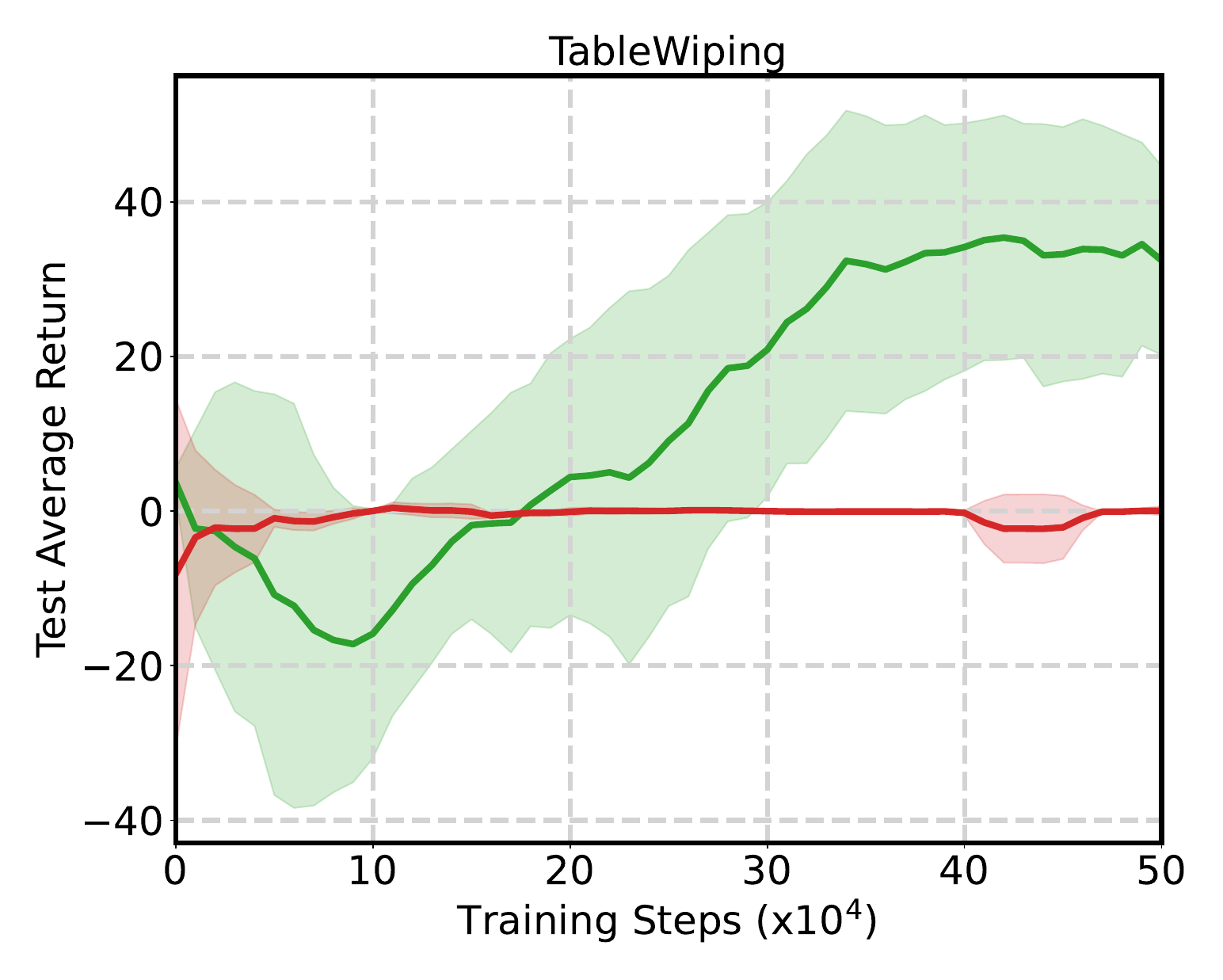}
        \caption{TableWiping}
        \label{fig:ag wipe}
    \end{subfigure}
    \begin{subfigure}[t]{0.27\textwidth}
        \centering
        \includegraphics[width=\textwidth]{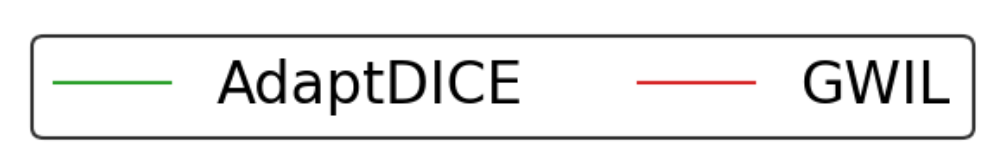}
        \label{fig:ag bar}
    \end{subfigure}
    \caption{\pch{Performance comparison of AdaptDICE and GWIL with a single source-domain expert trajectory.}}
    \label{fig: adaptdice vs gwil}
\end{figure}

\begin{figure}[!h]
    \centering
    \begin{subfigure}[t]{0.26\textwidth}
        \centering
        \includegraphics[width=\textwidth]{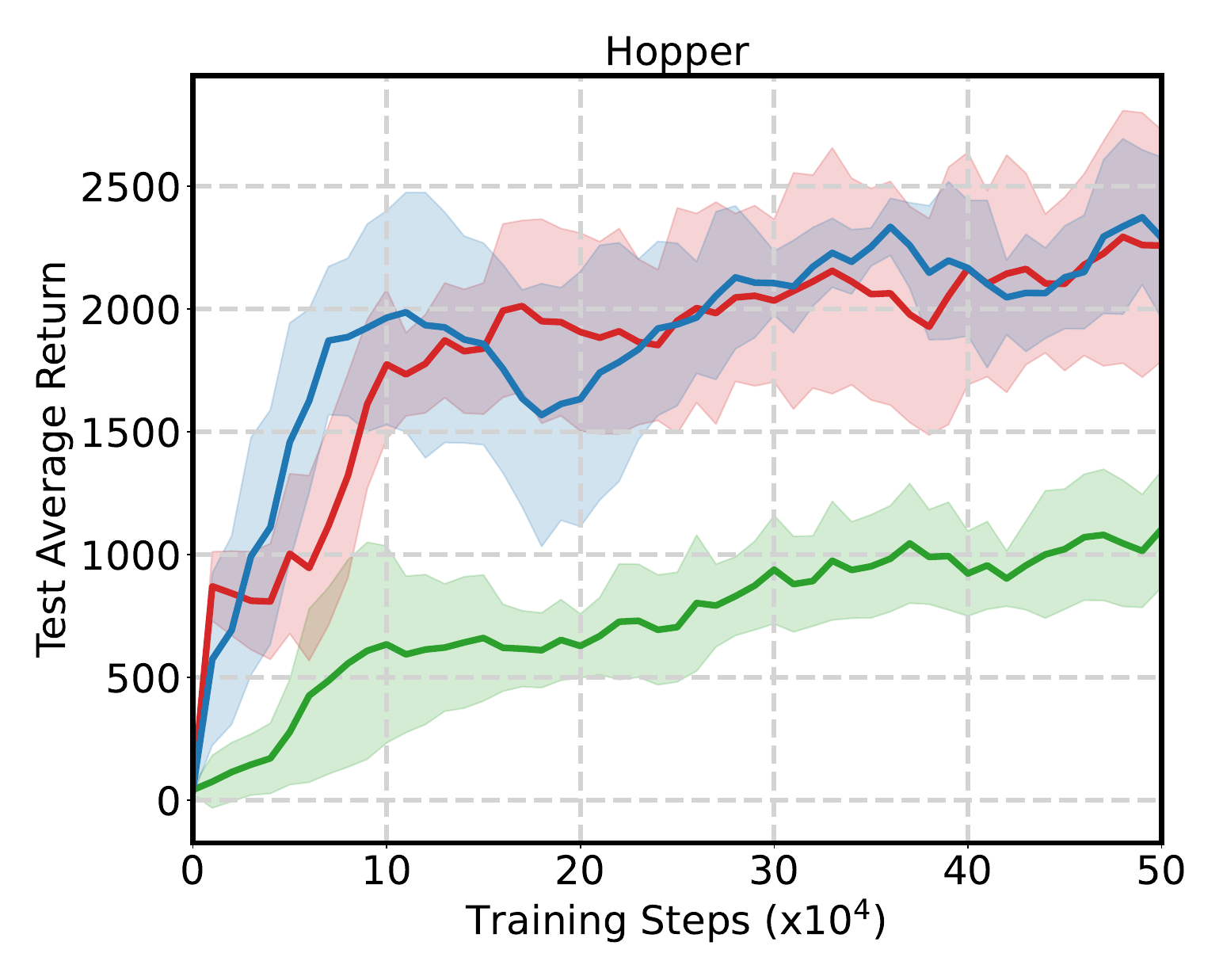}
        \caption{Hopper}
    \end{subfigure}
    \begin{subfigure}[t]{0.26\textwidth}
        \centering
        \includegraphics[width=\textwidth]{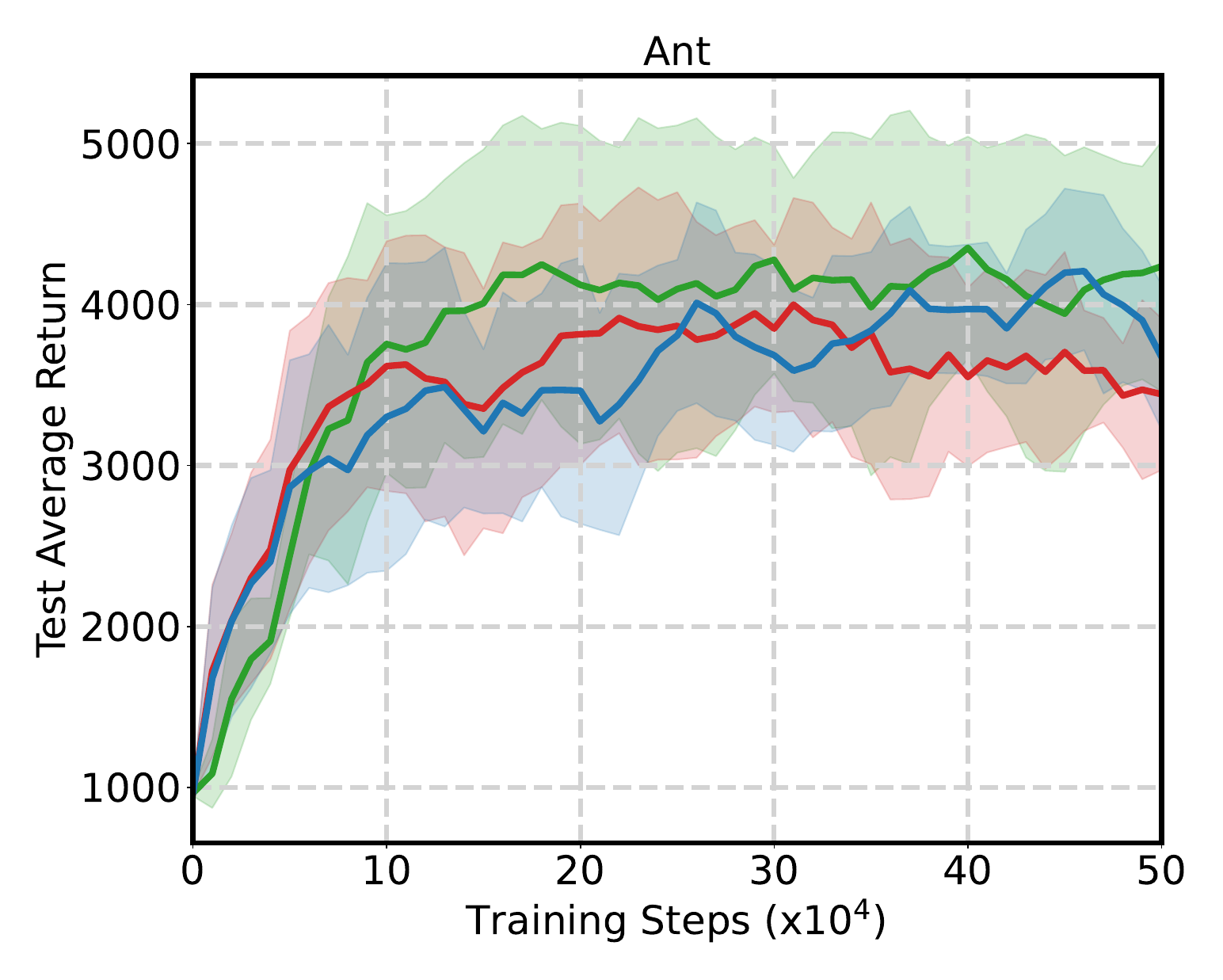}
        \caption{Ant}
        \label{fig:sen_igdf_a}
    \end{subfigure}
    \begin{subfigure}[t]{0.26\textwidth}
        \centering
        \includegraphics[width=\textwidth]{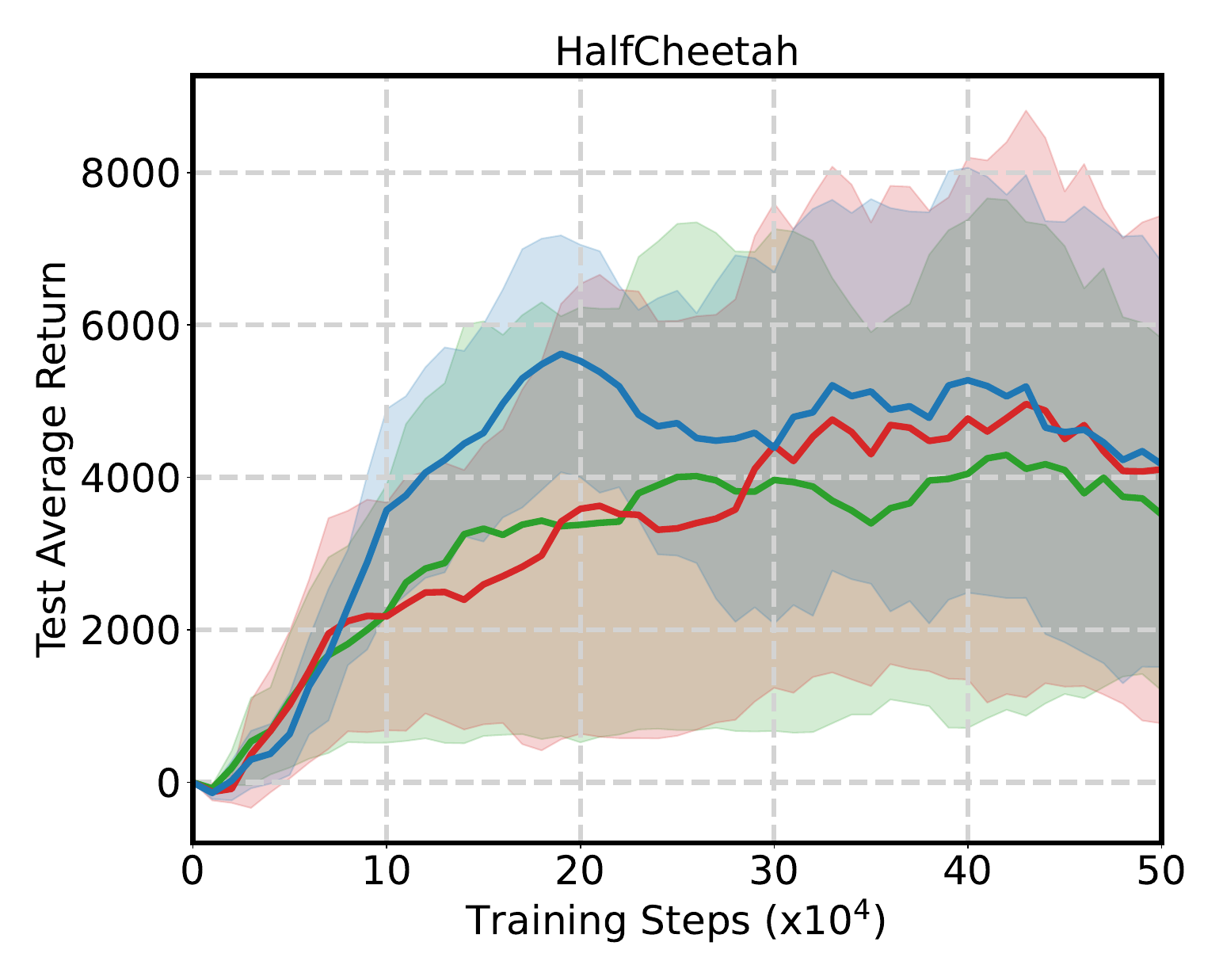}
        \caption{HalfCheetah}
    \end{subfigure}
    \begin{subfigure}[t]{0.26\textwidth}
        \centering
        \includegraphics[width=\textwidth]{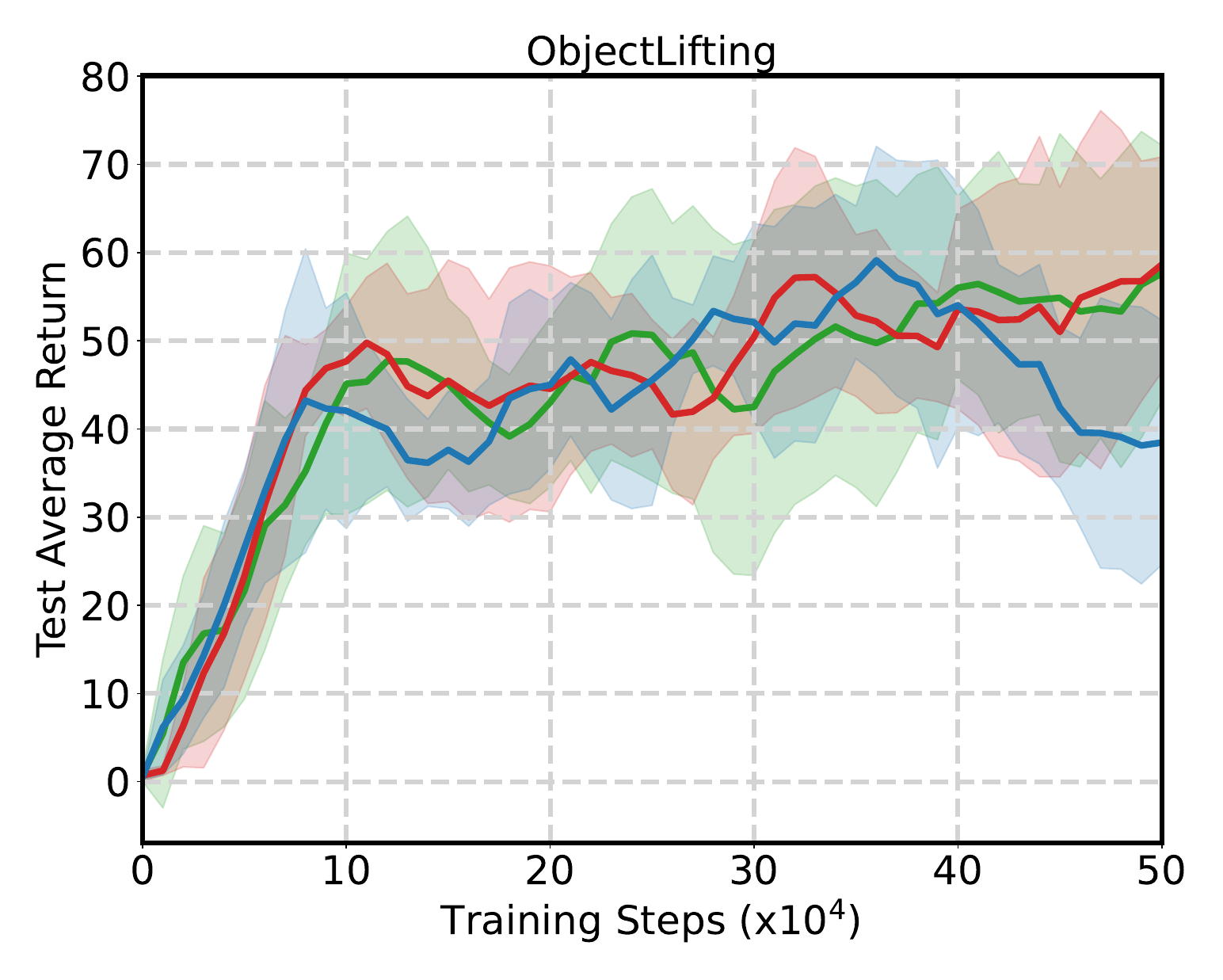}
        \caption{BlockLifting}
    \end{subfigure}
    \begin{subfigure}[t]{0.26\textwidth}
        \centering
        \includegraphics[width=\textwidth]{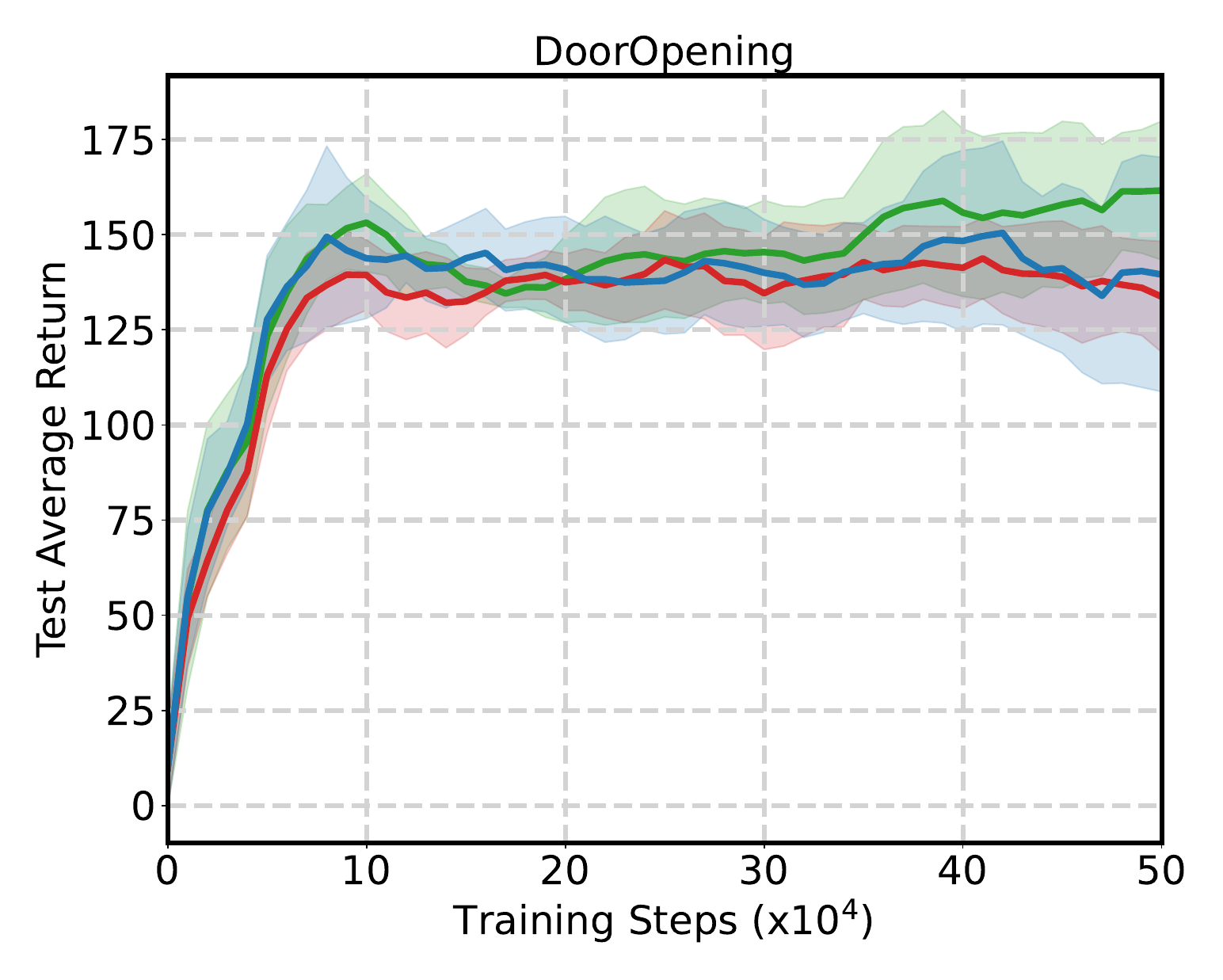}
        \caption{DoorOpening}
    \end{subfigure}
    \begin{subfigure}[t]{0.26\textwidth}
        \centering
        \includegraphics[width=\textwidth]{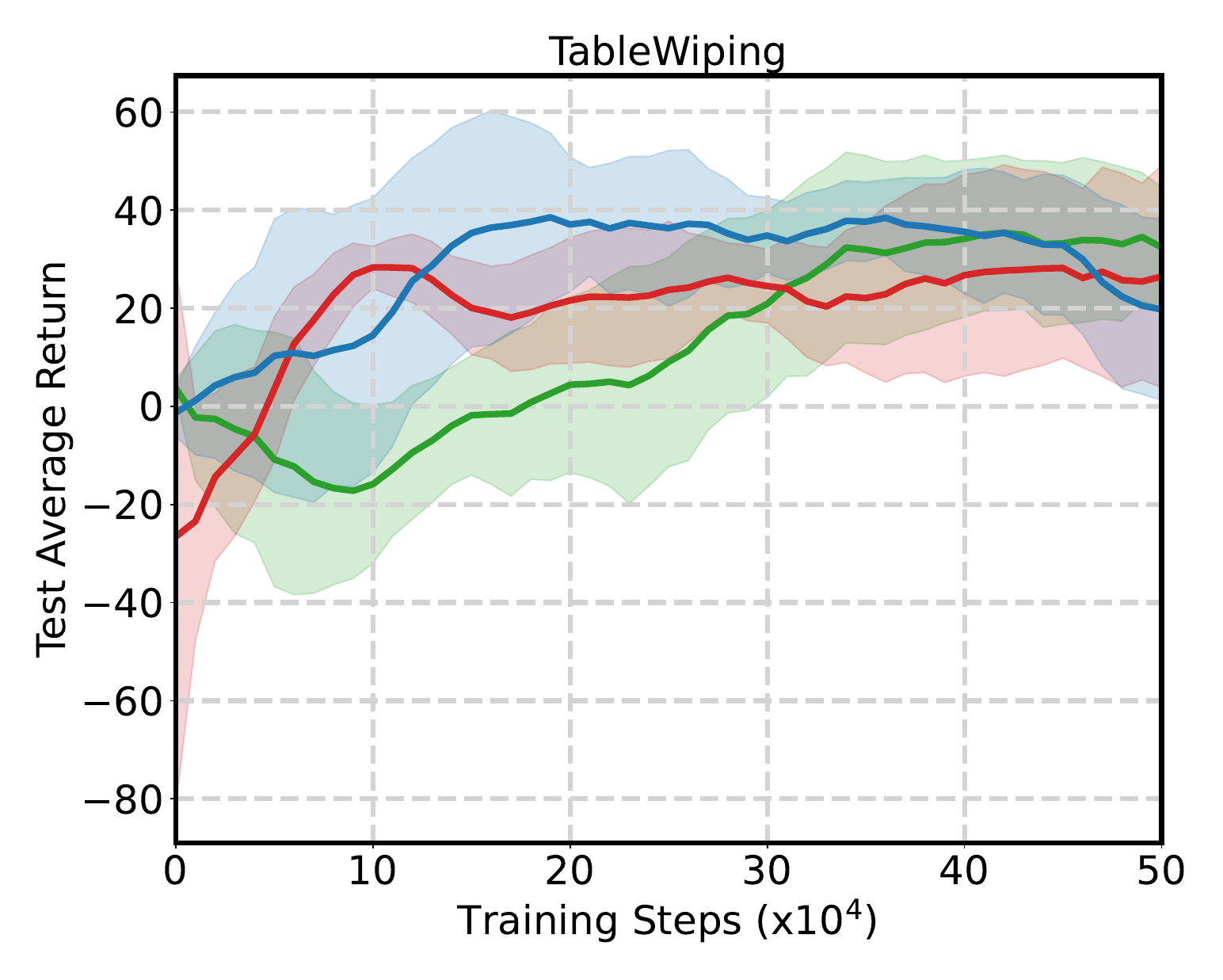}
        \caption{TableWiping}
    \end{subfigure}
    \begin{subfigure}[t]{0.6\textwidth}
        \centering
        \includegraphics[width=\textwidth]{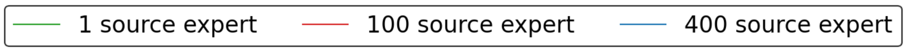}
    \end{subfigure}
    \caption{\pch{AdaptDICE's performance with pre-training with different amounts of source data (1, 100, 400 expert trajectories).}}
    \label{fig: pretrained adaptdice}
\end{figure}

\begin{table}[!h]
\caption{\pch{The final performance of pre-trained models using different source-domain dataset configurations.}}
\centering
\scalebox{0.85}{
\begin{tabular}{l | c c c}
\toprule
\pch{\textbf{Environment}} & \pch{1 source expert trajectory} & \pch{100 source expert trajectories} & \pch{400 source expert trajectories}\\
\midrule
 \pch{Hopper}   &  \pch{$2384.1$} &  \pch{$3024.9$}  &  \pch{$\mathbf{3568.6}$}\\
\pch{Ant}          & \pch{$1786.2$} & \pch{$3262.2$} & \pch{$\mathbf{5023.6}$}\\
\pch{HalfCheetah}  & \pch{$-5.7$} & \pch{$8233.0$} & \pch{$\mathbf{11158.5}$}\\
\pch{BlockLifting} & \pch{$28.4$} & \pch{$165.7$} & \pch{$\mathbf{193.5}$}\\
\pch{DoorOpening}  & \pch{$13.9$} & \pch{$164.5$} &\pch{ $\mathbf{203.0}$}\\
 \pch{TableWiping}  &  \pch{$4.9$}   &  \pch{$80.8$}   &  \pch{$\mathbf{85.4}$}\\
\bottomrule
\end{tabular}}
\label{tab:ablation_results}
\end{table}

\pch{
\textbf{Scaling effect of source-domain data.} In~\Cref{fig: pretrained adaptdice}, we conducted an empirical study on source data scaling across multiple environments, evaluating AdaptDICE with varying numbers of expert trajectories (1, 100, and 400). The results indicate that, in most environments, performance is relatively insensitive to the amount of source data. This behavior is consistent with the adaptive weighting mechanism in AdaptDICE, which down-weights unreliable source information and mitigates negative transfer.
}

\pch{
\textbf{Adaptive $\beta$ versus fixed $\beta$.} We also compared the performance of AdaptDICE with our adaptive weighting scheme against that with fixed $\beta$ values (0.3, 0.5, and 0.7) in the Hopper and Ant environments. The results in~\Cref{fig: FIXED BETA} show that smaller fixed $\beta$ values generally perform better under large domain discrepancies, as they bias learning toward the target-domain density ratio $w_{\text{tar}}$. However, the adaptive $\beta(t)$ consistently matches or outperforms the best fixed setting by dynamically adjusting throughout training. We also visualize the evolution of $\beta(t)$, confirming its adaptive behavior.
}

\pch{
\textbf{Sensitivity to $\psi$.} We further evaluated the sensitivity to this parameter by testing $\psi \in \{0.5, 0.9, 0.99\}$ on Hopper and Ant ($\psi = 0.9$ is the default value used throughout the experiments). The results in \Cref{fig: PSI EXP} indicate minimal performance variation across these values, supporting our claim that $\psi$ does not require task-specific tuning and does not function as a critical hyperparameter in practice.
}

\begin{figure}[!h]
    \centering
    \begin{subfigure}[t]{0.27\textwidth}
        \centering
        \includegraphics[width=\textwidth]{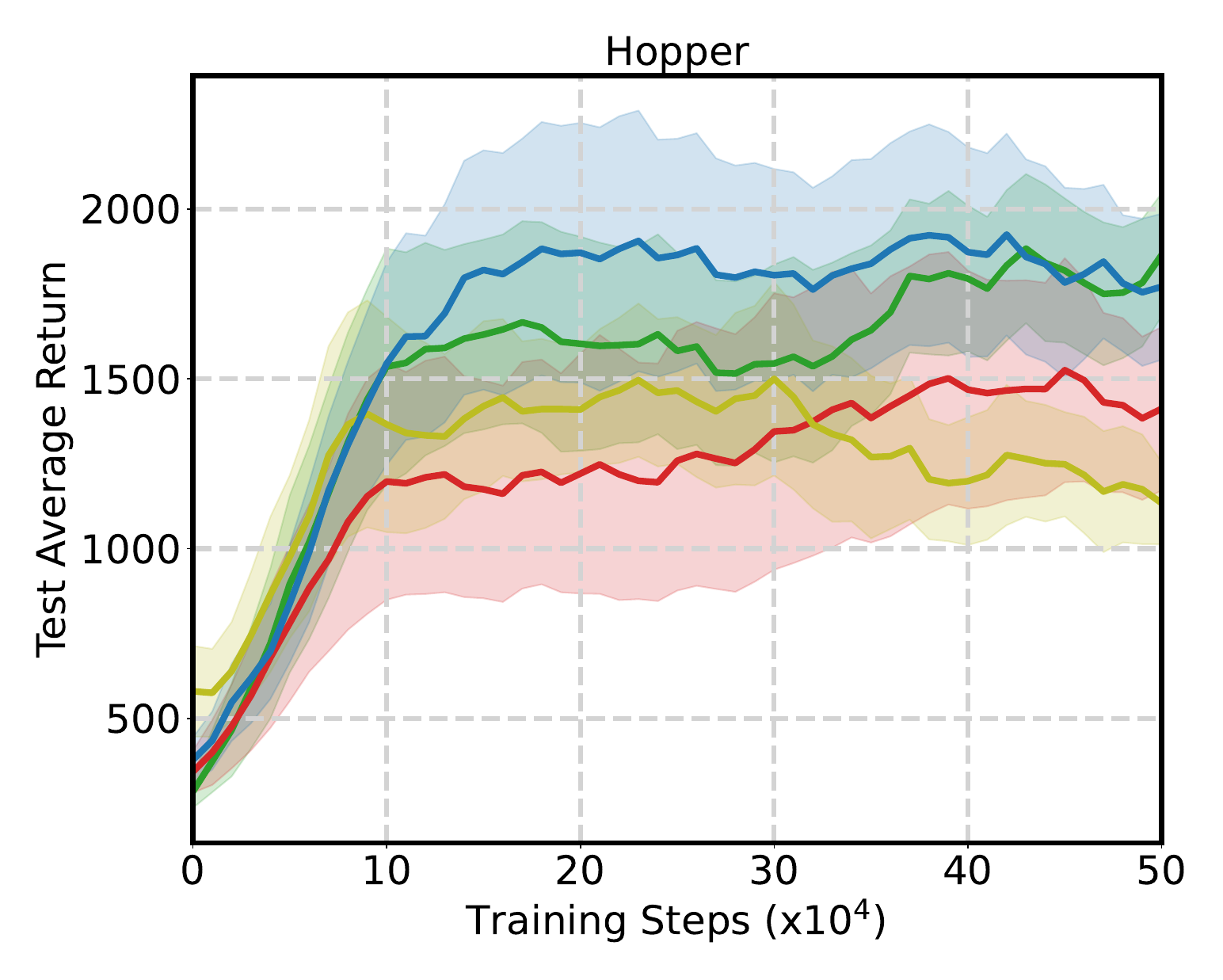}
        \caption{Hopper}
        \label{fig:fixed beta hopper}
    \end{subfigure}
    \begin{subfigure}[t]{0.27\textwidth}
        \centering
        \includegraphics[width=\textwidth]{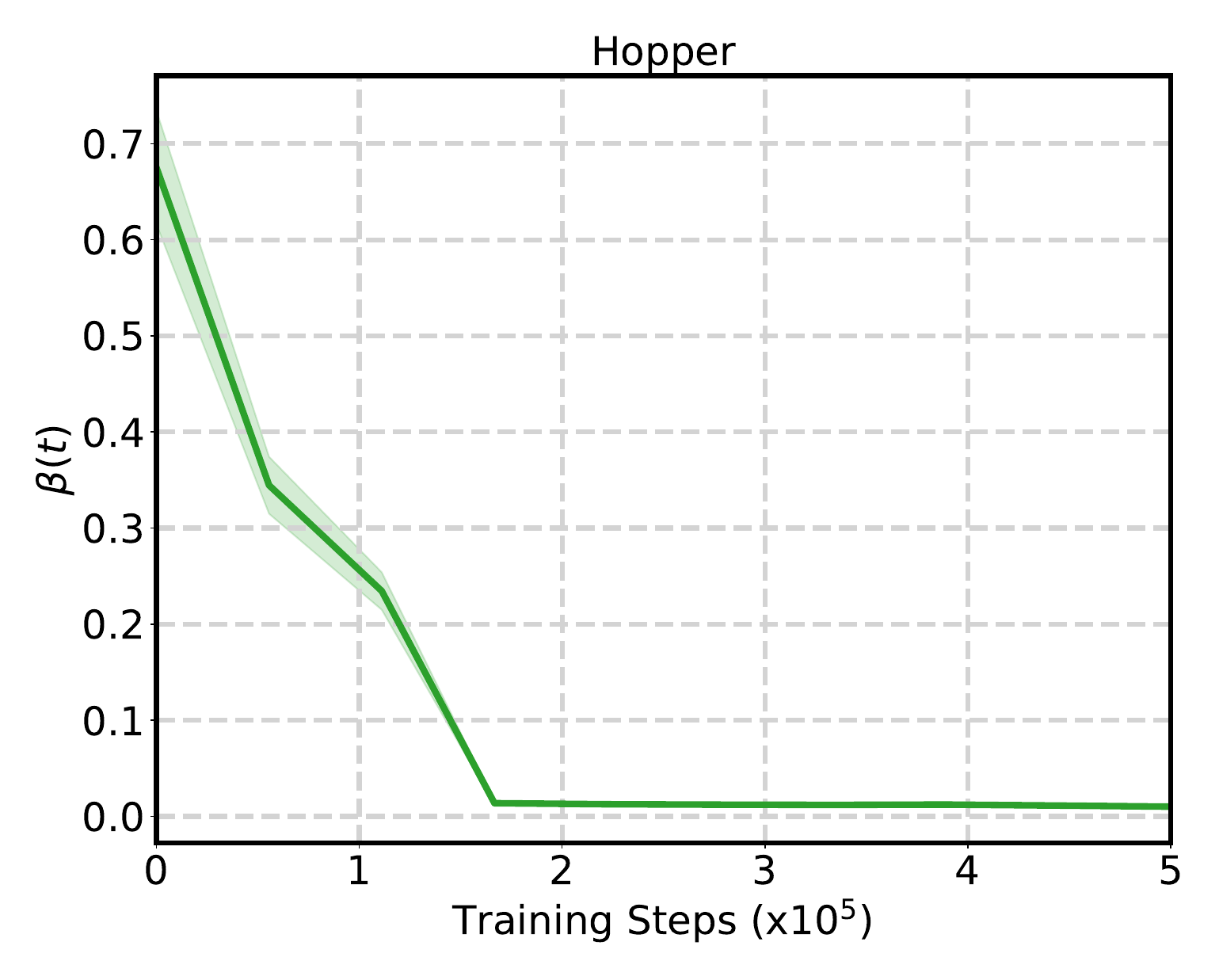}
        \caption{$\beta(t)$}
        \label{fig:hopper beta}
    \end{subfigure}
    
    \begin{subfigure}[t]{0.27\textwidth}
        \centering
        \includegraphics[width=\textwidth]{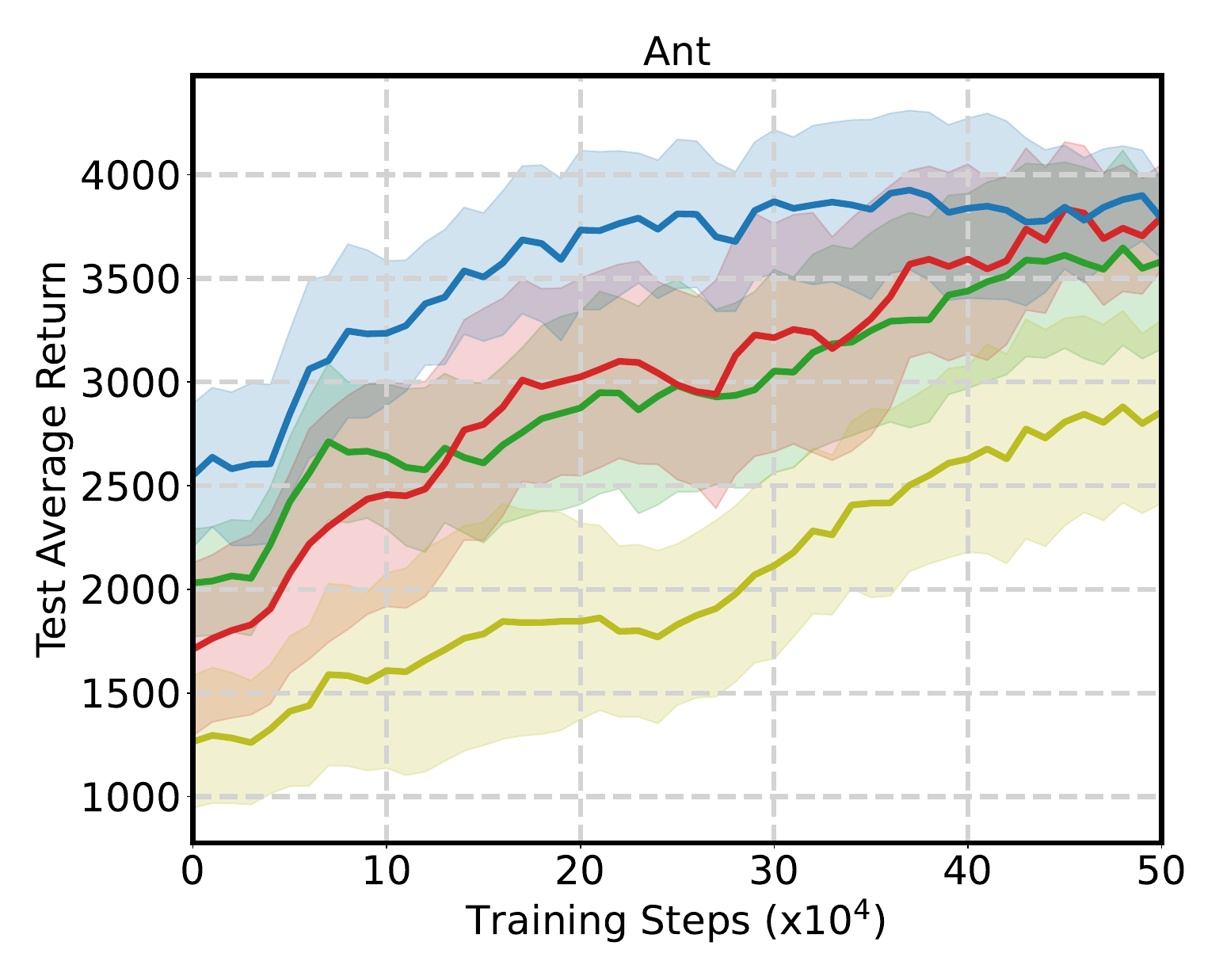}
        \caption{Ant}
        \label{fig:fixed beta ant}
    \end{subfigure}
    \begin{subfigure}[t]{0.27\textwidth}
        \centering
        \includegraphics[width=\textwidth]{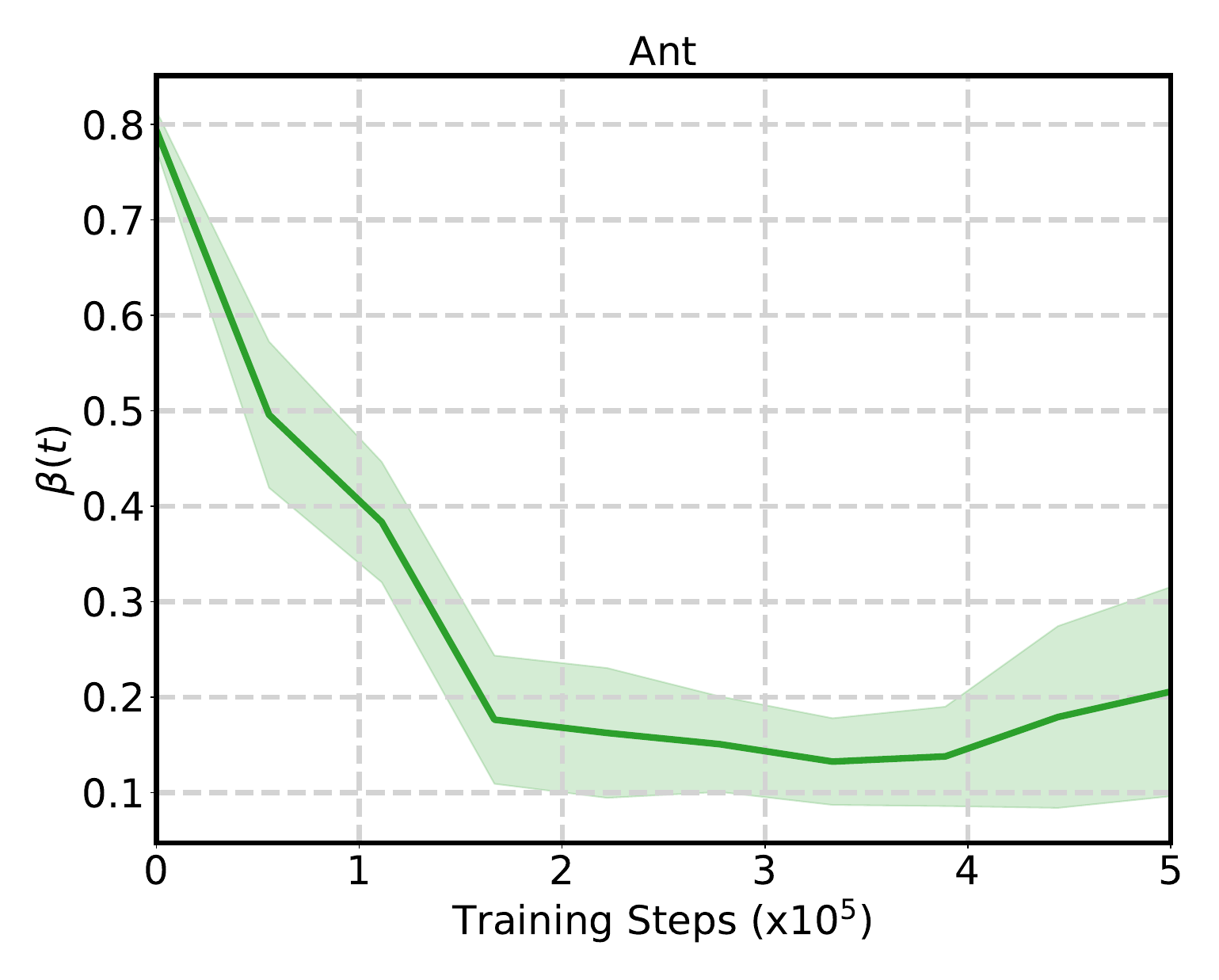}
        \caption{$\beta(t)$}
        \label{fig:ant beta}
    \end{subfigure}
    \begin{subfigure}[t]{0.55\textwidth}
        \centering
        \includegraphics[width=\textwidth]{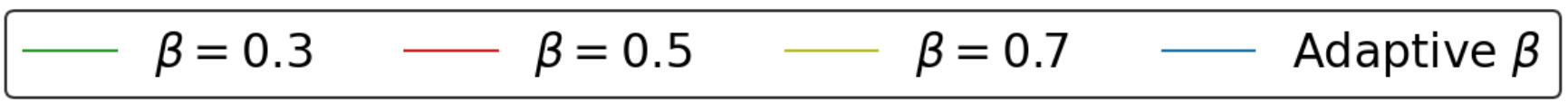}
        \label{fig:beta bar}
    \end{subfigure}
    \caption{\pch{(a), (c): Comparison of AdaptDICE with adaptive $\beta$ and various fixed $\beta$ values (0.3, 0.5, and 0.7) on Hopper and Ant; (b), (d): Evolution of $\beta(t)$ under the adaptive weighting scheme on Hopper and Ant.}}
    \label{fig: FIXED BETA}
\end{figure}

\begin{figure}[!h]
    \centering
    \begin{subfigure}[t]{0.27\textwidth}
        \centering
        \includegraphics[width=\textwidth]{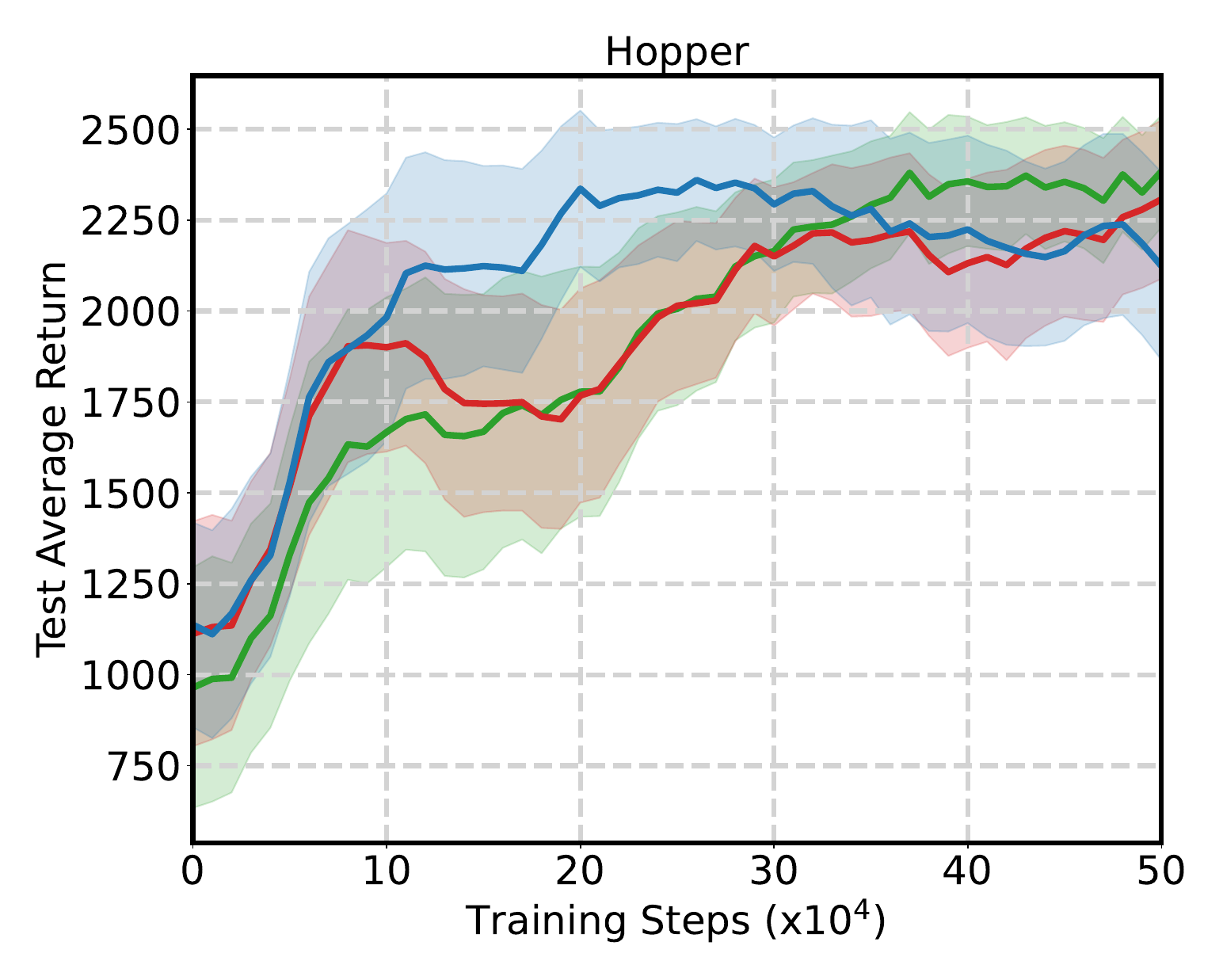}
        \caption{Hopper}
        \label{fig:psi hopper}
    \end{subfigure}
    \begin{subfigure}[t]{0.27\textwidth}
        \centering
        \includegraphics[width=\textwidth]{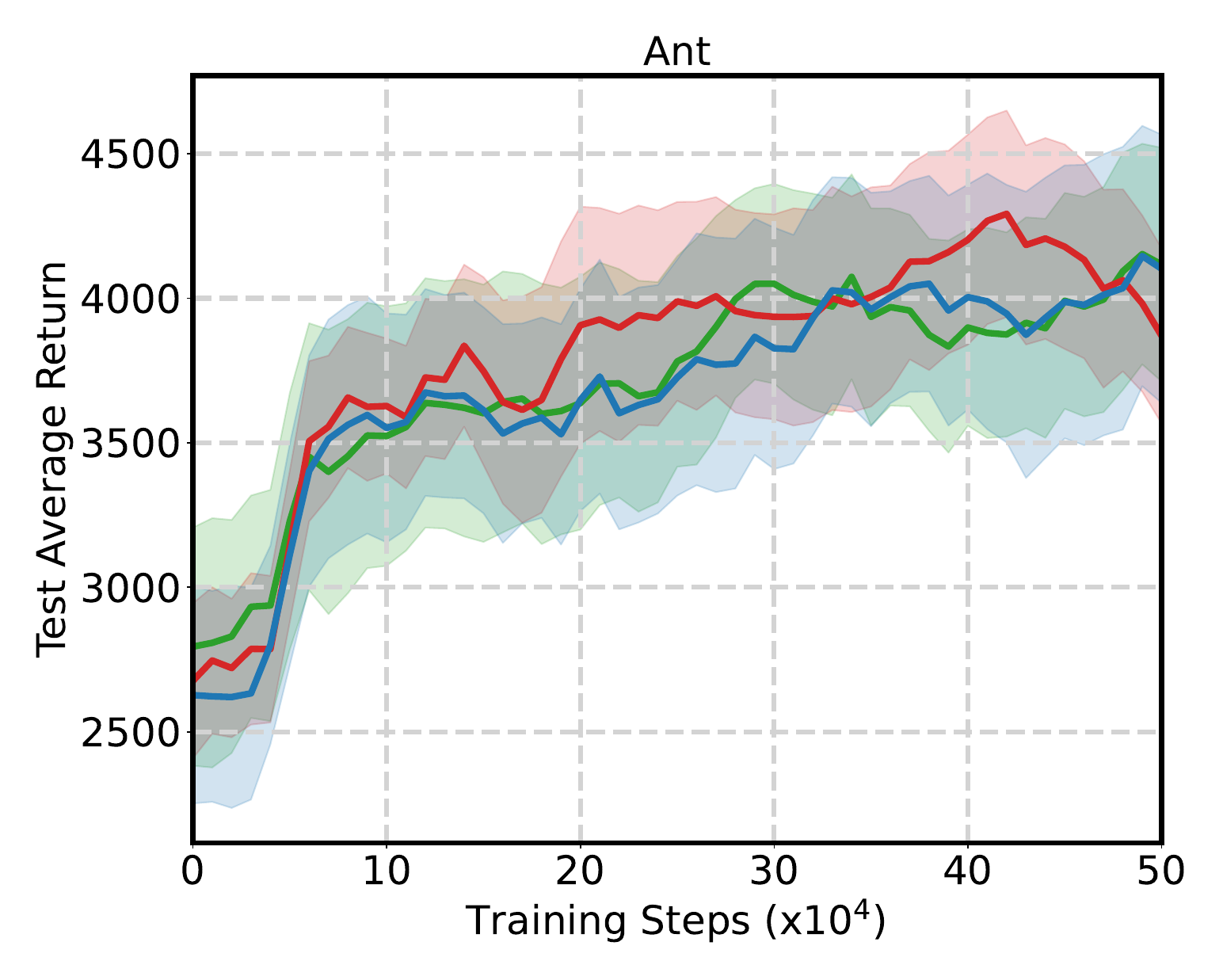}
        \caption{Ant}
        \label{fig:psi ant}
    \end{subfigure}
    \begin{subfigure}[t]{0.47\textwidth}
        \centering
        \includegraphics[width=\textwidth]{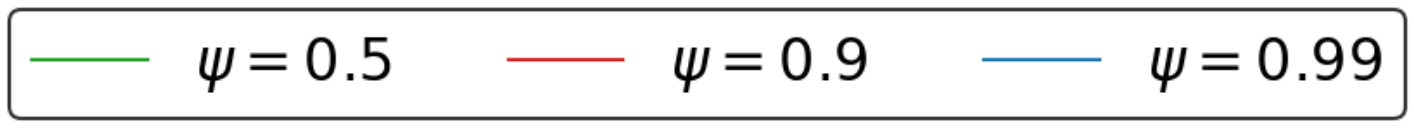}
        \label{fig:psi bar}
    \end{subfigure}
    \caption{\pch{Evaluation of AdatDICE with different $\psi$ values on Hopper and Ant.}}
    \label{fig: PSI EXP}
\end{figure}

\pch{
\textbf{Loss curves of $L_{\text{MAP}}$.} In~\Cref{fig: LMAP EXP}, we report the training curves of the mapping quality measured by the proposed mapping loss $L_{\text{MAP}}$.
\begin{itemize}[leftmargin=*]
\item In most environments, we observe a clear decrease in $L_{\text{MAP}}$ over training, indicating progressively improved alignment between the source and target domains. These cases also coincide with the strong final performance of AdaptDICE, suggesting a positive correlation between improved mapping quality and successful transfer, which is consistent with the intuition behind Theorem 1.
\item On the other hand, in Ant and HalfCheetah, the $L_{\text{MAP}}$ values remain relatively flat and do not exhibit a clear downward trend. This behavior is closely aligned with the ablation study results reported in Appendix D.3 (\Cref{fig:ablation results}). In these environments, the source-domain component $w_{\text{src}}$ fails to enable effective transfer and does not achieve meaningful performance when used alone (cf. the “Source-Only” curves in Figure 3). As a result, the source domain provides limited useful information to the target domain, which inherently limits both the improvement of the learned mappings and the potential benefits of cross-domain transfer.
\end{itemize}
}

\begin{figure}[!h]
    \centering
    \begin{subfigure}[t]{0.27\textwidth}
        \centering
        \includegraphics[width=\textwidth]{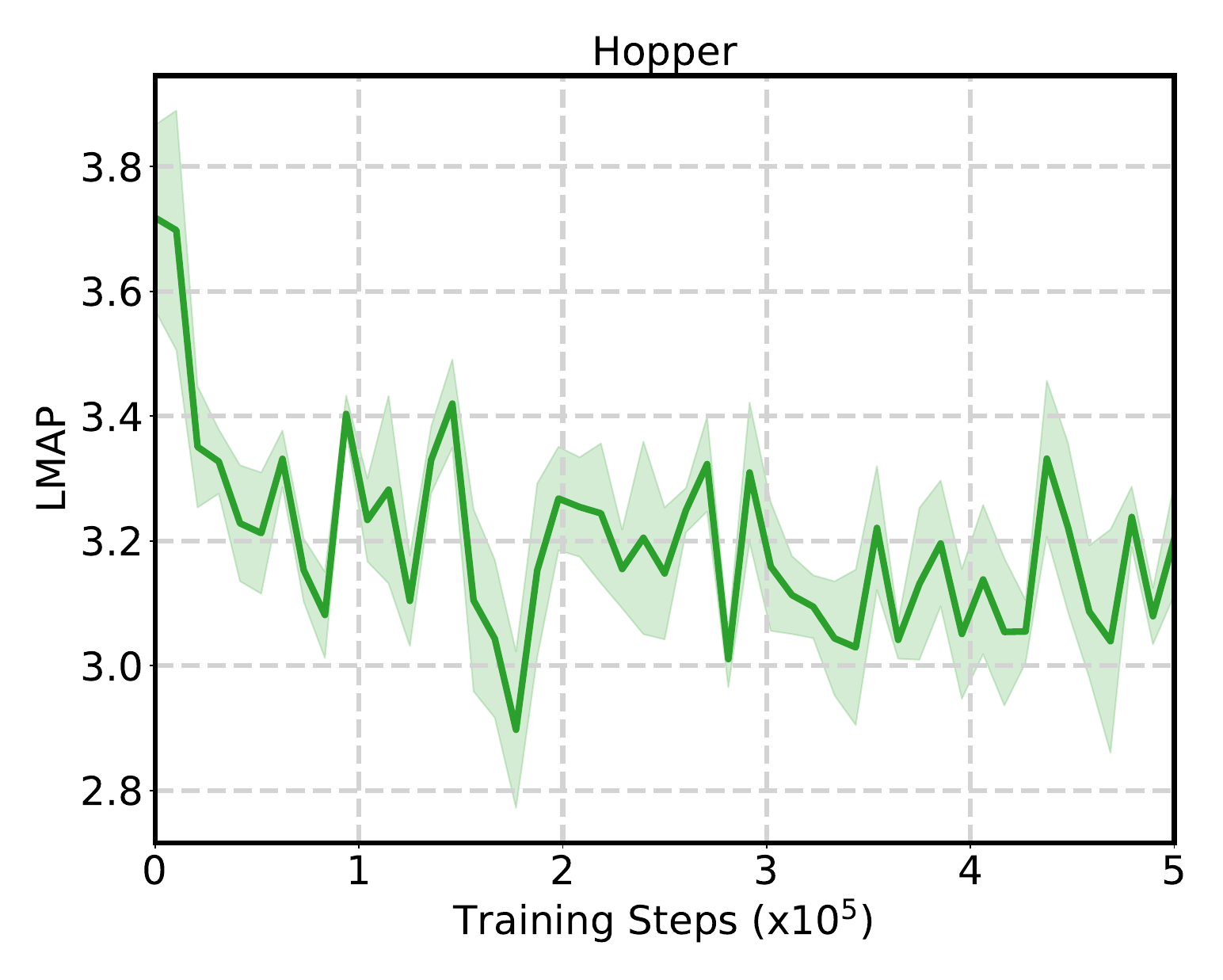}
        \caption{Hopper}
    \end{subfigure}
    \begin{subfigure}[t]{0.27\textwidth}
        \centering
        \includegraphics[width=\textwidth]{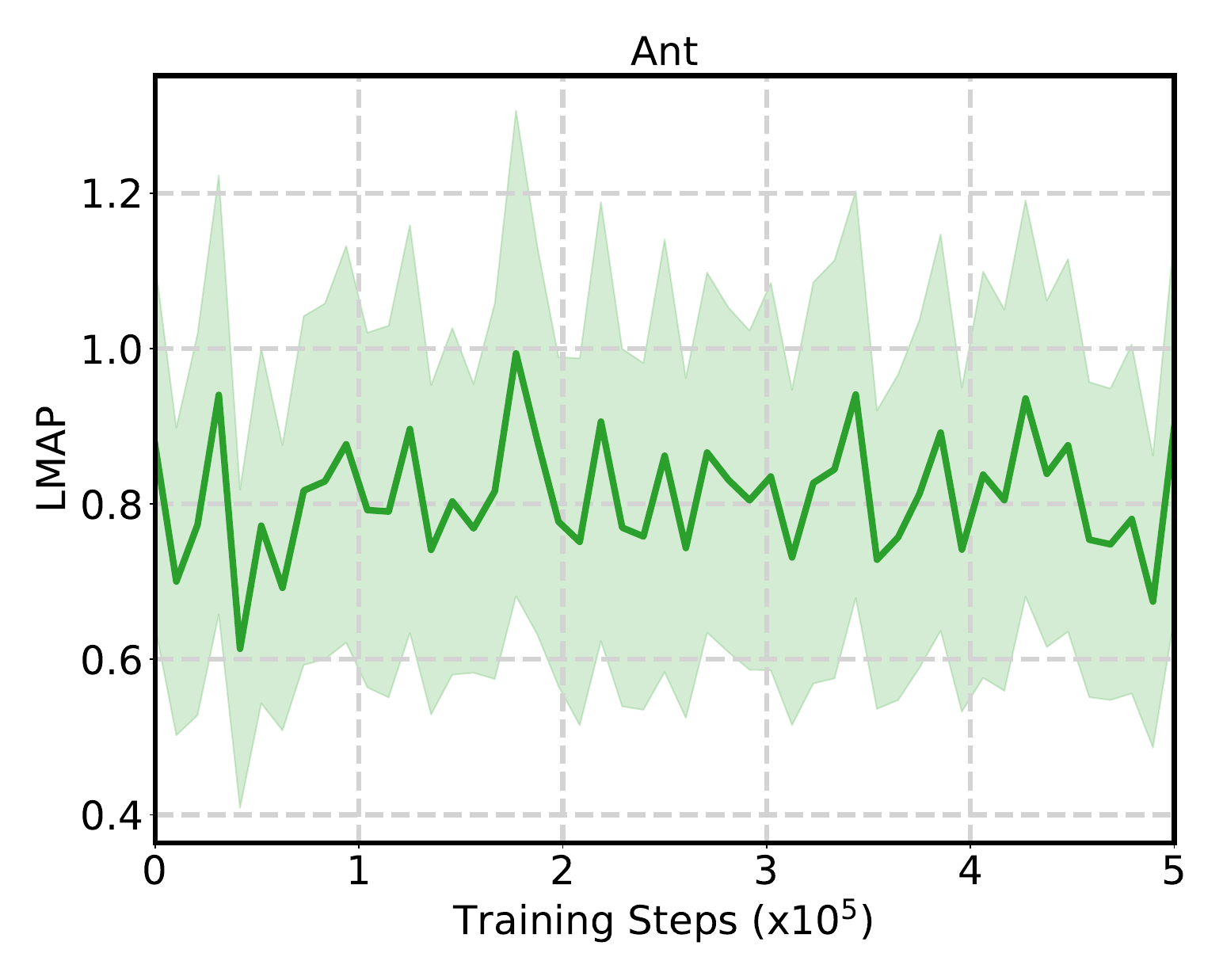}
        \caption{Ant}
    \end{subfigure}
    \begin{subfigure}[t]{0.27\textwidth}
        \centering
        \includegraphics[width=\textwidth]{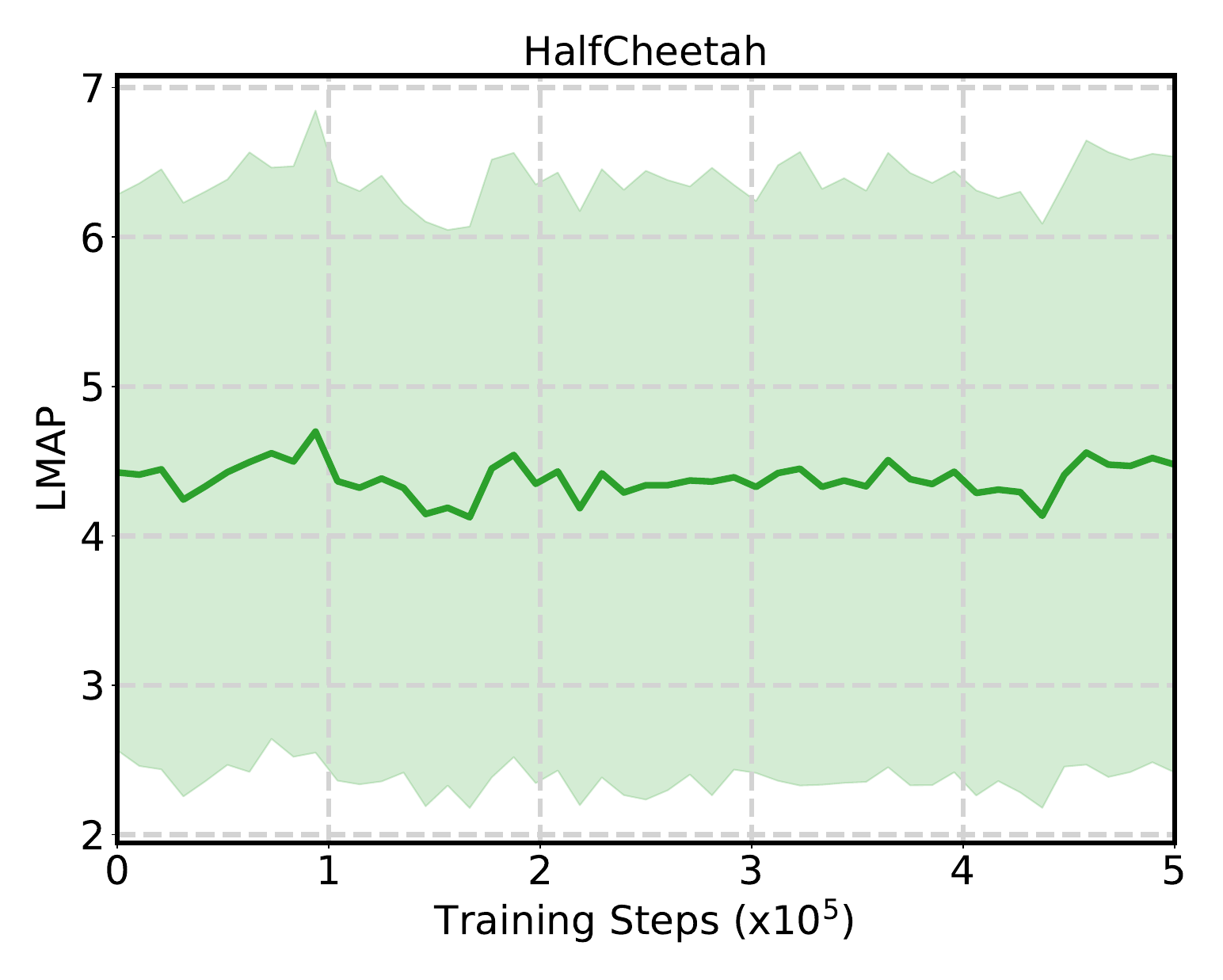}
        \caption{HalfCheetah}
    \end{subfigure}
    \begin{subfigure}[t]{0.27\textwidth}
        \centering
        \includegraphics[width=\textwidth]{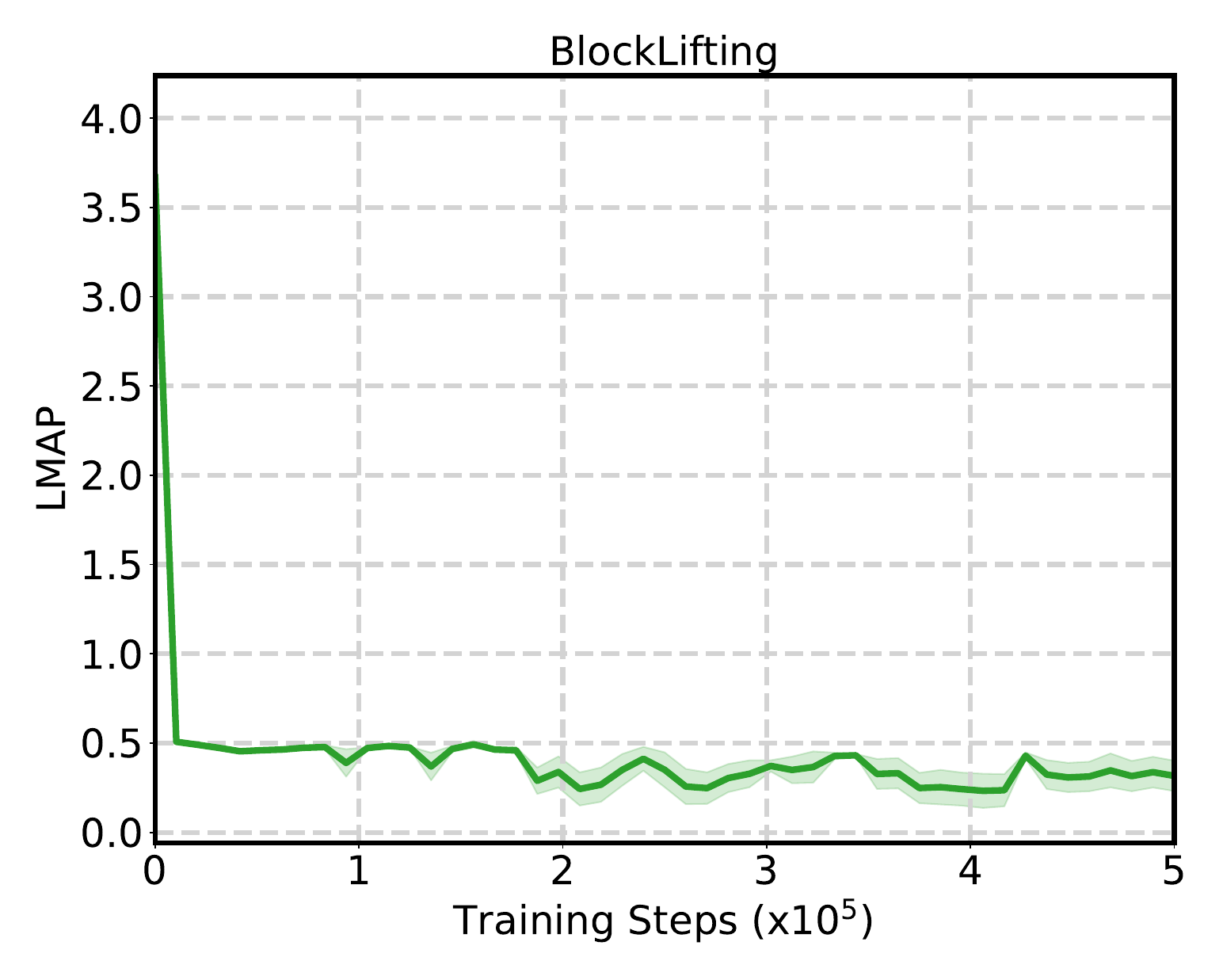}
        \caption{BlockLifting}
    \end{subfigure}
    \begin{subfigure}[t]{0.27\textwidth}
        \centering
        \includegraphics[width=\textwidth]{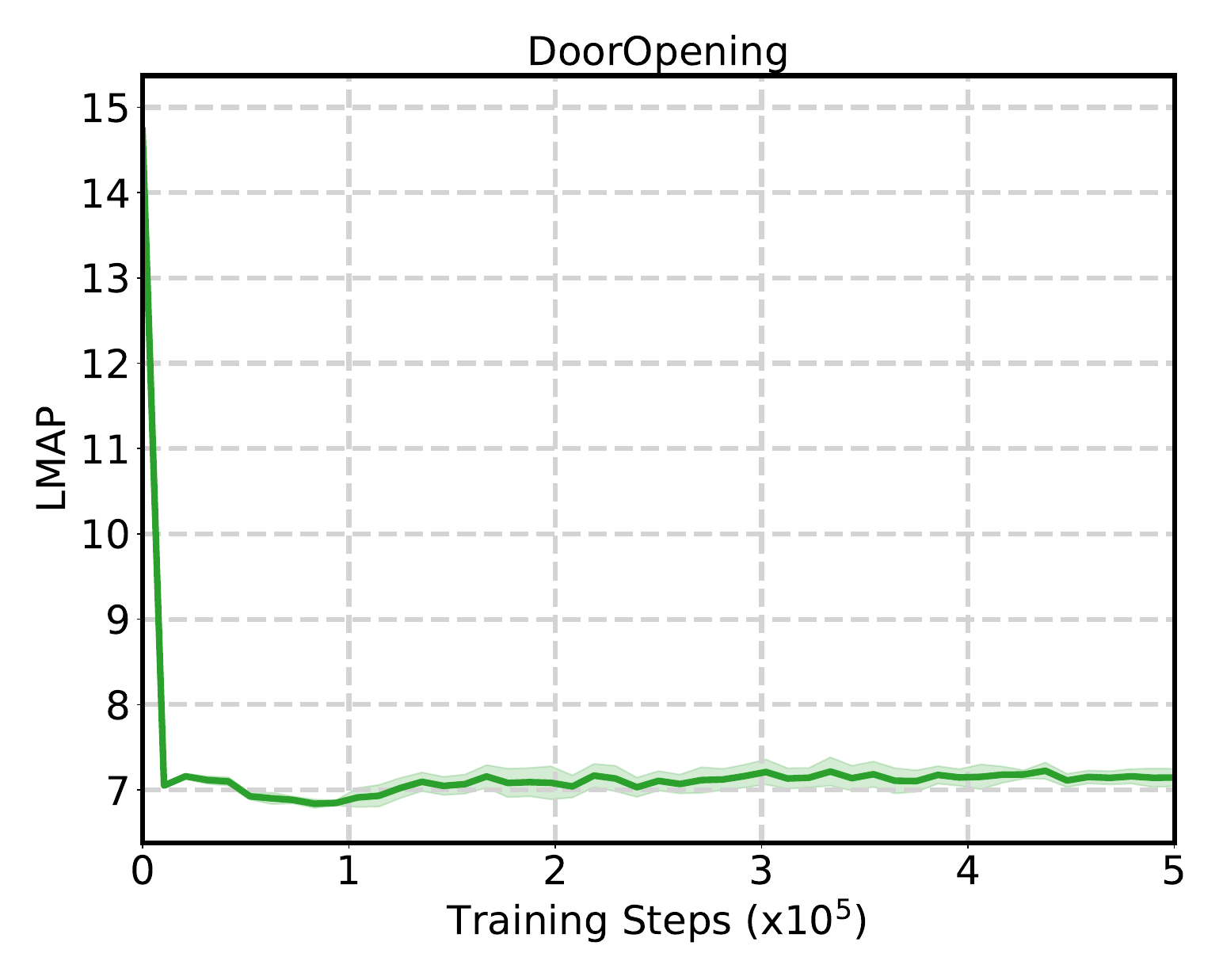}
        \caption{DoorOpening}
    \end{subfigure}
    \begin{subfigure}[t]{0.27\textwidth}
        \centering
        \includegraphics[width=\textwidth]{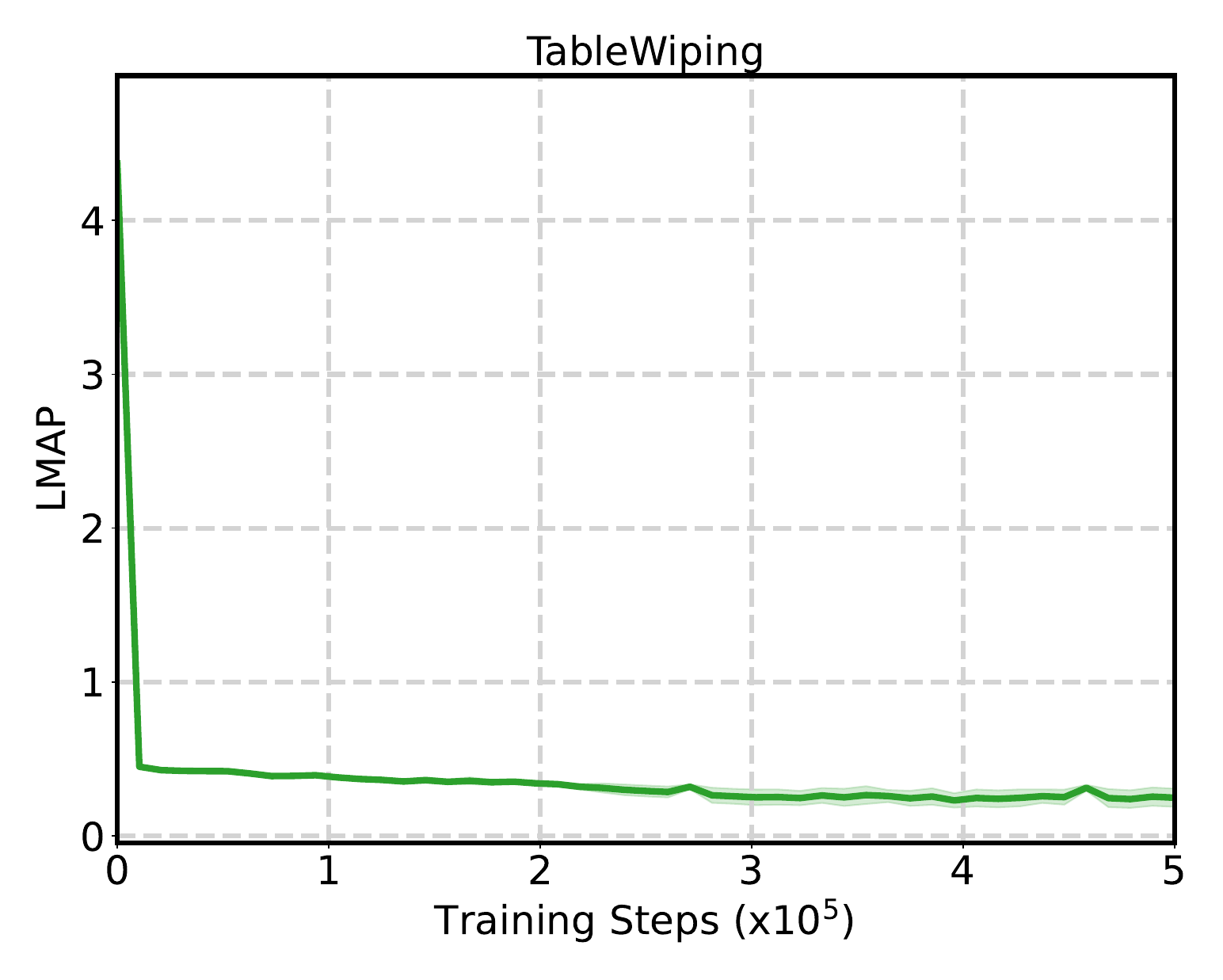}
        \caption{TableWiping}
    \end{subfigure}
    \caption{\pch{Loss curves of $L_{\text{MAP}}$.}}
    \label{fig: LMAP EXP}
\end{figure}
\newpage

\begin{figure}[!h]
    \centering
    \begin{subfigure}[t]{0.27\textwidth}
        \centering
        \includegraphics[width=\textwidth]{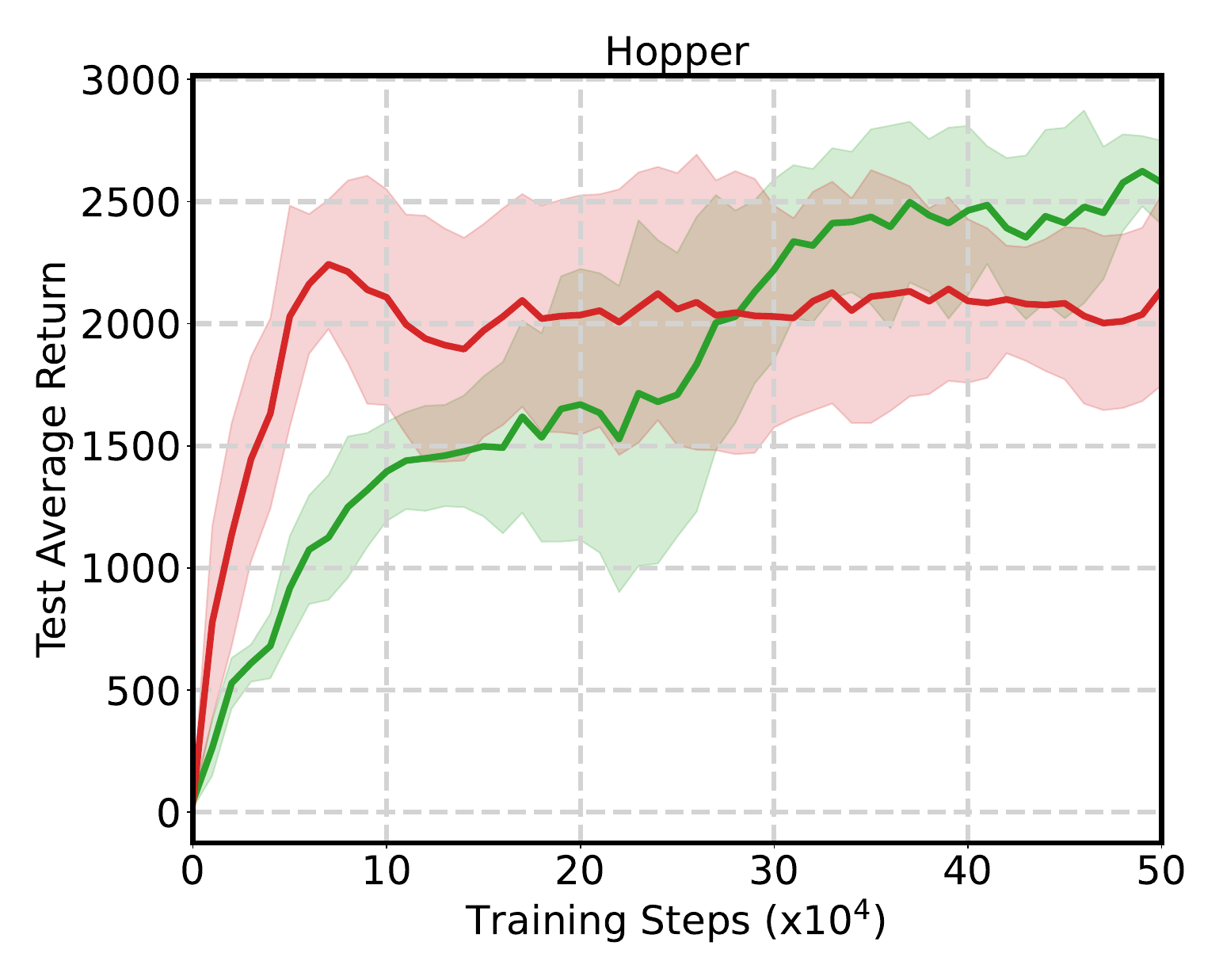}
        \caption{Hopper}
        \label{fig:cross_fitting_hopper}
    \end{subfigure}
    \begin{subfigure}[t]{0.27\textwidth}
        \centering
        \includegraphics[width=\textwidth]{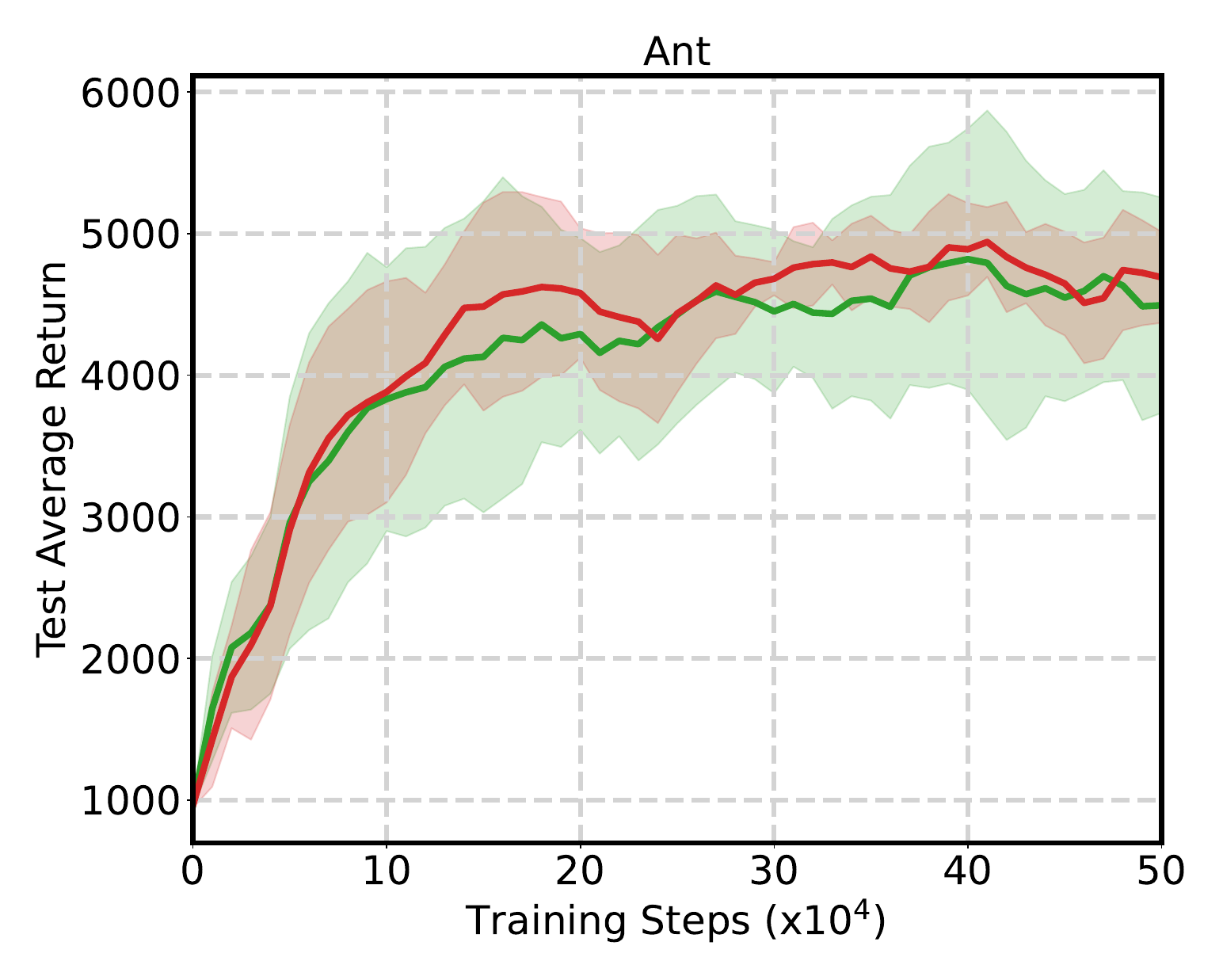}
        \caption{Ant}
        \label{fig:cross_fitting_ant}
    \end{subfigure}
    \begin{subfigure}[t]{0.27\textwidth}
        \centering
        \includegraphics[width=\textwidth]{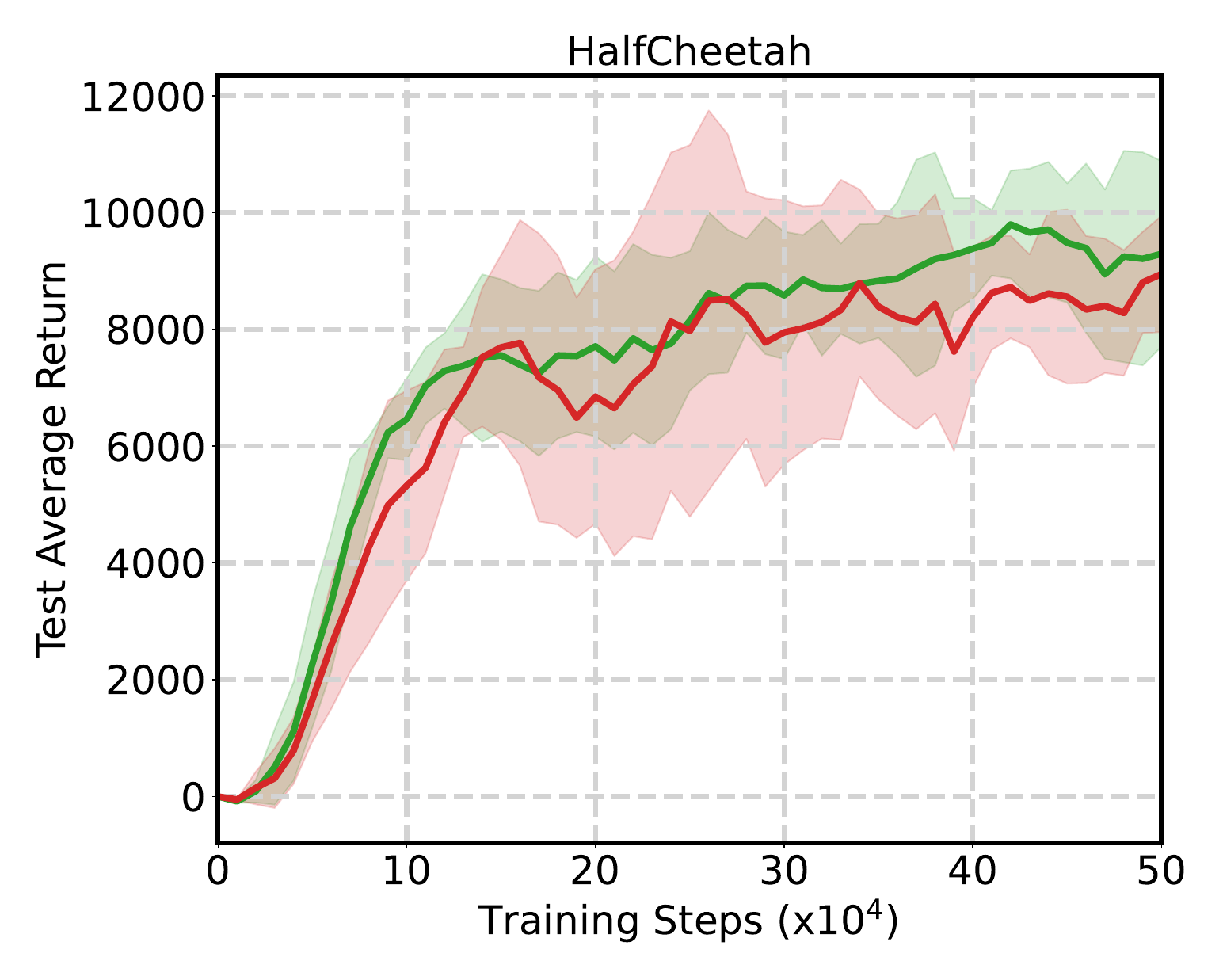}
        \caption{HalfCheetah}
        \label{fig:cross_fitting_cheetah}
    \end{subfigure}
    
    \begin{subfigure}[t]{0.35\textwidth}
        \centering
        \includegraphics[width=\textwidth]{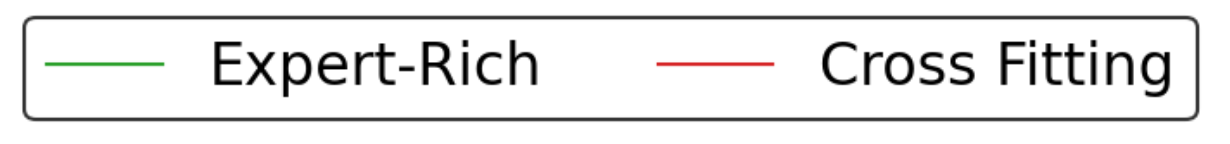}
        \label{fig:cross_fitting bar}
    \end{subfigure}
    \caption{\pch{Experimental results of AdaptDICE with cross-fitting on $\beta(t)$ and $w_{\text{tar}}$.}}
    \label{fig: Cross fitting EXP}
\end{figure}

\pch{
\textbf{Calculation of $\hat{\beta}(t)$ through cross-fitting.}
Recall from~\Cref{eq: beta hat} and~\Cref{eq: delat w tar} that in practice $\hat{\beta}(t)$ needs to be estimated from state-action pairs and involves another stochastic estimate ${w}_{\mathrm{tar}}$.
To mitigate the potential circular dependencies, one enhancement is to decouple the selection of $\hat{\beta}(t)$ from the estimation of $w_{\text{tar}}^{(t)}$ through cross-fitting.
In~\Cref{fig: Cross fitting EXP}, we conducted an additional experiment using cross-fitting:
We split the target-domain dataset $\mathcal{D}_{\text{tar}}$ into two disjoint halves ($\mathcal{D}_1$ and $\mathcal{D}_2$). 
First, we estimated $w_{\text{src}}$ and $w_{\text{tar}}$ on $\mathcal{D}_1$, while computing the proxy errors $\Delta \hat{w}_{\text{src}}^{(t)}$ and $\Delta \hat{w}_{\text{tar}}^{(t)}$ (for $\beta(t)$) on the held-out $\mathcal{D}_2$. For the policy update via $w_{\text{cross}}$ (\Cref{eq: combined weight}), we combined $\beta(t)$ from $\mathcal{D}_2$ with $w_{\text{src}}, w_{\text{tar}}$ from $\mathcal{D}_1$. 
Similarly, we also performed the reverse: estimating $w_{\text{src}}$ and $w_{\text{tar}}$ on $\mathcal{D}_2$ and $\beta(t)$ on $\mathcal{D}_1$.
Then, we combine both losses to compute the final $L_{\text{BC}}$ (\Cref{eq:bc_loss}). This ensures $\beta(t)$ is validated on data that is not used in learning the density ratios.}

\pch{We evaluate this variant of AdaptDICE on Hopper, Ant, and HalfCheetah under the Expert-Rich setting (which has more target-domain trajectories and hence is more appropriate for this data splitting than the Default setting).
We observe that there appears to be improvement in the early training steps in some of the tasks (e.g., Hopper) and rather mild differences in the final performance between those with and without cross-fitting.}

\pch{\subsection{Normalizing Flows for the Mapping Functions}
\label{sec: flow}
Following~\citep{brahmanage2023flowpg} the action-constrained RL literature~\citep{lin2021escaping,hung2025efficient}, we employ normalizing flows to ensure that the outputs of $G$ and $H$ lie within the source-domain feasible region, without relying on projection that involves an online optimization procedure (e.g., quadratic program). Specifically:
\begin{itemize}
    \item \textbf{Neural network architecture of $G$ and $H$}: In constructing the mapping functions $G$ and $H$, we use several fully-connected layers to first map the target-domain states/actions to some latent dummy region, followed by the normalizing flow model that transforms these latent vectors into the source-domain feasible region.
    \item \textbf{Latent dummy region}: We define a simple latent dummy region as a $N$-dimensional hypercube (e.g., $[0,1]^N$), where $N$ denotes the dimensionality. This latent region serves as the domain of the input distribution of the normalizing flow. 
    \item \textbf{Flow model pre-training and usage}: We employ a conditional RealNVP flow consisting of six affine coupling layers with 256-unit MLPs. \textbf{The flow is pre-trained completely offline} by maximizing the log-likelihood of feasible samples, which are collected using Hamiltonian Monte Carlo (HMC). Pre-training is performed using Adam optimizer for 5,000-20,000 epochs, depending on the environment. During target-domain learning, \textbf{the parameters of the flow model are kept fixed}, and only the weights of the prepended fully-connected layers are updated.
\end{itemize}}


\end{document}